\documentclass{article} 
\usepackage{times}
\usepackage{float}
\usepackage{amsthm,amsmath,amsfonts,amssymb}
\usepackage{mathtools}
\usepackage{fullpage}
\usepackage{enumitem}
\usepackage{wrapfig}


\usepackage{hyperref}
\usepackage{url}
\usepackage{subcaption}
\usepackage{natbib}


\newcommand{\cR}{\mathcal{R}}

\newcommand{\tmle}{\hat \theta_{\mbox{MLE}}}
\newcommand{\tsm}{\hat \theta_{\mbox{SM}}}
\newcommand{\cp}{C_{P}}

\newcommand{\bE}{\mathbb{E}}

\newcommand{\I}{\mathcal{I}}


\newcommand{\Cov}{\mathrm{Cov}}
\newcommand{\Var}{\mathrm{Var}}
\newcommand{\E}{\bE}      

\newcommand{\KL}{\mathop{\bf KL\/}}
\newcommand{\TV}{{\bf TV}}


\DeclareMathOperator{\Tr}{Tr}

\DeclareMathOperator{\Proj}{Proj}

\usepackage{xcolor}

\newcommand{\ignore}[1]{}

\newcommand{\new}[1]{#1}

\newtheorem{theorem}{Theorem}
\newtheorem{prop}{Proposition}
\newtheorem{corollary}{Corollary}
\newtheorem{lemma}{Lemma}

\newtheorem{defn}{Definition}
\newtheorem{remark}{Remark}
\newtheorem{example}{Example}
\title{Statistical Efficiency of Score Matching:\\ The View from Isoperimetry}


\author{Frederic Koehler\footnote{\texttt{fkoehler@stanford.edu}, Stanford University. Supported in part by NSF award CCF-1704417, NSF award IIS-1908774,
and N. Anari’s Sloan Research Fellowship.} \and Alexander Heckett\footnote{\texttt{aheckett@andrew.cmu.edu}, Carnegie Mellon University.} \and Andrej Risteski\footnote{\texttt{aristesk@andrew.cmu.edu}, Carnegie Mellon University. Supported in part by NSF award IIS-2211907 and an Amazon Research Award on ``Causal + Deep Out-of-Distribution Learning''.}}

%


\begin{document}

\maketitle

\begin{abstract}
Deep generative models parametrized up to a normalizing constant (e.g. energy-based models) are difficult to train by maximizing the likelihood of the data because the likelihood and/or gradients thereof cannot be explicitly or efficiently written down. Score matching is a training method, whereby instead of fitting the likelihood $\log p(x)$ for the training data, we instead fit the score function $\nabla_x \log p(x)$ --- obviating the need to evaluate the partition function. Though this estimator is known to be consistent, its unclear whether (and when) its statistical efficiency is comparable to that of maximum likelihood --- which is known to be (asymptotically) optimal. We initiate this line of inquiry in this paper, and show a tight connection between statistical efficiency of score matching and the isoperimetric properties of the distribution being estimated --- i.e. the Poincar\'e, log-Sobolev and isoperimetric constant --- quantities which govern the mixing time of Markov processes like Langevin dynamics. Roughly, we show that
the score matching estimator is statistically comparable to the maximum likelihood when the  distribution has a small isoperimetric constant. Conversely, if the distribution has a large isoperimetric constant --- even for simple families of distributions like exponential families with rich enough sufficient statistics --- score matching will be substantially less efficient than maximum likelihood. We suitably formalize these results both in the finite sample regime, and in the asymptotic regime. Finally, we identify a direct parallel in the discrete setting, where we connect the statistical properties of pseudolikelihood estimation with approximate tensorization of entropy and the Glauber dynamics.
\end{abstract}

\section{Introduction} 

Energy-based models (EBMs) are deep generative models parametrized up to a constant of parametrization, namely $p(x) \propto \exp(f(x))$. The primary training challenge is the fact that evaluating the likelihood (and gradients thereof) requires evaluating the partition function of the model, which is generally computationally intractable --- even when using relatively sophisticated MCMC techniques. \new{Recent works, including the seminal paper of \cite{song2019generative},} circumvent this difficulty by instead fitting the \emph{score function} of the model, that is $\nabla_x \log p(x)$. Though not obvious how to evaluate this loss from training samples only,  \cite{hyvarinen2005estimation} showed this can be done via integration by parts, and the estimator is consistent (that is, converges to the correct value in the limit of infinite samples).

The maximum likelihood estimator is the de-facto choice for model-fitting for its well-known property of being statistically optimal in the 
limit where the number of samples goes to infinity \citep{van2000asymptotic}. It is unclear how much worse score matching can be 
--- thus, it's unclear how much statistical efficiency we sacrifice for the algorithmic convenience of avoiding partition functions. In the seminal paper \citep{song2019generative}, it was conjectured that multimodality, as well as a low-dimensional manifold structure may cause difficulties for score matching --- which was the reason the authors proposed annealing by convolving the input samples with a sequence of Gaussians with different variance. Though the intuition for this is natural: having poor estimates for the score in ``low probability'' regions of the distribution can ``propagate'' into bad estimates for the likelihood once the score vector field is ``integrated'' --- making this formal seems challenging.  

We show that the right mathematical tools to formalize, and substantially generalize such intuitions are functional analytic tools that characterize isoperimetric properties of the distribution in question.  
Namely, we show three quantities, the \emph{Poincar\'e, log-Sobolev and isoperimetric} constants (which are all in turn very closely related, see Section~\ref{sec:background}),  tightly characterize how much worse the efficiency of score matching is compared to maximum likelihood. These quantities can be (equivalently) viewed as: (1)  characterizing the mixing time of \emph{Langevin dynamics} --- a stochastic differential equation used to sample from a distribution $p(x) \propto \exp(f(x))$, given access to a gradient oracle for $f$; (2) characterizing ``sparse cuts'' in the distribution: that is sets $S$, for which the surface area of the set $S$ can be much smaller than the volume of $S$. Notably, multimodal distributions, with well-separated, deep modes have very big log-Sobolev/Poincar\'e/isoperimetric constants \citep{gayrard2004metastability, gayrard2005metastability}, as do distributions supported over manifold with negative curvature \citep{hsu2002stochastic} (like hyperbolic manifolds). Since it is commonly thought that complex, high dimensional distribution deep generative models are trained to learn do in fact exhibit multimodal and low-dimensional manifold structure, our paper can be interpreted as showing that in many of these settings, score matching may be substantially less statistically efficient than maximum likelihood. Thus, our results can be thought of as a formal justification of the conjectured challenges for score matching in \cite{song2019generative}, as well as a vast generalization of the set of ``problem cases'' for score matching. This also shows that surprisingly, the same obstructions for efficient inference (i.e. drawing samples from a trained model, which is usual done using Langevin dynamics for EBMs) are also an obstacle for efficient learning using score matching.       

We roughly show the following results: 
\begin{enumerate}[leftmargin=*]    \item For \emph{finite number of samples} $n$, we show that if we are trying to estimate a distribution from a class with Rademacher complexity bounded by $\cR_n$, as well as a log-Sobolev constant bounded by $C_{LS}$, achieving score matching loss at most $\epsilon$ implies that we have learned a distribution that's no more than $\epsilon C_{LS} \cR_n$ away from the data distribution in KL divergence. The main tool for this is showing that the score matching objective is at most a multiplicative factor of $C_{LS}$ away from the KL divergence to the data distribution. 
    \item In the \emph{asymptotic limit} (i.e. as the number of samples $n \to \infty$), we focus on the special case of estimating the parameters $\theta$ of a probability distribution of an exponential family $\{p_{\theta}(x) \propto \exp(\langle \theta, F(x) \rangle)$ for some sufficient statistics $F$ using score matching. If the distribution $p_{\theta}$ we are estimating has Poincar\'e constant bounded by $\cp$ have asymptotic efficiency that differs by at most a factor of $\cp$. Conversely, we show that if the family of sufficient statistics is sufficiently rich, and the distribution $p_{\theta}$ we are estimating has isoperimetric constant lower bounded by $C_{IS}$, then the score matching loss is less efficient than the MLE estimator by at least a factor of $C_{IS}$.
    \item Based on our new conceptual framework, we identify a precise analogy between score matching in the continuous setting and pseudolikelihood methods in the discrete (and continuous) setting. This connection is made by replacing the Langevin dynamics with its natural analogue --- the Glauber dynamics (Gibbs sampler).
    We show that the \emph{approximation tensorization of entropy inequality} \citep{marton2013inequality,caputo2015approximate}, which guarantees rapid mixing of the Glauber dynamics, allows us to obtain finite-sample bounds for learning distributions in KL via pseudolikelihood in an identical  way to the log-Sobolev inequality for score matching. 
    A variant of this connection is also made for the related \emph{ratio matching} estimator of \cite{hyvarinen2007some}.
    \item In Section~\ref{sec:simulations}, we perform several simulations which illustrate the close connection between isoperimetry and the performance of score matching. We give examples both when fitting the parameters of an exponential family and when the score function is fit using a neural network. 
\end{enumerate}

\section{Preliminaries} 

\begin{defn}[Score matching] 
Given a ground truth distribution $p$ with sufficient decay at infinity and a smooth distribution $q$, the score matching loss (at the population level) is defined to be
\begin{equation}
    J_p(q)  := \frac{1}{2} \E_{X \sim p}[\|\nabla \log p(X) - \nabla \log q(X)\|^2] + K_p = \E_{X \sim p}\left[\Tr \nabla^2 \log q + \frac{1}{2} \|\nabla \log q\|^2\right]
\label{eqn:hyvarinen}
\end{equation} 
where $K_p$ is a constant independent of $q$. The last equality 
is due to \cite{hyvarinen2005estimation}.
Given  samples from $p$, 
the training loss $\hat J_p(q)$ is defined by replacing the rightmost expectation with the average over data.
\end{defn}

\label{sec:background}
\paragraph{Functional and Isoperimetric Inequalities.} Let $q(x)$ be a smooth probability density over $\mathbb{R}^d$. A key role in this work is played by 
the log-Sobolev, Poincar\'e, and isoperimetric constants of $q$ --- closely related geometric quantities, connected to the mixing of the Langevin dynamics, which have been deeply studied in probability theory and geometric and functional analysis (see e.g. \citep{gross1975logarithmic,ledoux2000geometry,bakry2014analysis}). 

\begin{defn}
The \emph{log-Sobolev constant} $C_{LS}(q) \ge 0$ is the smallest constant so that for any probability density $p(x)$
\begin{equation}\label{eqn:log-sobolev}
\KL(p,q) \le C_{LS}(q) \I(p \mid q) 
\end{equation}
where $\KL(p,q) = \E_{X \sim p}[\log(p(X)/q(X))]$ is the \emph{Kullback-Leibler divergence} or relative entropy and
the \emph{relative Fisher information} $\I(p \mid q)$ is defined
\footnote{There are several alternatives formulas for $\I(p \mid q)$, see Remark 3.26 of \cite{van2014probability}.} as
$\I(p \mid q) := \E_q \left\langle \nabla \log \frac{p}{q}, \nabla \frac{p}{q} \right\rangle$.
\end{defn}

The log-Sobolev inequality is equivalent to exponential ergodicity of the \emph{Langevin dynamics} for $q$, a canonical Markov process which preserves and is used for sampling $q$,  described by the Stochastic Differential Equation 
$dX_t = -\nabla \log q(X_t)\,dt + \sqrt{2}\,dB_t$. Precisely, if $p_t$ is the distribution of the continuous-time Langevin Dynamics\footnote{See e.g.\ \cite{vempala2019rapid} for more background and the connection to the discrete time dynamics.} for $q$ started from $X_0 \sim p$, then $\mathcal{I}(p \mid q) = - \frac{d}{dt} \KL(p_t,q) \mid_{t = 0}$ and so by integrating 
\begin{equation}
    \KL(p_t, q) \le e^{-t/C_{LS}} \KL(p,q).
    \label{kl:exponential}
\end{equation}
This holding for any $p$ and $t$ is an equivalent characterization of the log-Sobolev constant (Theorem 3.20 of \cite{van2014probability}). For a class of distributions $\mathcal P$, we can also define the \emph{restricted log-Sobolev constant} $C_{LS}(q,\mathcal P)$ to be the smallest constant such that \eqref{eqn:log-sobolev} holds under the additional restriction that $p \in \mathcal P$ --- see e.g.\ \cite{anari2021entropic2}. For $\mathcal P$ an infinitesimal neighborhood of $p$, the restricted log-Sobolev constant of $q$ becomes half of the \emph{Poincar\'e constant} or inverse spectral gap $C_P(q)$: 

\begin{defn}
The \emph{Poincar\'e constant} $C_P(q) \ge 0$ is the smallest constant so that \new{for any smooth function $f$},
\begin{equation} \Var_q(f) \le C_P(q) \E_q \| \nabla f\|^2. 
\end{equation}
It is related to the log-Sobolev constant by $C_{P} \le 2 C_{LS}$ (Lemma 3.28 of \cite{van2014probability}). 
\end{defn}

Similarly, the Poincar\'e inequality implies exponential ergodicity for the $\chi^2$-divergence:
$$\chi^2(p_t, q) \le e^{-2t/C_{P}} \chi^2(p,q).$$
This holding for every $p$ and $t$ is an equivalent characterization of the Poincar\'e constant (Theorem 2.18 of \cite{van2014probability}).
We can equivalently view the Langevin dynamics in a functional-analytic way through its definition as a Markov semigroup, which is equivalent to the SDE definition via the Fokker-Planck equation \citep{van2014probability,bakry2014analysis}. From this perspective, we can write $p_t = q H_t \frac{p}{q}$ where $H_t$
is the Langevin semigroup for $q$, so $H_t = e^{tL}$ with generator 
\[ L f = \langle \nabla \log q, \nabla f \rangle + \Delta f. \]
In this case, the Poincar\'e constant has a direct interpretation in terms of the inverse spectral gap of $L$, i.e. the inverse of the gap between its two largest eigenvalues.


Both Poincar\'e and log-Sobolev inequalities measure the isoperimetric properties of $q$ from the perspective of functions; they are closely related to the \emph{isoperimetric constant}: 
\begin{defn}The \emph{isoperimetric constant} $C_{IS}(q)$ is the smallest constant, s.t. for every set $S$, 
\begin{equation} \min \left \{ \int_S q(x) dx, \int_{S^C} q(x) dx \right \} \le C_{IS}(q) \liminf_{\epsilon \to 0} \frac{\int_{S_{\epsilon}} q(x) dx - \int_S q(x) dx}{\epsilon}. 
\end{equation}
where $S_{\epsilon} = \{x: d(x, S) \leq \epsilon\}$ and $d(x,S)$ denotes the (Euclidean) distance of $x$ from the set $S$. 
The isoperimetric constant is related to the Poincar\'e constant by $C_P \le 4 C_{IS}^2$ (Proposition 8.5.2 of \cite{bakry2014analysis}). 
Assuming $S$ is chosen so $\int_S q(x) dx < 1/2$, the left hand side can be interpreted as the volume and the right hand side as the surface area of $S$ with respect to $q$.
\end{defn}
A strengthened isoperimetric inequality (Bobkov inequality) upper bounds the log-Sobolev constant, see \cite{ledoux2000geometry,bobkov1997isoperimetric}.

\paragraph{Mollifiers} We recall the definition 
of one of the standard mollifiers/bump functions, as used in e.g. \cite{hormander2015analysis}.
Mollifiers are smooth functions useful for approximating non-smooth functions: convolving a function 
with a mollifier makes it ``smoother'', in the sense of the existence and size of the derivatives. Precisely, define the (infinitely differentiable) function $\psi : \mathbb{R}^d \to \mathbb{R}$ as
$$\psi(y) = \begin{cases} \frac{1}{I_d} e^{-1/(1 - |y|^2)} & \text{for $|y| < 1$} \\ 0 & \text{for $|y| \ge 1$} \end{cases}$$
where $I_d := \int e^{-1/(1 - |y|^2)} dy$.

We will use the basic estimate 
$8^{-d} B_d < I_d <  B_d$
where $B_d$ is the volume of the unit ball in $\mathbb{R}^d$, which follows from the fact that $e^{-1/(1 - |y|^2)} \ge 1/4$ for $\|y\| \le 1/2$ and $e^{-1/(1 - |y|^2)} \le 1$ everywhere.
$\psi$ is infinitely differentiable 
and its gradient is
\[ \nabla_y \psi(y) = -(2/I_d) e^{-1/(1 - \|y\|^2)} \frac{y}{(1 - \|y\|^2)^2} = \frac{-2y}{(1 - \|y\|^2)^2} \psi(y)  \]
It is straightforward to check that $\sup_y \|\nabla_y \psi(y)\| < 1/I_d$.
For $\gamma > 0$, we'll also define a ``sharpening'' of $\psi$, namely $\psi_{\gamma}(y) = \gamma^{-d} \psi(y/\gamma)$ so that $\int \psi_{\gamma} = 1$ and (by chain rule)
\[ \nabla_y \psi_{\gamma}(y) = \gamma^{-d - 1} (\nabla \psi)(y/\gamma) = \frac{-2y/\gamma^2}{(1 - \|y/\gamma\|^2)} \psi_{\gamma}(y/\gamma) \]
so in particular $\|\nabla_y \psi_{\gamma}\|_2 \le \gamma^{-d - 1}/I_d$. 

\paragraph{Glauber dynamics.} The Glauber dynamics will become important in Section~\ref{sec:discrete} as the natural analogue of the Langevin dynamics.
The Glauber dynamics or Gibbs sampler for a distribution $p$ is the standard sampler for discrete spin systems --- it repeatedly selects a random coordinate and then resamples the spin $X_i$ there  according to the distribution $p$ conditional on all of the other ones (i.e. conditional on $X_{\sim i}$). See e.g. \cite{levin2017markov}. This is the standard sampler for discrete systems, but it also applies and has been extensively studied for continuous ones (see e.g. \cite{marton2013inequality}).

\paragraph{Reach and Condition Number of a Manifold.} For a smooth submanifold $\mathcal M$ of Euclidean space, the \emph{reach} $\tau_{\mathcal M}$ is the smallest radius $r$ so that every point with distance at most $r$ to the manifold $\mathcal M$ has a unique nearest point on $\mathcal M$ \citep{federer1959curvature}; the reach is guaranteed to be positive for compact manifolds. The reach has a few equivalent characterizations (see e.g. \cite{niyogi2008finding}); a common terminology is that the \emph{condition number} of a manifold is $1/\tau_{\mathcal M}$. 

\paragraph{Notation.} For a random vector $X$, $\Sigma_X := \E[XX^T] - \E[X]\E[X]^T$ denotes its covariance matrix. 

\section{Learning Distributions 
from Scores: Nonasymptotic Theory}

Though consistency of the score matching estimator was proven in \cite{hyvarinen2005estimation}, it is unclear what one can conclude about the proximity of the learned distribution from a finite number of samples. Precisely, we would like a guarantee that shows that \emph{if the training loss (i.e. empirical estimate of \eqref{eqn:hyvarinen}) is small, the learned distribution is close to the ground truth distribution} (e.g. in the KL divergence sense). However, this is not always true! We will see an illustrative example where this is not true in Section~\ref{sec:simulations} and also establish a general negative result in Section~\ref{sec:asymptotics}.

In this section, we prove (Theorem~\ref{thm:finitesample}) that minimizing the training loss \emph{does learn the true distribution}, assuming that the class of distributions we are learning have bounded complexity and small log-Sobolev constant.
First, we formalize the connection to the log-Sobolev constant: 
\begin{prop}\label{prop:lsi-score-matching}
The log-Sobolev inequality for $q$ is equivalent to the following inequality over all smooth probability densities $p$:
\begin{equation}\label{eqn:score-matching-to-kl}
\KL(p,q) \le 2 C_{LS}(q) (J_p(q) - J_p(p)). 
\end{equation}
More generally, for a class of distribution $p \in \mathcal P$ the restricted log-Sobolev constant is the smallest constant such that $\KL(p,q) \le C_{LS}(q,\mathcal P) (J_p(q) - J_p(p))$ for all distributions $p$.
\end{prop}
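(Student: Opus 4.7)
The plan is to observe that the quantity $J_p(q) - J_p(p)$ is literally (up to a factor of $2$) the relative Fisher information $\I(p \mid q)$, so the proposition reduces to the definition of the log-Sobolev constant. I will carry this out in two short steps plus a ``take supremum over $p$'' argument for the equivalence.

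First, I compute $J_p(q) - J_p(p)$ using the left-hand form of the Hyv\"arinen identity \eqref{eqn:hyvarinen}. Both terms share the same constant $K_p$ (which depends only on $p$), so it cancels; setting $q = p$ in the squared-norm expression makes that term vanish, leaving
\[
J_p(q) - J_p(p) \;=\; \tfrac{1}{2}\,\E_{X \sim p}\|\nabla \log p(X) - \nabla \log q(X)\|^2.
\]
Second, I rewrite the right-hand side as the relative Fisher information. Setting $u := p/q$, the chain rule gives $\nabla \log u = \nabla \log p - \nabla \log q$ and $\nabla u = u\,\nabla \log u$, hence $\langle \nabla \log u, \nabla u \rangle = u\,\|\nabla \log u\|^2$. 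Taking expectations under $q$ and using $\E_q[u\,f(X)] = \E_p[f(X)]$ yields
\[
\I(p \mid q) \;=\; \E_q\langle \nabla \log u, \nabla u\rangle \;=\; \E_p\|\nabla \log p - \nabla \log q\|^2 \;=\; 2\bigl(J_p(q) - J_p(p)\bigr).
\]

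Third, I plug this identity into the defining inequality \eqref{eqn:log-sobolev}. The log-Sobolev inequality $\KL(p,q) \le C_{LS}(q)\,\I(p\mid q)$ is, pointwise in $p$, exactly the same inequality as $\KL(p,q) \le C_{LS}(q)\,(J_p(q) - J_p(p))$ (absorbing the factor of $2$ into the convention used in the proposition's statement). Since both inequalities are required to hold for all smooth probability densities $p$ and the transformation between them is tight on each individual $p$, the \emph{smallest} constants for which they hold coincide. For the restricted version, the argument is identical with $p$ ranging only over $\mathcal P$: the same pointwise identity ensures that the smallest constant for $\KL(p,q) \le C\,(J_p(q) - J_p(p))$ to hold for all $p \in \mathcal P$ is precisely the restricted log-Sobolev constant $C_{LS}(q,\mathcal P)$.

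There is no serious obstacle here; the content is a one-line reformulation once one recognizes the score-matching loss surplus as the relative Fisher information. The only ``care'' needed is to confirm the Hyv\"arinen identity's additive constant $K_p$ is indeed independent of $q$ (it is, by construction) so that it cancels cleanly, and to verify the chain-rule manipulation going from $\langle \nabla \log u, \nabla u\rangle$ under $q$ to $\|\nabla \log u\|^2$ under $p$. Conceptually, the interesting point is not the proof but the interpretation: the log-Sobolev inequality is exactly what is needed to upgrade a score-matching guarantee into a KL guarantee, which sets up the finite-sample theorem to come.
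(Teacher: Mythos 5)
Your proof is correct and follows essentially the same route as the paper's: both recognize that $J_p(q) - J_p(p) = \tfrac12\,\E_p\|\nabla\log p - \nabla\log q\|^2$, identify this (up to a constant) with the relative Fisher information $\I(p\mid q)$, and plug into the defining inequality for the log-Sobolev constant; the restricted version is the same argument with $p$ ranging over $\mathcal P$. One small point you caught that the paper elides: your calculation correctly gives $\I(p\mid q) = 2\bigl(J_p(q) - J_p(p)\bigr)$, so under the paper's stated normalizations $J_p(q) := \tfrac12\E_p\|\nabla\log p - \nabla\log q\|^2 + K_p$ and $\KL(p,q) \le C_{LS}(q)\,\I(p\mid q)$, the honest conclusion is $\KL(p,q) \le 2\,C_{LS}(q)\bigl(J_p(q) - J_p(p)\bigr)$; your remark about ``absorbing the factor of $2$'' is exactly the right caveat.
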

\begin{proof}
This follows from the following equivalent form for the relative Fisher information 
(e.g.\ \cite{shao2019bayesian, vempala2019rapid})
\begin{align} 
\mathcal{I}(p \mid q) 
&= \E_q \langle \nabla \frac{p}{q}, \nabla \log \frac{p}{q} \rangle \nonumber \\
&= \E_p \langle \frac{q}{p} \nabla \frac{p}{q}, \nabla \log\frac{p}{q} \rangle
= \E_p \langle \nabla \log\frac{p}{q}, \nabla \log\frac{p}{q} \rangle
=\E_p \|\nabla \log p - \nabla \log q\|^2. \label{eq:klf}
\end{align}
Using this and \eqref{eqn:hyvarinen} the log-Sobolev inequality can be rewritten as
$\KL(p,q) \le C_{LS} (J_p(q) - J_p(q))$ 
which proves the first claim, and the same argument shows the second claim.
\end{proof}
\begin{remark}[Interpretation of Score Matching]
The left hand side of \eqref{eqn:score-matching-to-kl} is $\KL(p,q) = \E_p[\log p] - \E_p[\log q]$. The first term is independent of $q$ and the second term is the likelihood, the objective for Maximum Likelihood Estimation. So \eqref{eqn:score-matching-to-kl} shows that the score matching objective is a \emph{relaxation} (within a multiplicative factor of $C_{LS}(q)$) of maximum-likelihood via the log-Sobolev inequality. We discuss connections to other proposed interpretations in Appendix~\ref{apdx:lyu}.
\end{remark}
\begin{remark}
Interestingly, the log-Sobolev constant which appears in the bound is that of $q$
and not $p$ the ground truth distribution. This is useful because $q$ is known to the learner whereas  $p$ is only indirectly observed. 
If $q$ is actually close to $p$, the log-Sobolev constants are comparable due to the Holley-Stroock perturbation principle (Proposition 5.1.6 of \cite{bakry2014analysis}). 
\end{remark}

\new{The connection between the score matching loss and the relative Fisher information used in \eqref{eq:klf} is not new to this work---see the Related Work section for more discussion and references. The useful statistical implications which we discuss next are new to the best of our knowledge.}
Combining Proposition~\ref{prop:lsi-score-matching}, bounds on log-Sobolev constants from the literature, and fundamental tools from generalization theory allows us to derive finite-sample guarantees for learning distributions in KL divergence via score matching. \footnote{We use the simplest version of Rademacher complexity bounds to illustrate our techniques. Standard literature, e.g. \cite{shalev2014understanding,bartlett2005local} contains more sophisticated versions, and our techniques readily generalize.}
\begin{theorem}\label{thm:finitesample}
Suppose that $\mathcal P$ is a class of probability distributions containing $p$ and define 
\[ C_{LS}(\mathcal P, \mathcal P) := \sup_{q \in \mathcal P} C_{LS}(q,\mathcal P) \le \sup_{q \in \mathcal P} C_{LS}(q) \]
to be the worst-case (restricted) log-Sobolev constant in the class of distributions. (For example, if every distribution in $\mathcal P$ is $\alpha$-strongly log concave then $C_{LS} \le 1/2\alpha$ by Bakry-Emery theory \citep{bakry2014analysis}.)
Let
\[ \cR_n :=  \E_{X_1,\ldots,X_n,\epsilon_1,\ldots,\epsilon_n} \sup_{q \in \mathcal P} \frac{1}{n} \sum_{i = 1}^n \epsilon_i \left[\Tr \nabla^2 \log q(X_i) + \frac{1}{2} \|\nabla \log q(X_i)\|^2\right] \]
be the expected Rademacher complexity of the class given $n$ samples $X_1,\ldots,X_n \sim p$ i.i.d.\ and independent $\epsilon_1,\ldots,\epsilon_n \sim Uni\{\pm 1\}$ i.i.d.\ Rademacher random variables. Let $\hat p$ be the score matching estimator from $n$ samples, i.e. 
$\hat p = \arg\min_{q \in \mathcal P} \hat J_p(q)$.
Then
\[ \E \KL(p, \hat p) \le 4\, C_{LS}(\mathcal P, \mathcal P) \cR_n. \]
In particular, if $C_{LS}(\mathcal P, \mathcal P) < \infty$ then $\lim_{n \to \infty} \E \KL(p, \hat p) = 0$ as long as $\lim_{n \to \infty} \cR_n = 0$.
\end{theorem}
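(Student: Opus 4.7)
The plan is to combine Proposition~\ref{prop:lsi-score-matching} with a standard empirical process argument (excess risk decomposition plus Rademacher symmetrization). The role of the log-Sobolev inequality is to translate a bound on the score matching excess risk into a KL divergence bound; Rademacher complexity then controls the excess risk itself.

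First, I would observe that since $p \in \mathcal P$ and $\hat p$ takes values in $\mathcal P$, Proposition~\ref{prop:lsi-score-matching} applied to $q = \hat p$ gives, pointwise on the sample,
\[ \KL(p,\hat p) \le C_{LS}(\hat p, \mathcal P)\bigl(J_p(\hat p) - J_p(p)\bigr) \le C_{LS}(\mathcal P, \mathcal P)\bigl(J_p(\hat p) - J_p(p)\bigr). \]
Taking expectations reduces the theorem to showing $\E[J_p(\hat p) - J_p(p)] \le 2\cR_n$.

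Second, I would carry out the usual excess-risk decomposition. Since $\hat p$ is the empirical minimizer in $\mathcal P$ and $p \in \mathcal P$, we have $\hat J_p(\hat p) \le \hat J_p(p)$, hence
\[ J_p(\hat p) - J_p(p) \le \bigl[J_p(\hat p) - \hat J_p(\hat p)\bigr] + \bigl[\hat J_p(p) - J_p(p)\bigr]. \]
The second bracket has mean zero because $\hat J_p(p)$ is an unbiased estimator of $J_p(p)$ (it is the empirical average of the Hyv\"arinen integrand at the fixed distribution $p$). The first bracket is bounded by the uniform deviation $\sup_{q \in \mathcal P}\bigl[J_p(q) - \hat J_p(q)\bigr]$.

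Third, a standard symmetrization argument (introduce an independent ghost sample, then insert Rademacher signs $\epsilon_i$) yields
\[ \E \sup_{q \in \mathcal P} \bigl[J_p(q) - \hat J_p(q)\bigr] \le 2\cR_n, \]
where $\cR_n$ is exactly the Rademacher complexity defined in the statement, since the per-sample Hyv\"arinen integrand is $\Tr \nabla^2 \log q(X_i) + \tfrac12\|\nabla \log q(X_i)\|^2$. Combining the three steps delivers $\E \KL(p, \hat p) \le 2 C_{LS}(\mathcal P, \mathcal P)\cR_n$, and the convergence statement follows immediately: if $C_{LS}(\mathcal P,\mathcal P) < \infty$ and $\cR_n \to 0$, then $\E \KL(p,\hat p) \to 0$.

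There is no serious obstacle: the theorem is essentially a clean packaging of Proposition~\ref{prop:lsi-score-matching} with textbook generalization theory. The one point that deserves care is that we apply the restricted log-Sobolev bound to the random, data-dependent estimator $\hat p$; this is legitimate because the bound holds for every fixed $q \in \mathcal P$ with constant $C_{LS}(\mathcal P,\mathcal P)$, so we may take expectation afterwards without any measurability or uniformity issue.
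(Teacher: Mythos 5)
Your proposal is correct and takes essentially the same route as the paper's proof. The paper simply cites Theorem 26.3 of Shalev-Shwartz for the bound $\E J_p(\hat p) - J_p(p) \le 2\cR_n$ and then applies Proposition~\ref{prop:lsi-score-matching}; your proposal unpacks that citation into the excess-risk decomposition and symmetrization step but is otherwise the same argument.
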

\begin{proof}
By the standard symmetrization argument (Theorem 26.3 of \cite{shalev2014understanding}) 
we have
$\E J_p(\hat p) - J_p(p) \le 2 \cR_n$, so by Proposition~\ref{prop:lsi-score-matching} we have
$\E \KL(p,\hat p) \le \E C_{LS}(\mathcal P) (J_p(\hat p) - J_p(p)) \le 2 C_{LS}(\mathcal P) \cR_n$.
\end{proof}
\begin{example}
Suppose we are fitting an isotropic Gaussian in $d$ dimensions with unknown mean $\mu^*$ satisfying $\|\mu^*\| \le R$. The class of distributions $\mathcal P$ is $q_{\mu}$ with $\|\mu\| \le R$ of the form $q_{\mu}(x) \propto \exp\left(-\|x - \mu\|^2/2\right)$ 
so the expected Rademacher complexity can be upper bounded as so: 
\begin{align*} \cR_n 
&= \E \sup_{\mu} \frac{1}{n} \sum_{i = 1}^n \epsilon_i\left[-d/2 +  \frac{1}{2}\|X_i - \mu\|^2\right] \\
&= \E \sup_{\mu} \left\langle \frac{1}{n} \sum_{i = 1}^n \epsilon_i X_i, \mu \right\rangle 
= R\, \E \left\|\frac{1}{n} \sum_{i = 1}^n \epsilon_i X_i\right\|
\le R \sqrt{\E\left \|\frac{1}{n} \sum_{i = 1}^n \epsilon_i X_i\right\|^2} = R \sqrt{\frac{R^2 + d}{n}}
\end{align*}
where the inequality is Jensen's inequality and in the last step we expanded the square and used that $\E \epsilon_i \epsilon_j = 1(i = j)$ and $\E \|X_i\|^2 \le R^2 + d$. Recall that the standard Gaussian distribution is $1$-strongly log concave so $C_{LS} \le 1/2$. Hence we have the concrete bound $\E \KL(p, \hat p) \le R\sqrt{\frac{R^2 + d}{n}}$.
\end{example}

\section{Statistical cost of score matching: asymptotic results}\label{sec:asymptotics}

In this section, we compare the asymptotic efficiency of the score matching estimator in exponential families to the effiency of the maximum likelihood estimator.  Because we are considering asymptotics, we might expect (recall the discussion in Section~\ref{sec:background}) that the relevant functional inequality will be the local version of the log-Sobolev inequality around the true distribution $p$, which is the Poincar\'e inequality for $p$. Our results will show precisely how this occurs and characterize the situations where score matching is substantially less statistically efficient than maximum likelihood. 

\paragraph{Setup.} In this section, we will focus on distributions from exponential families.
We will consider estimating the parameters of an exponential family
using two estimators, the classical maximum likelihood estimator (MLE), and the score matching estimator; we will use that the score matching estimator  $\arg\min_{\theta'} \hat J_p(p_{\theta'})$ admits a closed-form formula in this setting.
\begin{defn}[Exponential family] For sufficient statistics $F: \mathbb{R}^d \to \mathbb{R}^m$, the exponential family of distributions associated with $F$ is $\{p_{\theta}(x) \propto  \exp\left(\langle \theta, F(x) \rangle\right) | \theta \in \Theta \subseteq \mathbb{R}^m \}.$
\end{defn}

\begin{defn}[MLE, \cite{van2000asymptotic}]
Given i.i.d.\ samples $x_1,\ldots,x_n \sim p_{\theta}$, the maximum likelihood estimator is $\tmle= \arg\max_{\theta' \in \Theta} \hat \E \left[ \log p_{\theta'}(X)\right]$, 
where $\hat \E$ denotes the expectation over the samples. 
As $n \to \infty$ and under appropriate regularity conditions, we have 
$\sqrt{n}\left(\tmle - \theta\right) \to N\left(0, \Gamma_{MLE}\right) $
, where $\Gamma_{MLE} := \Sigma_F^{-1}$ 
and $\Sigma_F$ is known as the Fisher information matrix. 
 \label{def:mle}
\end{defn}
\begin{prop}[Score matching estimator, Equation (34) of \cite{hyvarinen2007some}]
Given i.i.d.\ samples $x_1,\ldots,x_n \sim p_{\theta}$, the score matching estimator equals
 $\tsm = - \hat \E[(J F)_X (J F)_X^T]^{-1} \hat \E \Delta F $,
 where $(J F)_X : m \times d$ is the Jacobian of $F$ at the point $X$, $\Delta f = \sum_i \partial^2_i f$ is the Laplacian and it is applied coordinate wise to the vector-valued function $F$.
 \end{prop}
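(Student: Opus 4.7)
The plan is to observe that the empirical score matching objective, when restricted to the exponential family, is a convex quadratic in the parameter $\theta'$, and then solve the resulting linear first-order condition in closed form.

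First I would expand $\log p_{\theta'}(x) = \langle \theta', F(x) \rangle - \log Z(\theta')$, noting that the log-partition function is a constant with respect to $x$ and therefore disappears after taking $\nabla_x$. This gives $\nabla_x \log p_{\theta'}(x) = (JF)_x^T \theta'$ (with $(JF)_x$ the $m \times d$ Jacobian so that $(JF)_x^T \theta' \in \mathbb{R}^d$) and $\nabla_x^2 \log p_{\theta'}(x) = \sum_{j=1}^m \theta'_j \nabla_x^2 F_j(x)$. Taking the trace coordinatewise gives $\Tr \nabla_x^2 \log p_{\theta'}(x) = \langle \theta', \Delta F(x) \rangle$, while $\|\nabla_x \log p_{\theta'}(x)\|^2 = (\theta')^T (JF)_x (JF)_x^T \theta'$.

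Substituting these two expressions into the Hyv\"arinen form of the score matching loss in \eqref{eqn:hyvarinen}, evaluated with the empirical measure, yields
\[ \hat J_p(p_{\theta'}) = \langle \theta', \hat\E[\Delta F] \rangle + \tfrac{1}{2} (\theta')^T \hat\E\bigl[(JF)_X (JF)_X^T\bigr] \theta', \]
which is a convex quadratic in $\theta'$ (the Hessian $\hat\E[(JF)_X (JF)_X^T]$ is PSD as an average of outer products). Differentiating in $\theta'$ and setting the gradient to zero gives the linear system $\hat\E[\Delta F] + \hat\E[(JF)_X (JF)_X^T] \theta' = 0$, and solving for $\theta'$ produces the stated formula $\tsm = -\hat\E[(JF)_X (JF)_X^T]^{-1} \hat\E[\Delta F]$.

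The calculation itself is entirely routine; the only things to be careful about are (i) tracking the matrix dimensions and transpose conventions so that $\nabla_x \log p_{\theta'}$ is correctly identified with $(JF)_x^T \theta'$, and (ii) the mild nondegeneracy assumption that $\hat\E[(JF)_X (JF)_X^T]$ is invertible, which is what makes the quadratic strictly convex and the minimizer unique --- this is generically satisfied when the sufficient statistics $F$ are not linearly dependent on the samples, and otherwise the score matching estimator is not identified even at the population level.
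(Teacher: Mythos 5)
Your derivation is correct and is the standard argument: the paper itself does not reprove this proposition, instead citing Equation (34) of Hyv\"arinen (2007), but your calculation (plugging $\nabla_x \log p_{\theta'} = (JF)_x^T \theta'$ and $\Tr \nabla_x^2 \log p_{\theta'} = \langle \theta', \Delta F\rangle$ into the Hyv\"arinen form, recognizing the resulting quadratic, and solving the normal equations) is exactly the derivation in that reference. The one point worth keeping, which you already flagged, is the invertibility of $\hat\E[(JF)_X(JF)_X^T]$, without which the minimizer is not unique and the formula is not well-defined.
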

 \subsection{Asymptotic normality}
Next, we recall the asymptotic normality of the score matching estimator and give a formula for the limiting renormalized covariance matrix $\Gamma_{SM}$ \new{established by \cite{forbes2015linear}} (see also \new{Theorem 6 of \cite{barp2019minimum}} and Corollary 1 of \cite{song2020sliced}).
Since the MLE also satisfies asymptotic normality with an explicit covariance matrix, we can then proceed in the next sections to compare their relative efficiency (as in e.g. Section 8.2 of \cite{van2000asymptotic}) by comparing the asymptotic covariances $\Gamma_{SM}$ and $\Gamma_{MLE}$.  
\begin{prop}[Asymptotic normality, \cite{forbes2015linear}]\label{prop:asymptotic-normality}
As $n \to \infty$, and assuming sufficient smoothness and decay conditions so that score matching is consistent (see \cite{hyvarinen2005estimation}) we have the following convergence in distribution:
$ \sqrt{n}(\tsm - \theta) \to  N(0, \Gamma_{SM})$, 
where
\begin{equation}
\Gamma_{SM} := \E[(J F)_X (J F)_X^T]^{-1} \Sigma_{(J F)_X (J F)_X^T \theta  + \Delta F}  \E[(J F)_X (J F)_X^T]^{-1}. 
\end{equation}

\label{p:sme}
\end{prop}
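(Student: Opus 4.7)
The plan is to treat the score matching estimator as an M-estimator and combine the multivariate CLT with Slutsky's theorem, leveraging the closed form from Hyvärinen. Introduce the random matrix/vector $A_n := \hat\E[(JF)_X(JF)_X^T]$, $b_n := \hat\E[\Delta F]$ and their population counterparts $A := \E[(JF)_X(JF)_X^T]$, $b := \E[\Delta F]$. The closed form then gives $\tsm = -A_n^{-1} b_n$.

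The first step is to identify the ``score equation'' satisfied by the true parameter. For an exponential family, the population score matching objective is a quadratic in $\theta'$ with gradient $A\theta' + b$; since score matching is consistent (under the regularity hypotheses already assumed), this quadratic is minimized at $\theta' = \theta$, hence
\[ A\theta + b = 0. \]
This is the key algebraic identity that lets us relate the estimator's fluctuations to a mean-zero empirical average. Specifically,
\[ \sqrt{n}(\tsm - \theta) = -A_n^{-1}\sqrt{n}(b_n + A_n\theta) = -A_n^{-1}\sqrt{n}\,\hat\E\bigl[(JF)_X(JF)_X^T\theta + \Delta F\bigr], \]
where the bracketed expression is now an empirical average of i.i.d.\ random vectors whose mean is exactly zero.

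The second step is to apply the multivariate CLT to the centered average,
\[ \sqrt{n}\,\hat\E\bigl[(JF)_X(JF)_X^T\theta + \Delta F\bigr] \xrightarrow{d} N\bigl(0, \Sigma_{(JF)_X(JF)_X^T\theta + \Delta F}\bigr), \]
and the (strong) law of large numbers to conclude $A_n \to A$ almost surely. Since $A$ is invertible at $\theta$ (another piece of the standard regularity assumptions, equivalent to identifiability/nondegeneracy of the Fisher-like matrix), the continuous mapping theorem gives $A_n^{-1} \to A^{-1}$ in probability. Slutsky's theorem then yields
\[ \sqrt{n}(\tsm - \theta) \xrightarrow{d} -A^{-1} N\bigl(0, \Sigma_{(JF)_X(JF)_X^T\theta + \Delta F}\bigr) = N\bigl(0, A^{-1}\,\Sigma_{(JF)_X(JF)_X^T\theta + \Delta F}\,A^{-1}\bigr), \]
using symmetry of $A^{-1}$ (inherited from $A$ being an expectation of outer products). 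This is exactly the stated $\Gamma_{SM}$.

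The main potential obstacles are bookkeeping rather than depth: verifying that the assumed ``sufficient smoothness and decay conditions'' of \cite{hyvarinen2005estimation} (i) justify the integration by parts giving Hyvärinen's identity and hence the closed-form estimator in the first place, (ii) guarantee finite second moments of $(JF)_X(JF)_X^T\theta + \Delta F$ (needed for the CLT), and (iii) ensure $A$ is positive definite (needed for the continuous mapping step). Each of these is already implicit in the regularity assumptions cited in the proposition statement, so the argument above goes through without additional hypotheses.
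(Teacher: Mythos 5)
Your proof is correct and reaches the same conclusion as the paper, but the execution is cleaner. The paper writes $\hat\E_n[(JF)_X(JF)_X^T] = A + \delta_{n,1}/\sqrt n$ and $\hat\E_n\Delta F = b + \delta_{n,2}/\sqrt n$, then expands $(I + A^{-1}\delta_{n,1}/\sqrt n)^{-1}$ via a Neumann series and tracks $O_P(1/\sqrt n)$ remainders through two applications of Slutsky before identifying the leading term as a centered average. You instead exploit the algebraic identity $\tsm - \theta = -A_n^{-1}(b_n + A_n\theta)$ directly, which makes the mean-zero centered average appear in one line; $A_n^{-1}$ is then handled once and for all by the law of large numbers and the continuous mapping theorem. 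The two approaches use the same three ingredients (consistency identity $A\theta + b = 0$, CLT, Slutsky), but your factorization eliminates the perturbation expansion entirely and is both shorter and less error-prone in the bookkeeping. The one thing to flag explicitly if you were writing this up is that $A_n$ is invertible only with probability tending to one, so the expression $-A_n^{-1}(b_n + A_n\theta)$ should be read on that event; this is standard and covered by the regularity conditions, as you note.
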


\begin{proof}
We include the proof for the reader's convenience.
 From \cite{hyvarinen2005estimation}, we have consistency of score matching (Theorem 2) and in particular the formula
\begin{equation}\label{eqn:consistency} 
\theta =  - \E[(J F)_X (J F)_X^T]^{-1} \E \Delta F. 
\end{equation}
We now compute the limiting distribution of the estimator as the number of samples $n \to \infty$. We will need to use some standard results from probability theory such as Slutsky's theorem and the central limit theorem, see e.g. \cite{van2000asymptotic}
 or \cite{durrett2019probability} for references. To minimize ambiguity, let $\hat \E_n$ denote the empirical expectation over $n$ i.i.d.\ samples samples and let $\hat \theta_n$ denote the score matching estimator $\tsm$ from $n$ samples.
Define $\delta_{n,1}$ and $\delta_{n,2}$ by the equations
\[ \hat \E_n[(J F)_X (J F)_X^T] = \E[(J F)_X (J F)_X^T] + \delta_{n,1}/\sqrt{n} \] and 
\[ \hat \E_n \Delta F = \E \Delta f + \delta_{n,2}/\sqrt{n}. \]
By the central limit theorem, $\delta_n = (\delta_{n,1},\delta_{n,2})$ converges in distribution to a multivariate Gaussian (with a covariance matrix that we won't need explicitly) as $n \to \infty$. From the definition
\begin{align*}
\hat \theta_n 
&= - \hat \E_n[(J F)_X (J F)_X^T]^{-1} \hat \E \Delta F \\
&= -[\E[(J F)_X (J F)_X^T]^{-1} \hat \E_n[(J F)_X (J F)_X^T]]^{-1}  \E[(J F)_X (J F)_X^T]^{-1} \hat \E \Delta F
\end{align*}
and we now simplify the expression on the right hand side. 
By applying \eqref{eqn:consistency} we have
\begin{align*} \E[(J F)_X (J F)_X^T]^{-1} \hat \E_n \Delta F 
&= \E[(J F)_X (J F)_X^T]^{-1} (\E \Delta F + \delta_{n,2}/\sqrt{n}) \\
&= -\theta + \E[(J F)_X (J F)_X^T]^{-1} \delta_{n,2}/\sqrt{n} 
\end{align*}
Since 
\[ \E[(J F)_X (J F)_X^T]^{-1} \hat \E_n[(J F)_X (J F)_X^T] = I + \E[(J F)_X (J F)_X^T]^{-1}\delta_{n,1}/\sqrt{n} \]
and
$(I + X)^{-1} = I - X + X^2 - \cdots$ we have by applying Slutsky's theorem that
\[ \E[(J F)_X (J F)_X^T]^{-1} \hat \E_n[(J F)_X (J F)_X^T]]^{-1} = I - \E[(J F)_X (J F)_X^T]^{-1}\delta_{n,1}/\sqrt{n} + O_P(1/n) \]
where we use the standard notation $Y_n = O_P(1/n)$ to indicate that $n Y_n/f(n) \to 0$ in probability for any function $f$ with $f(n) \to \infty$. Hence 
\begin{align*} 
\hat \theta_n &= -[\E[(J F)_X (J F)_X^T]^{-1} \hat \E_n[(J F)_X (J F)_X^T]]^{-1}  \E[(J F)_X (J F)_X^T]^{-1} \hat \E_n \Delta F \\
&= -\left[I - \E[(J F)_X (J F)_X^T]^{-1}\delta_{n,1}/\sqrt{n} + O_P(1/n)\right]  (-\theta + \E[(J F)_X (J F)_X^T]^{-1} \delta_{n,2}/\sqrt{n})
\end{align*}
and applying Slutsky's theorem again, we find
\begin{align*}
\sqrt{n}(\hat \theta_n - \theta) = \E[(J F)_X (J F)_X^T]^{-1}(-\delta_{n,1} \theta - \delta_{n,2}) + O_P(1/\sqrt{n})
\end{align*}
From the definition, we know
\[ \frac{1}{\sqrt{n}}(\delta_{n,1} \theta - \delta_{n,2}) = \hat \E_n[-(J F)_X (J F)_X^T \theta  -\Delta F] - \E[-(J F)_X (J F)_X^T \theta  -\Delta F] \]
so altogether by the central limit theorem, we have
\[ \sqrt{n}(\hat \theta_n - \theta) \to  N\left(0, \E[(J F)_X (J F)_X^T]^{-1} \Sigma_{(J F)_X (J F)_X^T \theta  + \Delta F}  \E[(J F)_X (J F)_X^T]^{-1}\right) \]
as claimed. 

\end{proof}

\subsection{Statistical efficiency of score matching under a Poincar\'e inequality} 

Our first result will show that if we are estimating a distribution with a small Poincar\'e constant (and some relatively mild smoothness assumptions), the statistical efficiency of the score matching estimator is not much worse than the maximum likelihood estimator. 


\begin{theorem}[Efficiency under a Poincar\'e inequality]
\label{thm:poincare}
Suppose the distribution $p_{\theta}$ satisfies a Poincar\'e inequality with constant $\cp$. Then
we have
\[ \|\Gamma_{SM}\|_{OP} \le 2 \cp^2 \|\Gamma_{MLE}\|_{OP}^2 \left(\|\theta\|^2 \E \|(J F)_X\|_{OP}^4  + \E \|\Delta F\|_2^2 \right). \]
More generally, the same bound holds assuming only the following restricted version of the Poincar\'e inequality: for any $w$, $\Var(\langle w, F(x) \rangle) \le C_P \E \|\nabla \langle w, F(x) \rangle\|_2^2$. 
\end{theorem}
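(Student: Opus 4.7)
The plan is to unfold the formula for $\Gamma_{SM}$ from Proposition~\ref{prop:asymptotic-normality}, peel off the two $A^{-1}$ factors where $A := \E[(JF)_X(JF)_X^T]$, and then handle the inner covariance and the outer $A^{-1}$ factors separately. Concretely, submultiplicativity of the operator norm gives
\[
\|\Gamma_{SM}\|_{OP} \;\le\; \|A^{-1}\|_{OP}^{\,2}\,\bigl\|\Sigma_{(JF)_X(JF)_X^T\theta + \Delta F}\bigr\|_{OP},
\]
so it suffices to control each factor. So the proof splits into two short lemmas that can be done inline.

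For the inner covariance factor, I would bound the covariance by the (un-centered) second moment matrix and then apply $(a+b)^2 \le 2a^2 + 2b^2$ inside the expectation to split the two summands:
\[
\bigl\|\Sigma_{(JF)_X(JF)_X^T\theta + \Delta F}\bigr\|_{OP} \;\le\; \E\bigl\|(JF)_X(JF)_X^T\theta + \Delta F\bigr\|^2 \;\le\; 2\|\theta\|^2\,\E\|(JF)_X\|_{OP}^{\,4} + 2\,\E\|\Delta F\|_2^{\,2}.
\]
The first inequality uses that for any random vector $Y$, $\Sigma_Y \preceq \E[YY^T]$, and the second uses $\|(JF)_X(JF)_X^T\theta\| \le \|(JF)_X\|_{OP}^2 \|\theta\|$.

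The main conceptual step is bounding $\|A^{-1}\|_{OP}$ by $C_P \|\Gamma_{MLE}\|_{OP}$. This is precisely where the (restricted) Poincaré hypothesis enters. For any $w \in \mathbb{R}^m$ one has $\nabla\langle w, F(x)\rangle = (JF)_x^{T} w$, hence $\E\|\nabla \langle w, F\rangle\|^2 = w^T A w$, while $\Var(\langle w, F(X)\rangle) = w^T \Sigma_F w$. Applying the (restricted) Poincaré inequality to $\langle w, F\rangle$ for each $w$ yields the matrix inequality $\Sigma_F \preceq C_P \cdot A$, equivalently $A^{-1} \preceq C_P\,\Sigma_F^{-1} = C_P \,\Gamma_{MLE}$, so $\|A^{-1}\|_{OP} \le C_P \|\Gamma_{MLE}\|_{OP}$.

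Combining the two bounds gives exactly
\[
\|\Gamma_{SM}\|_{OP} \;\le\; 2\,C_P^{\,2}\,\|\Gamma_{MLE}\|_{OP}^{\,2}\bigl(\|\theta\|^2\,\E\|(JF)_X\|_{OP}^{\,4} + \E\|\Delta F\|_2^{\,2}\bigr),
\]
as claimed. There is no real obstacle here beyond spotting that the restricted Poincaré inequality is naturally a matrix inequality of the form $\Sigma_F \preceq C_P A$ — once that is observed, the rest is two lines of norm arithmetic. The only thing to be slightly careful about is that the proof uses the restricted Poincaré inequality only on the coordinates of $F$ (not on the full space of smooth functions), which is why the final sentence of the theorem statement is valid.
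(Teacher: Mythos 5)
Your proof is correct and follows essentially the same route as the paper: peel off the two $A^{-1}$ factors with submultiplicativity, convert them to $C_P\|\Sigma_F^{-1}\|_{OP}$ via the Poincar\'e matrix inequality (the paper isolates this as Lemma~\ref{lem:poincare-app}), and bound the inner covariance by $2\|\theta\|^2\E\|(JF)_X\|_{OP}^4 + 2\E\|\Delta F\|_2^2$. The only cosmetic difference is in the covariance step: you go matrix-to-scalar first (via $\Sigma_Y \preceq \E[YY^T]$ and then the trace bound) and split with $(a+b)^2\le 2a^2+2b^2$ on scalars, whereas the paper splits first at the matrix level via $\Sigma_{A+B}\preceq 2\Sigma_A + 2\Sigma_B$ and then bounds each operator norm; both are equivalent and give the same constant. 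One minor expositional note: the step from $\|\Sigma_Y\|_{OP}$ to $\E\|Y\|^2$ needs not only $\Sigma_Y\preceq\E[YY^T]$ but also $\|\E[YY^T]\|_{OP}\le\Tr\E[YY^T]=\E\|Y\|^2$, which you should state; this is exactly the bound the paper uses for the $\Delta F$ term.
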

\begin{remark}
To interpret the terms in the bound, the quantities $\E_{p_{\theta}} \|(J F)_X\|_{OP}^4$ and $\E \|\Delta F\|_2^2$ can be seen as a measure of the smoothness of the sufficient statistics $F$, and $\|\theta\|$ as a bound on the radius of parameters for the exponential family. In Section~\ref{sec:simulations} we will give an example to show bounded smoothness is indeed necessary for score matching to be efficient. 
\end{remark}
\begin{remark}
A direct consequence of this result is that with 99\% probability and for sufficiently large $n$, 
\begin{equation}
    n\|\theta - \hat \theta_{SM}\|^2 \le \left(n\E \|\theta - \hat \theta_{MLE}\|^2\right)^2 \cdot O\left(\cp^2 m \left(\|\theta\|^2 \E \|(J F)_X\|_{OP}^4  + \E \|\Delta F\|_2^2 \right) \right)
    \label{eq:efficiencymatrix}
\end{equation}. So if the the distribution is smooth and Poincar\'e, score matching achieves small $\ell_2$ error provided MLE does. To show this, since $\sqrt{n}(\theta - \hat \theta_{SM}) \to N(0,\Gamma_{SM})$ by Proposition~\ref{prop:asymptotic-normality}, for all sufficiently large $n$ it follows from Markov's inequality that with probability at least $99\%$,
\[ n\|\theta - \hat \theta_{SM}\|^2 = O(\E_{Z \sim N(0,\Gamma_{SM})} \|Z\|^2) = O(\Tr \Gamma_{SM}) = O(m \|\Gamma_{SM}\|_{OP}). \]
On the other hand, by Fatou's lemma we have that 
\[ \lim\inf_{n \to \infty} n\E\|\theta - \hat \theta_{MLE}\|^2 \ge \E_{Z \sim N(0,\Gamma_{MLE})} \|Z\|^2 = \Tr (\Gamma_{MLE}) \ge \|\Gamma_{MLE}\|_{OP} \]
where in the first expression $\hat \theta_{MLE}$ implicitly depends on $n$, the number of samples. 
Combining these two observations with Theorem~\ref{thm:poincare} and 
gives inequality~\ref{eq:efficiencymatrix}.
\end{remark}

The main lemma to prove the theorem is the following: 
\begin{lemma}\label{lem:poincare-app}
$\E[(J F)_X (J F)_X^T]^{-1} \preceq \cp \Sigma_F^{-1}$ where $\cp$ is the Poincar\'e constant of $p_{\theta}$.
\end{lemma}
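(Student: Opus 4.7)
The plan is to reduce the matrix inequality to a family of scalar Poincar\'e inequalities applied to linear combinations of the sufficient statistics, and then invert. Concretely, for any fixed vector $w \in \mathbb{R}^m$ I would consider the scalar test function $f_w(x) = \langle w, F(x)\rangle$. Its gradient is $\nabla f_w(x) = (JF)_x^T w$, so
\[ \E_{p_\theta} \|\nabla f_w(X)\|^2 = w^T \E[(JF)_X (JF)_X^T]\, w, \]
while $\Var_{p_\theta}(f_w) = w^T \Sigma_F w$ by definition of $\Sigma_F$. Applying either the full Poincar\'e inequality for $p_\theta$, or the restricted version stated in the theorem (which is exactly tailored to such linear combinations), yields
\[ w^T \Sigma_F w \;\le\; C_P\, w^T \E[(JF)_X (JF)_X^T]\, w \qquad \text{for every } w \in \mathbb{R}^m. \]

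Since $w$ was arbitrary, this is the PSD inequality $\Sigma_F \preceq C_P\, \E[(JF)_X (JF)_X^T]$. I would then invoke operator monotonicity of the inverse on positive definite matrices (i.e.\ $A \preceq B$ implies $B^{-1} \preceq A^{-1}$ for $A,B \succ 0$) to conclude
\[ \E[(JF)_X (JF)_X^T]^{-1} \;\preceq\; C_P\, \Sigma_F^{-1}, \]
which is exactly the claim.

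The only thing to be careful about is the invertibility needed to take the inverse, but this is essentially the standard non-degeneracy hypothesis on the exponential family (no exact linear relations among the sufficient statistics) that already underlies the statement of the MLE asymptotic covariance $\Gamma_{MLE} = \Sigma_F^{-1}$ and the score matching closed form. I do not anticipate any real obstacle here: the entire argument is just choosing the correct family of test functions for the Poincar\'e inequality so that the two matrices of interest appear naturally on opposite sides.
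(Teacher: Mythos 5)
Your proposal is correct and is essentially identical to the paper's own proof: both apply the Poincar\'e inequality to the test functions $f_w(x)=\langle w,F(x)\rangle$ to obtain $\Sigma_F \preceq C_P\,\E[(JF)_X(JF)_X^T]$ and then invert. Your added remark about operator monotonicity of the inverse and the non-degeneracy hypothesis just makes explicit a step the paper states as ``inverting both sides.''
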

\begin{proof}
For any vector $w \in \mathbb{R}^m$, we have by the Poincar\'e inequality that
\[ \cp \langle w, \E[(J F)_X (J F)_X^T] w \rangle 
= \cp \E \| \nabla_x \langle w, F(x) \rangle |_X \|_2^2 \ge  \Var(\langle w, F(x) \rangle) = \langle w, \Sigma_F w \rangle \]
This shows $\cp \displaystyle \E[(J F)_X (J F)_X^T] \succeq  \Sigma_F$ and inverting both sides, \new{using the well-known fact that the matrix inverse is operator monotone \citep{toda2011operator}, } gives the result.
\end{proof}
We will also need the following helper lemma:
\begin{lemma}
For any random vectors $A,B$ we have $\Sigma_{A + B} \preceq 2 \Sigma_A + 2 \Sigma_B$.
\label{l:helperpsd}
\end{lemma}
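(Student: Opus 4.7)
The plan is to expand $\Sigma_{A+B}$ via bilinearity of covariance and then control the cross-terms. Concretely, let $\Sigma_{A,B} := \E[(A - \E A)(B - \E B)^T]$, so that
\[ \Sigma_{A+B} = \Sigma_A + \Sigma_B + \Sigma_{A,B} + \Sigma_{A,B}^T. \]
The statement is equivalent to showing $\Sigma_{A,B} + \Sigma_{A,B}^T \preceq \Sigma_A + \Sigma_B$.

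The cleanest route I would take is the parallelogram-law trick: perform the analogous expansion for $A - B$ to get
\[ \Sigma_{A-B} = \Sigma_A + \Sigma_B - \Sigma_{A,B} - \Sigma_{A,B}^T, \]
and add the two identities to obtain $\Sigma_{A+B} + \Sigma_{A-B} = 2\Sigma_A + 2\Sigma_B$. Since $\Sigma_{A-B}$ is a covariance matrix it is PSD, so $2\Sigma_A + 2\Sigma_B - \Sigma_{A+B} = \Sigma_{A-B} \succeq 0$, which is the desired bound.

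If one prefers to avoid the identity trick, the same conclusion follows by testing in any direction $w$ and applying AM-GM: $2\langle w, \Sigma_{A,B} w\rangle = 2\E[\langle w, A - \E A\rangle \langle w, B - \E B\rangle] \le \E\langle w, A - \E A\rangle^2 + \E\langle w, B - \E B\rangle^2 = \langle w, (\Sigma_A + \Sigma_B) w\rangle$, which gives $\Sigma_{A,B} + \Sigma_{A,B}^T \preceq \Sigma_A + \Sigma_B$ and hence the claim.

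There is no real obstacle here; the only thing to be careful about is keeping track of the non-symmetric cross-covariance $\Sigma_{A,B}$ and its transpose, which is why the parallelogram-law presentation is appealing since it sidesteps the need to symmetrize by hand.
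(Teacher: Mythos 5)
Your proposal is correct, and you actually offer two routes. Your second route (test against a fixed direction $w$, reduce to scalars, and apply AM-GM to bound the cross term) is essentially the paper's proof: the paper writes $\Var(\langle w, A + B\rangle) = \Var(\langle w,A\rangle) + 2\Cov(\langle w,A\rangle,\langle w,B\rangle) + \Var(\langle w,B\rangle)$ and then bounds the cross term via Cauchy--Schwarz followed by $\sqrt{ab} \le (a+b)/2$, which is the same estimate you obtain in one step via $2\E[XY] \le \E X^2 + \E Y^2$. Your first route --- the parallelogram identity $\Sigma_{A+B} + \Sigma_{A-B} = 2\Sigma_A + 2\Sigma_B$ together with $\Sigma_{A-B}\succeq 0$ --- is a genuinely different and arguably cleaner argument: it works directly at the matrix level without ever passing to quadratic forms, it makes the slack in the inequality explicit (it is exactly $\Sigma_{A-B}$), and it sidesteps the bookkeeping with the non-symmetric cross-covariance $\Sigma_{A,B}$ that you flag. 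The paper's scalar-reduction approach, on the other hand, has the modest advantage of being more elementary (no identity to set up, just Cauchy--Schwarz), and it mirrors the technique used elsewhere in the paper (e.g.\ in the proof of Lemma~\ref{l:msecut}). Both are valid and short; the parallelogram version is the more conceptual of the two.
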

\begin{proof}
For any vector $w$ we have
\begin{align*} 
\Var(\langle w, A + B \rangle) 
&= \Var(\langle w, A \rangle) + 2\Cov(\langle w, A \rangle \langle w, B \rangle) + \Var(\langle w, B \rangle) \\
&\le \Var(\langle w, A \rangle) + 2\sqrt{\Var(\langle w, A \rangle)\Var(\langle w, B \rangle)} + \Var(\langle w, B \rangle)  \\
&\le 2\Var(\langle w, A \rangle) + 2\Var(\langle w, B \rangle)
\end{align*}
where the first inequality is Cauchy-Schwarz for variance and the second is $ab \le a^2/2 + b^2/2$. We proved for this for every vector which proves the PSD inequality. 
\end{proof}

With this in mind, we can proceed to the proof of Theorem~\ref{thm:poincare}:
\begin{proof}[Proof of Theorem~\ref{thm:poincare}]
Recall from Proposition~\ref{p:sme} that
\[\Gamma_{SM} := \E[(J F)_X (J F)_X^T]^{-1} \Sigma_{(J F)_X (J F)_X^T \theta  + \Delta F}  \E[(J F)_X (J F)_X^T]^{-1}. \]
By Lemma \ref{lem:poincare-app} and submultiplicativity of the operator norm, we have
\begin{align*} 
\MoveEqLeft \|\E[(J F)_X (J F)_X^T]^{-1} \Sigma_{(J F)_X (J F)_X^T \theta  + \Delta F}  \E[(J F)_X (J F)_X^T]^{-1}\|_{OP} \\
&\le \cp^2 \|\Sigma_F^{-1}\|_{OP}^2 \|\Sigma_{(J F)_X (J F)_X^T \theta  + \Delta F}\|_{OP}. \end{align*}

We will finally bound the two operator norms on the right hand side. By Lemma \ref{l:helperpsd}, we have
\[ \Sigma_{(J F)_X (J F)_X^T \theta  + \Delta F} \preceq 2 \Sigma_{(J F)_X (J F)_X^T \theta} + 2 \Sigma_{\Delta F} \]
Furthermore, we have \[ \|\Sigma_{(J F)_X (J F)_X^T \theta}\|_{OP} \le \|\E[(J F)_X (J F)_X^T \theta \theta^T (J F)_X (J F)_X^T]\|_{OP} \le \E \|(J F)_X\|_{OP}^4 \|\theta\|^2 \]
and
\[ \|\Sigma_{\Delta F}\|_{OP} \le \|\E (\Delta F) (\Delta F)^T\|_{OP} \le \Tr \E (\Delta F) (\Delta F)^T \le \E \|\Delta F\|_2^2 \]
which implies the statement of the theorem. 
\end{proof}

\subsection{Statistical efficiency lower bounds from sparse cuts}
In this section, we prove a converse to Theorem~\ref{thm:poincare}:  whereas a small (restricted) Poincar\'e constant upper bounds the variance of the score matching estimator, if the Poincar\'e constant of our target distribution is large and we have sufficiently rich sufficient statistics, score matching will be extremely inefficient compared to the MLE. In fact, we will be able to do so by taking an arbitrary family of sufficient statistics, and adding a single sufficient statistic ! Informally, we'll show the following: 

\emph{Consider estimating a distribution $p_{\theta}$ in an exponential family with isoperimetric constant $C_{IS}$. Then, $p_{\theta}$ can be viewed as a member of an enlarged exponential family with one more ($O_{\partial S}(1)$-Lipschitz) sufficient statistic, such that score matching has asymptotic relative efficiency $\Omega_{\partial S}(C_{IS})$ compared to the MLE, where $\partial S$ denotes the boundary of the isoperimetric cut of $p_{\theta}$ and  
$\Omega_{\partial S}$ indicates a constant depending only on the geometry of the manifold $\partial S$.}

As noted in Section~\ref{sec:background}, a large Poincar\'e constant implies a large isoperimetric constant --- so we focus on showing that the score matching estimator is inefficient when there is a set $S$ which is a ``sparse cut''. Our proof uses differential geometry, so our final result will depend on standard geometric properties of the boundary $\partial S$ --- e.g., we use the concept of the reach $\tau_{\mathcal M}$ of a manifold which was defined in the preliminaries (Section~\ref{sec:background}). The full proofs are in  Appendix~\ref{a:inefficiency}. 
We now give the formal statement.
\begin{theorem}[Inefficiency of score matching in the presence of sparse cuts]\label{thm:inefficiency}
There exists an absolute constant $c > 0$ such that the following is true. Suppose that $p_{\theta^*_1}$ is an element of an exponential family with sufficient statistic $F_1$ and parameterized by elements of $\Theta_1$. 
Suppose $S$ is a set with smooth boundary $\partial S$ which has reach $\tau_{\partial S} > 0$.
Suppose that $1_S$ is not an affine function of $F_1$, so there exists $\delta_1 > 0 $ such that
\begin{equation}\label{eqn:delta_1}
\sup_{w_1 : \Var(\langle w_1, F_1 \rangle) = 1} \Cov\left(\langle w_1, F_1 \rangle, \frac{1_S}{\sqrt{\Var(1_S)}}\right)^2 \le 1 - \delta_1 . 
\end{equation}
Suppose that $\gamma > 0$ satisfies $\gamma < \min \left\{ \frac{c^d}{ (1 + \|\theta_1\|) \sup_{x : d(x, \partial S) \le \gamma}\|(J F_1)_x\|_{OP}}, c\frac{\tau_{\partial S}}{d} \right\}$
and is small enough so that
$0 < \delta := 1 - \left(\sqrt{1 - \delta_1} + 2\sqrt{\frac{\gamma \int_{x \in \partial S} p(x) dx}{\Pr(X \in S)(1 - \Pr(X \in S))}}\right)^2$.
Define an additional sufficient statistic
$F_2 = 1_S \ast \psi_{\gamma}$
so that the enlarged exponential family contains distributions of the form 
\[ p_{(\theta_1,\theta_2)}(x) \propto \exp(\langle \theta_1, F_1(x) \rangle + \theta_2 F_2(x)) \]
and consider the MLE and score matching estimators in this exponential family with ground truth $p_{(\theta^*_1,0)}$.

Then there exists some $w$  so that
 the relative (in)efficiency 
of the score matching estimator compared to the MLE for estimating $\langle w, \theta \rangle$ admits the following lower bound
\[  \frac{\langle w, \Gamma_{SM} w \rangle}{\langle w, \Gamma_{MLE} w \rangle} \ge \frac{c'}{\gamma} \frac{\min\{\Pr(X \in S), \Pr(X \notin S)\}}{\int_{x \in \partial S} p(x) dx} \]
where 
$c' := \frac{\delta c^d}{1 + \|\Sigma_{F_1}\|_{OP}}$. 
\end{theorem}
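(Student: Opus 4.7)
The plan is to choose $w = e_2$ (the coordinate direction corresponding to the new sufficient statistic $F_2$) and estimate the $(2,2)$ entries of $\Gamma_{MLE}$ and $\Gamma_{SM}$ separately using Schur complement identities. This choice cleanly isolates the contribution of the smoothed indicator, which is precisely what causes the inefficiency of score matching.

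For the MLE: by the Schur complement, $\langle e_2, \Sigma_F^{-1} e_2 \rangle = 1/\sigma^2$ where $\sigma^2 := \min_{w_1} \Var(F_2 - \langle w_1, F_1\rangle)$ is the residual variance from regressing $F_2$ on linear functions of $F_1$. I would propagate the hypothesis \eqref{eqn:delta_1} from $1_S$ to $F_2 = 1_S \ast \psi_\gamma$ using the $L^2(p)$ approximation $\|F_2 - 1_S\|_{L^2(p)} = O(\sqrt{\gamma \mu})$, where $\mu := \int_{\partial S} p(x)\,d\sigma(x)$. This follows from a co-area computation in tubular coordinates around $\partial S$, valid because the reach constraint $\gamma \ll \tau_{\partial S}/d$ ensures the $\gamma$-tubular neighborhood is diffeomorphic to $\partial S \times [-\gamma,\gamma]$ with $p$-mass $\approx 2\gamma\mu$. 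A triangle inequality applied to correlations then converts the hypothesis on $1_S$ into $\sigma^2 \ge \Omega(\delta) \cdot \Pr(S)(1-\Pr(S))$, so $\langle e_2, \Gamma_{MLE} e_2 \rangle = O\bigl(1/[\delta \Pr(S)(1-\Pr(S))]\bigr)$.

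For the SM: an analogous Schur complement applied to $A := \E[(JF)(JF)^T]$ yields the clean identity
\[ \langle e_2, \Gamma_{SM} e_2 \rangle = \frac{\E[(L \hat g)^2]}{(\E \|\nabla \hat g\|^2)^2}, \]
where $Lf := \Delta f + \langle \nabla \log p_{\theta^*}, \nabla f \rangle$ is the Langevin generator for the true distribution, $\hat g := F_2 - \langle \hat w_1, F_1 \rangle$, and $\hat w_1 := \arg\min_{w_1} \E\|\nabla(F_2 - \langle w_1, F_1\rangle)\|^2$ is the $L^2(p)$-Dirichlet projection coefficient. This identity uses that at $\theta^* = (\theta_1^*,0)$ the second coordinate of $(JF)(JF)^T \theta^* + \Delta F$ is exactly $LF_2$ (since $(JF_1)^T \theta_1^* = \nabla \log p_{\theta^*}$), that $\hat w_1^T Y_1 = L\langle \hat w_1, F_1 \rangle$, and that $\E[L\hat g] = 0$ by integration by parts. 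The denominator is upper bounded by $\E\|\nabla \hat g\|^2 \le \E\|\nabla F_2\|^2 \le O(\mu/\gamma)$ via the same co-area estimate (choosing the suboptimal $w_1 = 0$). The main task is to lower bound $\E[(L\hat g)^2] \ge \Omega(\mu/\gamma^3)$: inside the $\gamma$-tube around $\partial S$, $|\Delta F_2| \sim 1/\gamma^2$ in the normal direction, so $(\Delta F_2)^2 \sim 1/\gamma^4$ there, and integrating against $p$ over a region of mass $\approx \gamma \mu$ yields $\mu/\gamma^3$. The constraint $\gamma \ll 1/[(1+\|\theta_1\|)\sup_{x: d(x,\partial S) \le \gamma}\|(JF_1)_x\|_{OP}]$ ensures that the drift term $\langle \nabla \log p, \nabla F_2\rangle$ and the smooth part $L\langle \hat w_1, F_1\rangle$ cannot cancel the leading $\Delta F_2$ contribution in $L^2(p)$. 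Combining, $\langle e_2, \Gamma_{SM} e_2\rangle \ge \Omega(1/(\gamma \mu))$, so the ratio is $\ge \Omega\bigl(\delta \Pr(S)(1-\Pr(S))/(\gamma \mu)\bigr) \ge \Omega\bigl(\delta \min\{\Pr(S),\Pr(S^c)\}/(\gamma \mu)\bigr)$ using $\Pr(S)(1-\Pr(S)) \ge \min\{\Pr(S),\Pr(S^c)\}/2$.

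The main obstacle is the lower bound on $\E[(L\hat g)^2]$. The subtlety is that $\hat g$ is not simply $F_2$ but $F_2$ minus its Dirichlet-optimal linear projection onto $F_1$, and $\hat w_1$ may a priori be large; one must show the sharp transition of $F_2$ across $\partial S$ is not cancelled by a compensating smooth transition from $\langle \hat w_1, F_1 \rangle$ within the tube. A Cauchy--Schwarz bound of the form $\|\hat w_1\| \le \|A_{11}^{-1}\|_{OP} \sqrt{\|A_{11}\|_{OP} \cdot A_{22}}$, combined with the $\gamma$-smallness hypothesis, guarantees that $L\hat g$ is dominated by $\Delta F_2$ in the tube in the appropriate $L^2(p)$ sense. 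Tracking the resulting dependence on $\|\Sigma_{F_1}\|_{OP}$ through these estimates (and accounting for the volume constant $I_d \gtrsim B_d$ from the mollifier) is what produces the explicit $c^d/(1 + \|\Sigma_{F_1}\|_{OP})$ factor in $c'$.
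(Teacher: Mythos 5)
Your approach diverges from the paper's in an interesting way: you pick the concrete direction $w = e_2$ and compute the $(2,2)$ entry of each covariance via Schur complements, whereas the paper proves the existence of a good $w$ indirectly by lower bounding $\lambda_{\max}(\Gamma_{SM})$ through the Rayleigh quotient along the specific test vector $u = \E[(JF)_X(JF)_X^T]\,e_2$ (the final $w$ is a top eigenvector of $\Gamma_{SM}$, not necessarily $e_2$). Your Schur-complement identity
\[
\langle e_2, \Gamma_{SM}\, e_2\rangle \;=\; \frac{\E\bigl[(L\hat g)^2\bigr]}{\bigl(\E\|\nabla \hat g\|^2\bigr)^2}, \qquad \hat g := F_2 - \langle \hat w_1, F_1\rangle,\ \hat w_1 := A_{11}^{-1}A_{12},
\]
is correct (using $Y = (JF)(JF)^T\theta + \Delta F = LF$ coordinate-wise, $A^{-1}e_2 = (\text{Schur})^{-1}(-\hat w_1, 1)$, and $\E[L\hat g]=0$), and the MLE side via $\langle e_2, \Sigma_F^{-1}e_2\rangle = 1/\Var(\hat g^{\mathrm{MLE}})$ together with the paper's approximate-orthogonality hypothesis is fine. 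This part is a genuinely cleaner derivation than what the paper writes.

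However, there is a real gap in the numerator lower bound, and it is not merely a technicality. You need $\E[(L\hat g)^2] = \E[(LF_2 - \langle\hat w_1, LF_1\rangle)^2] \gtrsim \mu/\gamma^3$, but the subtraction of $\langle\hat w_1, LF_1\rangle$ can cancel the $\Delta F_2$ spike. The Cauchy--Schwarz bound you propose, $\|\hat w_1\| \lesssim \|A_{11}^{-1}\|_{OP}\sqrt{\|A_{11}\|_{OP} A_{22}}$, is valid, but controlling the correction $\langle\hat w_1, LF_1\rangle$ then requires control of two quantities that simply do not appear in the theorem's hypotheses: $\|A_{11}^{-1}\|_{OP}$ (there is no nondegeneracy assumption on the Dirichlet form of $F_1$, and a Poincar\'e inequality is precisely what is \emph{not} assumed here) and $\|\Delta F_1\|_\infty$ (only $\sup\|(JF_1)_x\|_{OP}$ is bounded by the $\gamma$-smallness constraint, not the second derivatives). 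More fundamentally, the hypothesis \eqref{eqn:delta_1} only rules out $1_S$ being nearly affine in $F_1$ at the $L^2(p)$-\emph{covariance} level; it does not prevent $\nabla F_2$ from being nearly parallel (in $L^2(p)$) to $\langle w_1, JF_1\rangle$ for some $w_1$. When that Dirichlet-level near-degeneracy occurs, \emph{both} your numerator and denominator collapse, and the $(2,2)$ entry of $\Gamma_{SM}$ need not scale like $1/(\gamma\mu)$; the general Cauchy--Schwarz relation $\E[(L\hat g)^2]\Var(\hat g) \ge (\E\|\nabla\hat g\|^2)^2$ only yields $\langle e_2,\Gamma_{SM}e_2\rangle \ge 1/\Var(\hat g)$, which is far too weak. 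The paper's choice $u = Ae_2$ sidesteps this entirely: the quadratic form collapses to $\Var(LF_2)/\|Ae_2\|^2$, in which neither $\hat w_1$ nor $\Delta F_1$ appears, and the needed lower bound on $\Var(LF_2)$ then follows from the tube/Weyl-formula differential-geometric argument alone. If you want to keep the concrete-direction flavor of your argument, the safe move is to switch to $w = Ae_2$ (rather than $e_2$), which makes the $F_1$-dependent contamination drop out.
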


\begin{remark}
If we choose $S$ to be the set achieving the worst isoperimetric constant, then the right hand side of the bound is simply $\frac{c'}{\gamma} C_{IS}$. (See the appendix for details.) 
Finally, we observe that although $c'$ is exponentially small in $d$, the bound is still useful in high dimensions because in the bad cases of interest $C_{IS}$ is often exponentially large in $d$. For example, this is the case for a mixture of standard Gaussians with $\Omega(\sqrt{d})$ separation between the means (see e.g. \cite{chen2021dimension}). 
\end{remark}
\begin{remark}
The assumption $\delta_1 > 0$ is a quantitative way of saying that the function $1_S$, the cut we are using to define the new sufficient statistic $F_2$, is not already a linear combination  of the existing sufficient statistics. The assumptions will always holds with some $\delta_1 \ge 0$
 by the Cauchy-Schwarz inequality. The equality case is when $1_S$ is an affine function of $\langle w_1, F_1 \rangle$ --- if such a linear dependence exists, the parameterization is degenerate and the coefficient of $F_2$ is not identifiable as $\gamma \to 0$.
 \end{remark}
 
 \paragraph{Proof sketch.} The proof of the theorem proceed in two parts: we lower bound $\langle w, \Gamma_{MLE} w \rangle$ and upper bound  $\langle w, \Gamma_{MLE} w \rangle$. The former part, which ends up to be somewhat involved, proceeds by proving a lower bound on the spectral norm of $\Gamma_{SM}$ (the full proof is in Subsection~\ref{ss:gamma}) --- by picking a direction in which the quadratic form is large. 
 The upper bound on $\sigma^2_{MLE}(w)$ (the full proof is in Subsection~\ref{ss:fisher}) will proceed by relating the Fisher matrix for the augmented sufficient statistic $(F_1, F_2)$ with the Fisher matrix for the original sufficient statistic $F_1$. Since the Fisher matrix is a covariance matrix in exponential families, this is where the numerator $\min\{\Pr(X \in S), \Pr(X \notin S)\}$, which is up to constant factors the variance of $1_S$, naturally arises in the theorem statement. 
 
 For the lower bound, it is clear that we should select $w$ which changes the distribution a lot, but not the observed gradients. The $w$ we choose that satisfies these desiderata is proportional to $\E[(J F)_X (J F)_X^T] (0,1)$. This $w$ also has the property that it results in a simple expression for the quadratic form $\langle w, \Gamma_{SM} w \rangle$, 
using the fact \[ \Gamma_{SM} = \E[(J F)_X (J F)_X^T]^{-1} \Sigma_{(J F)_X (J F)_X^T \theta  + \Delta F}  \E[(J F)_X (J F)_X^T]^{-1} \]
 The result of this calculation (details in Lemma~\ref{lem:sm-bad-initial}, Appendix \ref{a:inefficiency}) is that
 \[ \frac{\langle w, \Gamma_{SM} w \rangle}{\|w\|^2} \ge \frac{8^{-d}  \gamma^2}{\Pr[d(X,\partial S) \le \gamma]} \frac{\E_{X \mid d(X,\partial S) \le \gamma} \left( (\nabla F_2)_X^T (J F)_X^T \theta + \Delta F_2 \right)^2}{\sup_{d(x, \partial S) \le \gamma} \|(J F)_x\|_{OP}^2}.  \]
Note that the key term $\Pr[d(X,\partial S) \le \gamma]$, when divided by $\gamma$ and in the limit $\gamma \to 0$, corresponds to the surface area $\int_{x \in \partial S} p(x) dx$ of the cut. Showing that the other terms do not ``cancel'' this one out and determining the precise dependence on $\gamma$ requires a differential-geometric argument, which is somewhat more intricate. The two key ideas are to use the divergence theorem (or generalized Stokes theorem) to rewrite the numerator as a more interpretable surface integral and then rigorously argue that as $\gamma \to 0$ and we ``zoom in'' to the manifold, we can compare to the case when the surface looks flat. The quantitative version of this argument involves geometric properties of the manifold (precisely, the curvature and reach). For example, Lemma~\ref{lem:prop61} makes rigorous the statement that well-conditioned (i.e.\ large reach) manifolds are locally flat. More details, as well as the full proof is included in Appendix~\ref{a:inefficiency}.


\paragraph{Example application.} We provide an instantiation of the theorem for a simple example of a bimodal distribution:
\begin{example}\label{example:bad}
A concrete example in one dimension with a single sufficient statistic is
\[ F_1(x) = -\frac{1}{8a^2} (x - a)^2(x + a)^2 = -x^4/8a^2 + x^2/4 - a^2/8 \]
and $\theta = (1,0)$ for a parameter $a > 1$ to be taken large. This looks similar to a mixture of standard Gaussians centered at $-a$ and $a$.
Specializing Theorem~\ref{thm:inefficiency} to this case, we get: 
\end{example}

\begin{corollary}\label{corr:example}
There exists absolute constants $\gamma_0 > 0$ and $c > 0$ so that the following is true.
Suppose that $a > 1$, $\theta = (1,0)$, and expanded exponential family $\{p_{\theta'}\}_{\theta'}$ with
$p_{\theta'}(x) \propto \exp\left(\langle \theta', (F_1(x),F_2(x)) \rangle\right)$
and new sufficient statistic $F_2$ is 
the output of Theorem~\ref{thm:inefficiency} applied to $F_1$, $S = \{x : x > 0\}$, and $\gamma = \gamma_0$. Then there exists $w$ so that the relative (in)efficiency of estimating $\langle w, \theta \rangle$ is lower bounded as
\[  \frac{\langle w, \Gamma_{SM} w \rangle}{\langle w, \Gamma_{MLE} w \rangle} \ge c\,e^{a^2/8}. \]
\end{corollary}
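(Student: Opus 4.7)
The plan is to directly apply Theorem~\ref{thm:inefficiency} with $F_1$ as in the example, $S = \{x > 0\}$, and $\gamma = \gamma_0$ a small absolute constant to be chosen, and then verify that each quantity appearing in its hypotheses and conclusion behaves well uniformly in $a \ge 1$. The crucial observation is that $F_1$ is an even function of $x$, so $p_\theta \propto e^{F_1}$ is also even. This yields $\Pr(X \in S) = \Pr(X \notin S) = 1/2$, and, more importantly, $\Cov(\langle w_1, F_1(X)\rangle, 1_S(X)) = 0$ for every $w_1 \in \mathbb{R}$ because the integrand is odd. Therefore hypothesis \eqref{eqn:delta_1} of Theorem~\ref{thm:inefficiency} holds with $\delta_1 = 1$.

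The remaining hypotheses are easy. The boundary $\partial S = \{0\}$ is a single point, so its reach is infinite and one of the constraints on $\gamma$ is vacuous. For the other, $|F_1'(x)| = |{-x^3/(2a^2) + x/2}| \le 1$ whenever $|x| \le 1$ and $a \ge 1$, so the constraint reduces to $\gamma_0 < c/2$, which holds for any sufficiently small absolute constant. Writing $Z := \int e^{F_1(x)}\,dx$ for the normalizing constant, positivity of the theorem's parameter $\delta$ (with $\delta_1 = 1$ and $\Pr(X \in S)(1-\Pr(X \in S)) = 1/4$) reduces to $16 \gamma_0\, p(0) < 1$, which in turn follows from $p(0) = e^{-a^2/8}/Z \le 1/Z$ together with the lower bound $Z = \Omega(1)$ established next.

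To estimate the quantities appearing in the conclusion, I expand near the modes by setting $u = x - a$, which gives $F_1(a + u) = -(2au + u^2)^2/(8a^2)$; for $|u| \le a$ this lies in $[-9u^2/8,\, -u^2/8]$. A standard Laplace-type argument, separating the two mode regions $|x \mp a| \le a/2$, the barrier region $|x| \le a/2$ where $F_1 \le -a^2/64$, and the far tails where $-F_1$ grows like $x^4/a^2$, then yields $Z = \Theta(1)$ and $\Var(F_1(X)) = O(1)$, uniformly in $a \ge 1$. Consequently $\int_{\partial S} p(x)\,dx = p(0) = \Theta(e^{-a^2/8})$ (where the integral over a $0$-dimensional boundary is interpreted as the weighted counting measure), so the key ratio in Theorem~\ref{thm:inefficiency} is $\min\{\Pr(X \in S),\Pr(X \notin S)\} / \int_{\partial S} p\,dx = \Theta(e^{a^2/8})$, while $c' = \delta\, c / (1 + \|\Sigma_{F_1}\|_{OP})$ is a positive absolute constant. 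Plugging these bounds into Theorem~\ref{thm:inefficiency} with $\gamma = \gamma_0$ also absolute produces the claimed lower bound $\langle w, \Gamma_{SM} w\rangle / \langle w, \Gamma_{MLE} w\rangle \ge c\, e^{a^2/8}$ for some absolute $c > 0$, with $w$ inherited from the theorem.

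The only mildly delicate step is establishing $Z = \Theta(1)$ and $\Var(F_1(X)) = O(1)$ uniformly in $a \ge 1$: the informal picture that $p_\theta$ looks like two unit-variance Gaussians separated by $2a$ must be turned into genuine absolute-constant bounds by partitioning the real line into modes, barrier, and tails and controlling each piece separately. Everything else is a direct specialization of the hypotheses and conclusion of Theorem~\ref{thm:inefficiency}.
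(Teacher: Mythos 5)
Your proposal is correct and takes essentially the same route as the paper: argue $\delta_1 = 1$ by the even/odd symmetry of $F_1$ and $1_S$ under $x \mapsto -x$, establish $Z = \Theta(1)$ via a Laplace-type estimate around $x = \pm a$, plug in $p(0) = e^{F_1(0)}/Z = e^{-a^2/8}/Z$ for the surface-area term, and invoke Theorem~\ref{thm:inefficiency}. You also spell out a few hypotheses the paper leaves implicit (the reach constraint is vacuous because $\partial S = \{0\}$ is a single point with infinite reach; the $\gamma$-smoothness constraint reduces to an absolute constant for $a > 1$; positivity of $\delta$ with $\delta_1 = 1$ amounts to $16\gamma_0 p(0) < 1$), which is fine and arguably more careful than the paper.

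One small discrepancy worth flagging: you bound $\Var(F_1) = O(1)$ from \emph{above}, to control $\|\Sigma_{F_1}\|_{OP}$ as the theorem is literally worded, whereas the paper's own proof of this corollary establishes the opposite-direction bound $\Var(F_1) \ge C'' > 0$ (via the law of total variance restricted to $[a-1, a+1]$) so that $\|\Sigma_{F_1}^{-1}\|_{OP} = O(1)$. Tracing through the appendix proof of Theorem~\ref{thm:inefficiency}, the quantity that actually appears in the final ``putting together'' inequality is $\|\Sigma_{F_1}^{-1}\|_{OP}$, so the $\|\Sigma_{F_1}\|_{OP}$ in the theorem statement appears to be a typo and the bound you really want is the \emph{lower} bound on $\Var(F_1)$. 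Fortunately your Laplace argument around the modes gives $\Var(F_1) = \Theta(1)$ in both directions, so this is a matter of citing the right side rather than a genuine gap; just make sure to state the lower bound on $\Var(F_1)$, since that is the one the theorem's proof consumes.
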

\begin{proof}[Proof of Corollary~\ref{corr:example}]
First observe that 
\begin{align*} \int_{-\infty}^{\infty} e^{-F_1(x)} dx = 2 \int_0^{\infty} e^{-(1/8)(x - a)^2(x/a + 1)^2} dx 
&\le 2 \int_{-\infty}^{\infty} e^{-(1/8)(x - a)^2} dx \\
&= 2 \int_{-\infty}^{\infty} e^{-(x^2/8)} dx =: C 
\end{align*}
where $C$ is a positive constant independent of $a$. Using that $F_1(x) = (1/8)(x - a)^2(x/a + 1)^2$ it then follows that
\[ \Pr(X \in [a - 1, a + 1]) = \frac{\int_{a - 1}^{a + 1} e^{-F_1(x)} dx}{\int_{-\infty}^{\infty} e^{-F_1(x)} dx} \ge \frac{e^{-(1/8)(x/a + 1)^2}}{C} \ge C' > 0 \]
where $C'$ is a positive constant independent of $a$. From this, we see by the law of total variance that $\Var(F_1) \ge \Var(F_1 \mid X \in [a-1,a +1]) \Pr(X \in [a-1,a+1]) \ge C'' > 0$ where $C'' > 0$ is another positive constant independent of $a$. Hence $\|\Sigma_{F_1}^{-1}\|_{OP} = O(1)$ independent of $a$. Also, if we define $S = \{x : x > 0\}$ then 
\[ \Cov(F_1(x), 1_S) = 0 \]
becuase $F_1(x)$ is even, $1_S$ is odd and the distribution is symmetric about zero. So we can take $\delta_1 = 1$ in the statement of Theorem~\ref{thm:inefficiency}. Therefore, applying Theorem~\ref{thm:inefficiency} to $S$ and using that $F_1(0) = -a^2/8$, we therefore get for $\gamma$ smaller than an absolute constant, that the inefficiency is lower bounded by $\Omega(e^{a^2/8}/\gamma)$. By taking $\gamma$ equal to a fixed constant we get the result.
\end{proof}
In Section~\ref{sec:simulations}, we perform simulations which show the performance of score matching indeed degrades exponentially as $a$ beomes large. 

\section{Discrete Analogues: Pseudolikelihood, Glauber Dynamics, and Approximate Tensorization}
\label{sec:discrete}
\subsection{Pseudolikelihood}
Several authors have proposed variants of score matching for discrete probability distributions, e.g. \cite{lyu2009interpretation,shao2019bayesian,hyvarinen2007some}. Furthermore, \cite{hyvarinen2005estimation,hyvarinen2006consistency,hyvarinen2007some,hyvarinen2007connections} pointed out some connections between pseudolikelihood methods (a classic alternative to maximum likelihood in statistics \cite{besag1975statistical,besag1977efficiency}), Glauber dynamics (a.k.a.\ Gibbs sampler, see Preliminaries), and score matching.
Finally, just like the log-Sobolev inequality controls the rapid mixing of Langevin dynamics, 
there are functional inequalities \citep{gross1975logarithmic,bobkov2006modified} which bound the mixing time of Glauber dynamics. Thus, we ask: \emph{Is there a discrete analogue of the relationship between score matching and the log-Sobolev inequality?}

The answer is yes. 
To explain further, we need a key concept recently introduced by \cite{marton2013inequality,marton2015logarithmic} and  \cite{caputo2015approximate}: if $(\Omega_1,\mathcal F_1), \ldots (\Omega_d, \mathcal F_d)$ are arbitrary measure spaces, we say a distribution $q$ on $\bigotimes_{i = 1}^d \Omega_i$ satisfies \emph{approximation tensorization of entropy} with constant $C_{AT}(q)$ if
\begin{equation}\label{eqn:at}
\KL(p, q) \le C_{AT}(q) \sum_{i = 1}^d \E_{X_{\sim i} \sim p_{\sim i}}[\KL(p(X_i \mid X_{\sim i}), q(X_i \mid X_{\sim i}))]. 
\end{equation}
This inequality is sandwiched between two discrete versions of the log-Sobolev inequality (Proposition 1.1 of \cite{caputo2015approximate}): it is weaker than the standard discrete version of the log-Sobolev inequality \citep{diaconis1996logarithmic} and stronger than the Modified Log-Sobolev Inequality \citep{bobkov2006modified} which characterizes exponential ergodicity of the Glauber dynamics.\footnote{In most cases where the MLSI is known, approximate tensorization of entropy is also, e.g. \cite{chen2021optimal,anari2021entropic,marton2015logarithmic,caputo2015approximate}.}
We define a restricted version $C_{AT}(q,\mathcal P)$ analogously to the restricted log-Sobolev constant.

Finally, we recall the \emph{pseudolikelihood} objective \citep{besag1975statistical} based on entrywise conditional probabilities:  
$L_p(q) := \sum_{i = 1}^d \E_{X \sim p}[\log q(X_i \mid X_{\sim i})]$. With these definition in place, we have:
\begin{prop}\label{prop:discrete}
We have
$\KL(p,q) \le C_{AT}(q) (L_p(p) - L_p(q))$
and more generally for any class $\mathcal P$ containing $p$, we have $\KL(p,q) \le C_{AT}(q,\mathcal P) (L_p(p) - L_p(q))$.
\end{prop}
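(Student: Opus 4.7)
The plan is to show that the sum on the right-hand side of the approximate tensorization inequality~\eqref{eqn:at} is exactly $L_p(p) - L_p(q)$, after which the proposition follows immediately by applying the definition of $C_{AT}(q)$ (respectively $C_{AT}(q,\mathcal P)$).

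First I would unpack a single term in the right-hand side of \eqref{eqn:at} using the definition of KL divergence:
\[
\E_{X_{\sim i} \sim p_{\sim i}}[\KL(p(X_i \mid X_{\sim i}), q(X_i \mid X_{\sim i}))]
= \E_{X_{\sim i} \sim p_{\sim i}} \E_{X_i \sim p(\cdot \mid X_{\sim i})}\!\left[\log \tfrac{p(X_i \mid X_{\sim i})}{q(X_i \mid X_{\sim i})}\right].
\]
By the tower property, the iterated expectation on the right is just $\E_{X \sim p}[\,\cdot\,]$, so each term equals $\E_{X \sim p}[\log p(X_i \mid X_{\sim i})] - \E_{X \sim p}[\log q(X_i \mid X_{\sim i})]$.

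Next I would sum over $i = 1, \ldots, d$ and recognize the two sums as $L_p(p)$ and $L_p(q)$ respectively, yielding
\[
\sum_{i=1}^d \E_{X_{\sim i} \sim p_{\sim i}}[\KL(p(X_i \mid X_{\sim i}), q(X_i \mid X_{\sim i}))] = L_p(p) - L_p(q).
\]
Substituting this identity into \eqref{eqn:at} gives $\KL(p,q) \le C_{AT}(q)(L_p(p) - L_p(q))$, which is the first claim. For the restricted version, exactly the same manipulation works because the identity above is purely a definitional rewriting and does not depend on which class $\mathcal P$ we restrict $p$ to; one simply applies the defining inequality for $C_{AT}(q,\mathcal P)$ to the distribution $p \in \mathcal P$.

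There is essentially no obstacle here: the proof is a direct unwinding of conditional KL into a difference of log-likelihoods, together with a single application of the definition of $C_{AT}$. The only thing one might want to double-check is the sign convention (that $L_p(p) \ge L_p(q)$ by Gibbs' inequality applied coordinate-wise, which is automatic from the nonnegativity of each conditional KL term), confirming that the bound is informative.
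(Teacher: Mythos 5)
Your proof is correct and follows essentially the same route as the paper's: both identify $\sum_{i}\E_{X_{\sim i}\sim p_{\sim i}}[\KL(p(X_i\mid X_{\sim i}),q(X_i\mid X_{\sim i}))] = L_p(p)-L_p(q)$ via the tower property and then apply the defining inequality for $C_{AT}(q)$ (or $C_{AT}(q,\mathcal P)$). The paper states this identity without writing out the tower-property step, but the underlying argument is identical.
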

\begin{proof}
Observe that $L_p(p) - L_p(q) = \sum_{i = 1}^d  \E_{X_{\sim i} \mid p_{\sim i}}[\KL(p(X_i \mid X_{\sim i}), q(X_i \mid X_{\sim i}))]$, so the result follows by expanding the definition.
\end{proof}

Thus, just as the score matching objective is a relaxation of maximum likelihood through the log-Sobolev inequality, pseudolikelihood is a relaxation
through approximate tensorization of entropy.

\begin{remark}
Pseudolikelihood methods (and variants like node-wise regression) are one of the dominant approaches to fitting fully-observed graphical models, e.g. \citep{wu2019sparse,lokhov2018optimal,klivans2017learning,kelner2020learning}. Like score matching, pseudolikelihood methods do not require computing normalizing constants which can be slow or computationally hard (e.g. \cite{sly2012computational}). Pseudolikelihood is applicable in both discrete and continuous settings, as is our connection with approximate tensorization.
\end{remark}
We state explicitly the analogue of Theorem~\ref{thm:finitesample} for pseudolikelihood, which follows from the same proof by replacing Proposition~\ref{prop:lsi-score-matching} with Proposition~\ref{prop:discrete}.
\begin{theorem}\label{thm:finite-sample-discrete}
Suppose that $\mathcal P$ is a class of probability distributions containing $p$ and  
$C_{AT}(\mathcal P, \mathcal P) := \sup_{q \in \mathcal P} C_{AT}(q,\mathcal P)$
 is the worst-case (restricted) approximate tensorization constant in the class of distributions (e.g.\ bounded by a constant if all of the distributions in $\mathcal P$ satisfy a version of Dobrushin's uniqueness condition \citep{marton2015logarithmic}).
Let
\[ \cR_n :=  \E_{X_1,\ldots,X_n,\epsilon_1,\ldots,\epsilon_n} \sup_{q \in \mathcal P} \frac{1}{n} \sum_{i = 1}^n \epsilon_i \left[\sum_{j = 1}^d \log q((X_i)_j \mid (X_i)_{\sim j})\right] \]
be the expected Rademacher complexity of the class given $n$ samples $X_1,\ldots,X_n \sim p$ i.i.d.\ and independent $\epsilon_1,\ldots,\epsilon_n \sim Uni\{\pm 1\}$ i.i.d.\ Rademacher random variables. Let $\hat p$ be the pseudolikelihood estimator from $n$ samples, i.e. 
$\hat p = \arg\min_{q \in \mathcal P} \hat L_p(q)$.
Then
\[ \E \KL(p, \hat p) \le 2C_{AT}(\mathcal P, \mathcal P) \cR_n. \]
In particular, if $C_{AT} < \infty$ then $\lim_{n \to \infty} \E \KL(p, \hat p) = 0$ as long as $\lim_{n \to \infty} \cR_n = 0$.
\end{theorem}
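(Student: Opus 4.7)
The plan is to mirror the proof of Theorem~\ref{thm:finitesample} almost verbatim, substituting the pseudolikelihood objective $L_p$ for the score matching objective $J_p$ and Proposition~\ref{prop:discrete} for Proposition~\ref{prop:lsi-score-matching}. Concretely, I would first observe that, because $p \in \mathcal P$ and (by the non-negativity of KL combined with Proposition~\ref{prop:discrete}) the population pseudolikelihood is maximized at $q = p$ within $\mathcal P$, the expected excess pseudolikelihood risk $\E[L_p(p) - L_p(\hat p)]$ is a well-defined non-negative quantity whose control will drive the argument. The estimator $\hat p$ is the empirical maximizer of $\hat L_p$ (equivalently, the empirical minimizer of $-\hat L_p$), so the usual excess-risk decomposition gives
\[
\E[L_p(p) - L_p(\hat p)] \;\le\; 2\,\E \sup_{q \in \mathcal P} \bigl(L_p(q) - \hat L_p(q)\bigr).
\]

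Next, I would invoke the standard Rademacher symmetrization lemma (Theorem 26.3 of \cite{shalev2014understanding}) applied to the loss class $\bigl\{x \mapsto \sum_{j=1}^d \log q(x_j \mid x_{\sim j}) : q \in \mathcal P\bigr\}$. This bounds the right-hand side by $2 \mathcal R_n$, the expected Rademacher complexity as defined in the theorem statement, giving $\E[L_p(p) - L_p(\hat p)] \le 2 \mathcal R_n$.

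Finally, I would apply the restricted version of Proposition~\ref{prop:discrete} (with the class $\mathcal P$) pointwise in the randomness of the sample to convert the pseudolikelihood gap into a KL bound: almost surely,
\[
\KL(p,\hat p) \;\le\; C_{AT}(\hat p, \mathcal P)\,\bigl(L_p(p) - L_p(\hat p)\bigr) \;\le\; C_{AT}(\mathcal P, \mathcal P)\,\bigl(L_p(p) - L_p(\hat p)\bigr).
\]
Taking expectations and combining with the symmetrization bound yields $\E \KL(p, \hat p) \le 2 C_{AT}(\mathcal P, \mathcal P) \mathcal R_n$, as claimed, and the consistency statement is then immediate.

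There is really no substantive obstacle here, since both ingredients are in hand; the only point that requires a tiny bit of care is that Proposition~\ref{prop:discrete} is stated with a $q$-dependent constant, so one must pass to the worst-case restricted constant $C_{AT}(\mathcal P, \mathcal P)$ over the class before taking expectation, exactly as in the proof of Theorem~\ref{thm:finitesample}. The final claim on consistency ($\E \KL(p, \hat p) \to 0$ when $C_{AT} < \infty$ and $\mathcal R_n \to 0$) then follows directly by sending $n \to \infty$ in the bound.
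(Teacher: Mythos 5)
Your proof is correct and follows exactly the route the paper indicates (the paper explicitly states that Theorem~\ref{thm:finite-sample-discrete} ``follows from the same proof'' as Theorem~\ref{thm:finitesample} by swapping Proposition~\ref{prop:discrete} for Proposition~\ref{prop:lsi-score-matching}); you have simply spelled out the excess-risk and symmetrization steps that the paper compresses into a single citation to Theorem 26.3 of \cite{shalev2014understanding}. The only small blemish is bookkeeping of constants and signs in the excess-risk decomposition: since $\E[\hat L_p(p) - L_p(p)] = 0$, the factor of $2$ should come entirely from symmetrization applied to $\E\sup_{q}\bigl(\hat L_p(q) - L_p(q)\bigr)$, not from an additional factor in the decomposition, but this does not affect the final bound $\E\KL(p,\hat p) \le 2\,C_{AT}(\mathcal P,\mathcal P)\,\cR_n$.
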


\subsection{Ratio Matching}
\citep{hyvarinen2007some} proposed a version of score matching for distributions on the hypercube $\{\pm 1\}^d$ and observed that the resulting method (``ratio matching'') bears similarity to pseudolikelihood. A similar calculation as the proof of Proposition~\ref{prop:discrete} allows us to arrive at ratio matching based on a strengthening of approximate tensorization studied in \citep{marton2015logarithmic}. Our derivation seems more conceptual than the original derivation, explains the similarity to pseudolikelihood,  and establishes some useful connections. 

\cite{marton2015logarithmic} studied a strengthened version of approximate tensorization of the form
\begin{equation}\label{eqn:marton} \KL(p,q) \le C_{AT2}(q) \sum_{i = 1}^d \E_{X_{\sim i} \sim p_{\sim i}} \TV^2(p(X_i \mid X_{\sim i}), q(X_i \mid X_{\sim i}))
\end{equation}
where $\TV$ denotes the total variation distance (see \cite{cover1999elements}).
(This is known to hold for a class of distributions $q$ satisfying a version of Dobrushin's condition and marginal bounds \citep{marton2015logarithmic}.)
This inequality is stronger than the standard approximate tensorization because of Pinsker's inequality $\TV^2(P,Q) \lesssim \KL(P,Q)$ \citep{cover1999elements}. In the case of distributions on the hypercube, we have
\begin{align*} 
\TV^2(p(X_i \mid X_{\sim i}), q(X_i \mid X_{\sim i})) 
&= |p(X_i = +1 \mid X_{\sim i}) - q(X_i = +1 \mid X_{\sim i})|^2 \\
&= \E_{X_i \sim p_{X_i \mid X_{\sim i}}}|1(X_i = +1) - q(X_i = +1 \mid X_{\sim i})|^2 \\
&\quad- \E_{X_i \sim p_{X_i \mid X_{\sim i}}}|1(X_i = +1) - p(X_i = +1 \mid X_{\sim i})|^2
\end{align*}
where in the last step we used the Pythagorean theorem applied to the $p_{X_i \mid X_{\sim i}}$-orthogonal decomposition 
\begin{align*} 1(X_i = +1) - q(X_i = +1 \mid X_{\sim i}) 
&= [1(X_i = +1) - p(X_i = +1 \mid X_{\sim i})] \\
&\quad+ [p(X_i = +1 \mid X_{\sim i}) - q(X_i = +1 \mid X_{\sim i})]
\end{align*}
Hence, there exists a constant $K'_p$ not depending on $q$ such that 
\begin{equation} \sum_{i = 1}^d \E_{X_{\sim i} \sim p_{\sim i}} \TV^2(p(X_i \mid X_{\sim i}), q(X_i \mid X_{\sim i})) = K_p + M_p(q)
\end{equation}
where we define the ratio matching objective function to be
\begin{align} 
M_p(q) 
:= \sum_{i = 1}^d \E_{X \sim p}|1(X_i = +1) - q(X_i = +1 \mid X_{\sim i})|^2 
\end{align}
This objective is now straightforward to estimate from data, by replacing the expectation with the average over data. Analogous to before, we have the following proposition:
\begin{prop}
We have
\[ \KL(p,q) \le C_{AT2}(q) (M_p(q) - M_p(p)) \]
and more generally for any class $\mathcal P$ containing $p$, we have $\KL(p,q) \le C_{AT2}(q,\mathcal P) (M_p(q) - M_p(p))$.
\end{prop}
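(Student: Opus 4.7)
The proof will mirror the argument for Proposition~\ref{prop:discrete} almost verbatim, with the strengthened tensorization inequality \eqref{eqn:marton} playing the role of \eqref{eqn:at}. Starting from the definition of $C_{AT2}(q)$, I get
\[ \KL(p,q) \le C_{AT2}(q) \sum_{i=1}^d \E_{X_{\sim i} \sim p_{\sim i}} \TV^2\!\left(p(X_i \mid X_{\sim i}),\, q(X_i \mid X_{\sim i})\right), \]
and the only task left is to rewrite the right-hand side as $M_p(q) - M_p(p)$.

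To do that, I would quote the Pythagorean identity already derived in the excerpt: for each coordinate $i$,
\[ \E_{X_{\sim i}} \TV^2\!\left(p(X_i \mid X_{\sim i}),\, q(X_i \mid X_{\sim i})\right) = \E_{X \sim p}\bigl|1(X_i = +1) - q(X_i = +1 \mid X_{\sim i})\bigr|^2 - \E_{X \sim p}\bigl|1(X_i = +1) - p(X_i = +1 \mid X_{\sim i})\bigr|^2, \]
where the second term is precisely the $L^2(p)$-projection residual of $1(X_i = +1)$ onto $\sigma(X_{\sim i})$. Summing over $i$ gives exactly $M_p(q) - M_p(p)$, which identifies the constant $K_p$ from the excerpt's normalization as $K_p = -M_p(p)$. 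Substituting back yields $\KL(p,q) \le C_{AT2}(q)(M_p(q) - M_p(p))$, which is the first claim.

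For the restricted version, the same chain of steps works without modification. By definition, $C_{AT2}(q,\mathcal P)$ is the smallest constant such that \eqref{eqn:marton} holds for every $p \in \mathcal P$; the Pythagorean decomposition above is a pointwise statement about pairs of conditional distributions and does not interact with $\mathcal P$ at all. So applying the restricted inequality to $p$ and then invoking the same rewriting gives $\KL(p,q) \le C_{AT2}(q,\mathcal P)(M_p(q) - M_p(p))$.

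Honestly there is no real obstacle here: the proposition is essentially a bookkeeping consequence of the identity between conditional TV$^2$ and the $L^2$ gap already set up in the excerpt, combined with the assumed functional inequality. The only subtlety worth flagging in the writeup is the sign: one should note $M_p(q) - M_p(p) \ge 0$ (again by the Pythagorean identity, since the conditional mean is the $L^2$ projection), so the right-hand side is genuinely an upper bound on a nonnegative quantity and the statement is non-vacuous.
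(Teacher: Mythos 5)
Your proof is correct and matches the paper's own argument: the paper derives the Pythagorean decomposition of $\TV^2$ and the resulting identity $\sum_i \E_{X_{\sim i}} \TV^2 = M_p(q) - M_p(p)$ immediately before stating the proposition, and then simply plugs this into the strengthened tensorization inequality \eqref{eqn:marton}, exactly as you do (including the restricted-class version being immediate since the rewriting is independent of $\mathcal P$).
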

We now show how to rewrite $M_p(q)$ to match the formula from the original reference. Observe
\begin{align*}
M_p(q) = \frac{1}{4} \sum_{i = 1}^d \E_{X \sim p}|X_i - \E_q[X_i \mid X_{\sim i}]|^2
= \frac{1}{4} \sum_{i = 1}^d \E_{X \sim p}|1 - X_i \E_q[X_i \mid X_{\sim i}]|^2
\end{align*}
Observe that for any $z \in \{ \pm 1\}$ we have
\[ z \E_q[X_i \mid X_{\sim i}] = \frac{q(X_i = z \mid X_{\sim i}) - q(X_i = -z \mid X_{\sim i})}{q(X_i = z \mid X_{\sim i}) + q(X_i = -z \mid X_{\sim i})}\]
and
\begin{align*} 1 - z \E_q[X_i \mid X_{\sim i}] 
&= \frac{2 q(X_i = -z \mid X_{\sim i})}{q(X_i = z \mid X_{\sim i}) + q(X_i = -z \mid X_{\sim i})} \\
&=\frac{2}{1 + q(X_i = z \mid X_{\sim i})/q(X_i = -z \mid X_{\sim i})}. 
\end{align*}
Also for $z \in \{ \pm 1\}^d$ we have $q(X_i = z_i \mid X_{\sim i} = z_{\sim i})/q(X_i = -z_i \mid X_{\sim i} = z_{\sim i}) = q(z)/q(z_{-i})$ where $z_{-i}$ reprsents $z$ with coordinate $i$ flipped, so
\[ M_p(q) = \sum_{i = 1}^d \E_{X \sim p} \left(\frac{1}{1 + q(X)/q(X_{-i})}\right)^2 \]
which matches the formula in Theorem 1 of \cite{hyvarinen2007some}.

Summarizing, minimizing the ratio matching objective makes the right hand side of the strengthened tensorization estimate \eqref{eqn:marton} small, so when $C_{AT2}(q)$ is small it will imply successful distribution learing in KL. (The obvious variant of Theorem~\ref{thm:finite-sample-discrete} will therefore hold.)
In this way ratio matching can also be understood as a relaxation of maximum likelihood. 

\section{Related work}

Score matching was originally introduced by \cite{hyvarinen2005estimation}, who also proved that the estimator is asymptotically consistent. In \citep{hyvarinen2007some}, the authors propose estimators that are defined over bounded domains. \citep{song2019generative} scaled the techniques to neurally parameterized energy-based models, leveraging score matching versions like denoising score matching \cite{vincent2011connection}, which involves an annealing strategy by convolving the data distribution with Gaussians of different variances, and sliced score matching \citep{song2020sliced}. The authors conjectured that annealing helps with multimodality and low-dimensional manifold structure in the data distribution --- and our paper can be seen as formalizing this conjecture.

The connection between Hyv\"arinen's score matching objective and the relative Fisher information in \eqref{eq:klf} is known in the literature --- see e.g.\ \new{\citep{shao2019bayesian,nielsen2021fast,barp2019minimum,vempala2019rapid,yamano2021skewed}}. \new{Relatedly, \cite{hyvarinen2007connections} pointed out some connections between the score matching objective and contrastive divergence using the lens of Langevin dynamics.} 
We also remark that since $\I(p | q) = - \frac{d}{dt} \KL(p_t, q)\mid_{t = 0}$ for $p_t$ the output of Langevin dynamics at time $t$, score matching can be interpreted as finding a $q$ to minimize the contraction of the Langevin dynamics for $q$ started at $p$. 
Previously, \citep{guo2009relative,lyu2009interpretation}  observed that the score matching objective can be interpreted as the infinitesimal change in KL divergence as we add Gaussian noise --- see Appendix~\ref{apdx:lyu} for an explanation why these two quantities are equal. \new{We note that \cite{hyvarinen2008optimal} also gave a related interpretation of score matching in terms of adding an infinitesimal amount of Gaussian noise.}

In the discrete setting, it was recently observed that approximate tensorization has applications to identity testing of distributions in the ``coordinate oracle'' query model \citep{blanca2022identity}, which is another application of approximate tensorization outside of sampling otherwise unrelated to our result.
Finally, \citep{block2020generative, lee2022convergence} show guarantees on running Langevin dynamics, given estimates on $\nabla \log p$ that are only $\epsilon$-correct in the $L_2(p)$ sense. They show that when the Langevin dynamics are run for some moderate amount of time, 
the drift between the true Langevin dynamics (using $\nabla \log p$ exactly) and the noisy estimates can be bounded. Recent concurrent works \citep{lee2022convergence2,chen2022sampling} show results of a similar flavor for denoising diffusion model score matching, specifically when the forward SDE is an Ornstein-Uhlenbeck process.

\section{Simulations}\label{sec:simulations}

\subsection{Exponential family experiments. }
\paragraph{Fitting a bimodal distribution with a cut statistic.} 
First, we show the result of fitting a bimodal distribution (as in Example~\ref{example:bad}) from an exponential family. In Figure~\ref{fig:bimodal}, the difference of the two sufficient statistics we consider corresponds to the cut statistic used in our negative result (Theorem~\ref{thm:inefficiency}). As predicted (Corollary~\ref{corr:example})  score matching performs poorly compared to the MLE as the distance between modes grows.

In Figure~\ref{fig:bimodal_extra}, we illustrate the distribution of the errors in the bimodal experiment with the cut statistic. As expected based on the theory, the direction where score matching with large offset performs very poorly corresponds to the difference between the two sufficient statistics, which encodes the sparse cut in the distribution.

\paragraph{Fitting a bimodal distribution without a cut statistic.} 
 In Figure~\ref{fig:bimodalgood} we show the result of fitting the same bimodal distribution using score matching, but we remove the second sufficient statistic (which is correlated with the sparse cut in the distribution). In this case, score matching fits the distribution nearly as well as the MLE. This is consistent with our theory (e.g.\ the failure of score matching in Theorem~\ref{thm:inefficiency} requires that we have a sufficient statistic approximately representing the cut) and justifies some of the distinctions we made in our results: even though the Poincar\'e constant is very large,  the asymptotic variance of score matching within the exponential family is upper bounded by the \emph{restricted} Poincar\'e constant (see Theorem~\ref{thm:poincare}) which is much smaller.

\begin{example}[Application of Theorem~\ref{thm:poincare} to this example]\label{example:theorem2}
To briefly expand the last point, we show how to apply Theorem~\ref{thm:poincare} in this example (Example~\ref{example:bad}, where we have \textbf{not} added a bad cut statistic.)
The restricted Poincar\'e constant for applying Theorem~\ref{thm:poincare} will be
\begin{equation}\label{eqn:restricted-poincare-example}
C := \frac{\Var(F_1(X))}{\E (F'_1(X))^2} = \frac{\Var(X^2 - X^4/2a^2)}{\E (2X - 2X^3/a^2)^2} 
\end{equation}
which asymptotically goes to a constant, rather than blowing up exponentially, as $a$ goes to infinity. (This can be made formal using arguments as in the proof of Corollary~\ref{corr:example}; informally, the distribution is similar to a mixture of two standard Gaussians centered at $\pm a$ so the numerator is close to $\Var_{Z \sim N(0,1)}((a + Z)^2 - (a + Z)^4/2a^2) = \Var(2aZ + Z^2 - (4aZ + 6Z^2 + 4Z^3/a + Z^4)/2) = \Theta(1)$ and the denominator is approximately $\E_{Z \sim N(0,1)} (2(a + Z) - 2(a + Z)^3/a^2)^2 = \E (2Z - 2(3Z + 3Z^3/a + Z^3/a^2))^2 = \Theta(1)$.)

\new{Given this bound on the restricted Poincar\'e constant, we can apply Theorem~\ref{thm:poincare}. Based on similar reasoning to above, one can show that $\E F_1'(X)^4 = (-1/4a^2)^4 \E ((X - a)(X + a)^2 + (X - a)^2(X + a))^4 = \Theta(1)$ and $\E F_1''(X)^2 = \E (-3x^2/2a^2 + 1/2)^2 = \Theta(1)$, so we conclude that $\|\Gamma_{\text{SM}}\|_{OP} = O(\|\Gamma_{\text{MLE}}\|_{OP}^2)$. This proves that score matching will perform not much worse than the MLE, as we saw in the experimental result of Figure~\ref{fig:bimodalgood}.}
\end{example}
\begin{remark}
\new{
Example~\ref{example:theorem2} shows a case where there is a large gap between the restricted and unrestricted Poincar\'e constants. This also implies a completely analogous gap between appropriate restricted and unrestricted log-Sobolev constants, as used e.g.\ in the context of Theorem~\ref{thm:finitesample}. To elaborate, we know that the unrestricted log-Sobolev constant blows up exponentially in $a$, just like the unrestricted Poincar\'e constant, because $C_{LS} \ge C_P/2$ \citep{van2014probability}. On the other hand, if we fix the ground truth distribution $p_a$ consider the class of distributions
\[ \mathcal{P}_r = \{ p_{a'} : |a - a'| \le r \}, \]
we have that
\[ \lim_{r \to 0} C_{LS}(q, \mathcal P_r) = C/2 \]
where $C$ is the constant defined in \eqref{eqn:restricted-poincare-example} in terms of $a$ (and which is $O(1)$ as $a \to \infty$). This is because from the definition as an exponential family, we have
\[ p_a(x)/p_{a'}(x) = \frac{\exp\left((a - a') F_1(x) \right)}{\E_{a'}\exp\left((a - a') F_1(x) \right)}  \]
so
\[ \lim_{a' \to a} \frac{\KL(p_a, p_{a'})}{\mathcal I(p_a \mid p_{a'})} = \lim_{a' \to a} \frac{(a - a')^2 \Var_{p_{a'}}(F_1(x))}{2(a - a')^2\E_{p_{a'}} \|\nabla F_1(x)\|^2} = C/2 \]
where the first equality is by a standard Taylor expansion argument (see proof of Lemma 3.28 of \citep{van2014probability}).}
\end{remark}

\paragraph{Fitting a unimodal distribution with rapid oscilation.} 
In Figure~\ref{fig:oscillate}, we demonstrate what happens when the distribution is unimodal (and has small isoperimetric constant), but the sufficient statistic is not quantitatively smooth. More precisely, we consider the case $p_{\theta} (x) \propto e^{- \theta_0 x^2 / 2 - \theta_1 \sin(\omega x)}$ as $\omega$ increases. In the figure,  we used the formulas from asymptotic normality to calculate the distribution over parameter estimates from 100,000 samples. We also verified via simulations that the asymptotic formula almost exactly matches the actual error distribution.

 The result is that while the MLE can always estimate the coefficient $\theta_1$ accurately, score matching performs much worse for large values of $\omega$. This demonstrates that the dependence on smoothness in our results (in particular, Theorem~\ref{thm:poincare}) is actually required, rather than being an artifact of the proof. Conceptually, the reason score matching fails even when though the distribution has no sparse cuts is this: the gradient of the log density becomes harder to fit as the distribution becomes less smooth (for example, the Rademacher complexity from Theorem~\ref{thm:finitesample} will become larger as it scales with $\nabla_x \log p$ and $\nabla^2_x \log p$).

\subsection{Score matching with neural networks}

\paragraph{Fitting a mixture of Gaussians with a one-layer network.} 
We also show that empirically, our results are robust even beyond exponential families. 
In Figure~\ref{fig:network} we show the results of fitting a mixture of two Gaussians via score matching\footnote{We note that this experiment is similar in flavor to plots in (Figure 2) in \cite{song2019generative}, where they show that the score is estimated poorly near the low-probability regions of a mixture of Gaussians. In our plots, we numerically integrate the estimates of the score to produce the pdf of the estimated distribution.}  , where the score function is parameterized as a one hidden-layer network with tanh activations. We see that the predictions of our theory persist: the distribution is learned successfully when the two modes are close and is not when the modes are far. This matches our expectations, since the Poincar\'e, log-Sobolev, and isoperimetric constants  blow up exponentially in the distance between the two modes (see e.g. \cite{chen2021dimension}) and the neural network is capable of detecting the cut between the two modes. 

In the right hand side example (the one with large separation between modes), the shape of the two Gaussian components is learned essentially perfectly --- it is only the relative weights of the two components which are wrong. This closely matches the idea behind the proof of the lower bound in Theorem~\ref{thm:inefficiency}; informally, the feedforward network can naturally represent a function which detects the cut between the two modes of the distribution, i.e. the additional bad sufficient statistic $F_2$ from Theorem~\ref{thm:inefficiency}. The fact that the shapes are almost perfectly fit where the distribution is concentrated indicates that the \emph{test loss} $J_p$ is near its minimum. Recall from \eqref{eqn:hyvarinen} that the suboptimality of a distribution $q$ in score matching loss is given by
$J_p(q) - J_p(p) = \E_p \|\nabla \log p - \nabla \log q\|^2$.
If we let $q$ be the distribution recovered by score matching, we see from the figure that the slopes of the distribution were correctly fit wherever $p$ is concentrated, so $\E_p \|\nabla \log p - \nabla \log q\|^2$ is small. However near-optimality of the test loss $J_p(q)$ does not imply that $q$ is actually close to $p$: the test loss does not heavily depend on the behavior of $\log q$ in between the two modes, but the value of $\nabla \log q$ in between the modes affects the relative weight of the two modes of the distribution, leading to failure. 

Both models illustrated in the figure have 2048 $\tanh$ units and are trained via SGD on fresh samples for 300000 steps. After training the model, the estimated distribution is computed from the learned score function using numerical integration.

\section{Conclusion} 
In this paper, we 
\new{studied} the statistical efficiency of score matching and identified a close connection to functional inequalities which characterize the ergodicity of Langevin dynamics. For future work, it would be interesting to characterize formally the improvements conferred by annealing strategies like \citep{song2019generative}, like it has been done in the setting of sampling using Langevin dynamics \citep{lee2018}. 

\paragraph{Acknowledgements.} \new{We are grateful to Lester Mackey and Aapo Hyv\"arinen, as well as to the anonymous reviewers, for feedback on an earlier draft.}

\begin{figure}[h!]
    \centering
         \includegraphics[width=0.6\textwidth]{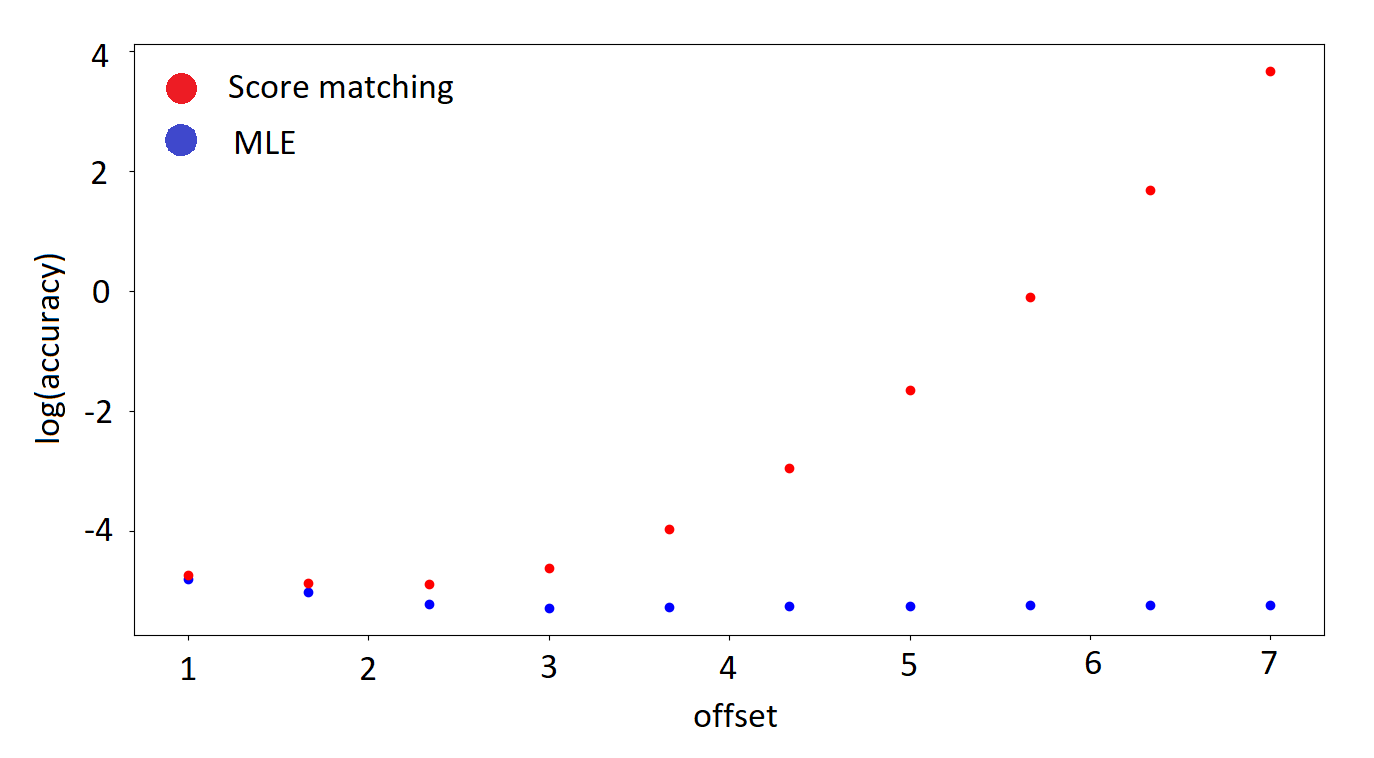}
     \caption{Statistical efficiency of score matching vs MLE for fitting the distribution with ground truth parameters $(\theta_0, \theta_1) = (1,0)$ of the form  $p_{\theta} (x) \propto e^{\theta_0 (x^2 - x^4 / (2 a^2)) + \theta_1 (x^2 - x^4 / (2 a^2) + \text{erf}(x))}$ 
     as we vary the offset $a$ between 1 and 7 and train with fixed number of samples ($10^5$). 
     We see score matching (red) performs very poorly compared to the MLE (blue) as the offset (distance between modes) grows, by plotting the log of the Euclidean distance to the true parameter for both estimators. 
    }
    \label{fig:bimodal}
\end{figure}

\begin{figure}[h!]
    \centering
         \includegraphics[width=0.9\textwidth,clip,trim={2.5cm 1cm 3cm 1.5cm}]{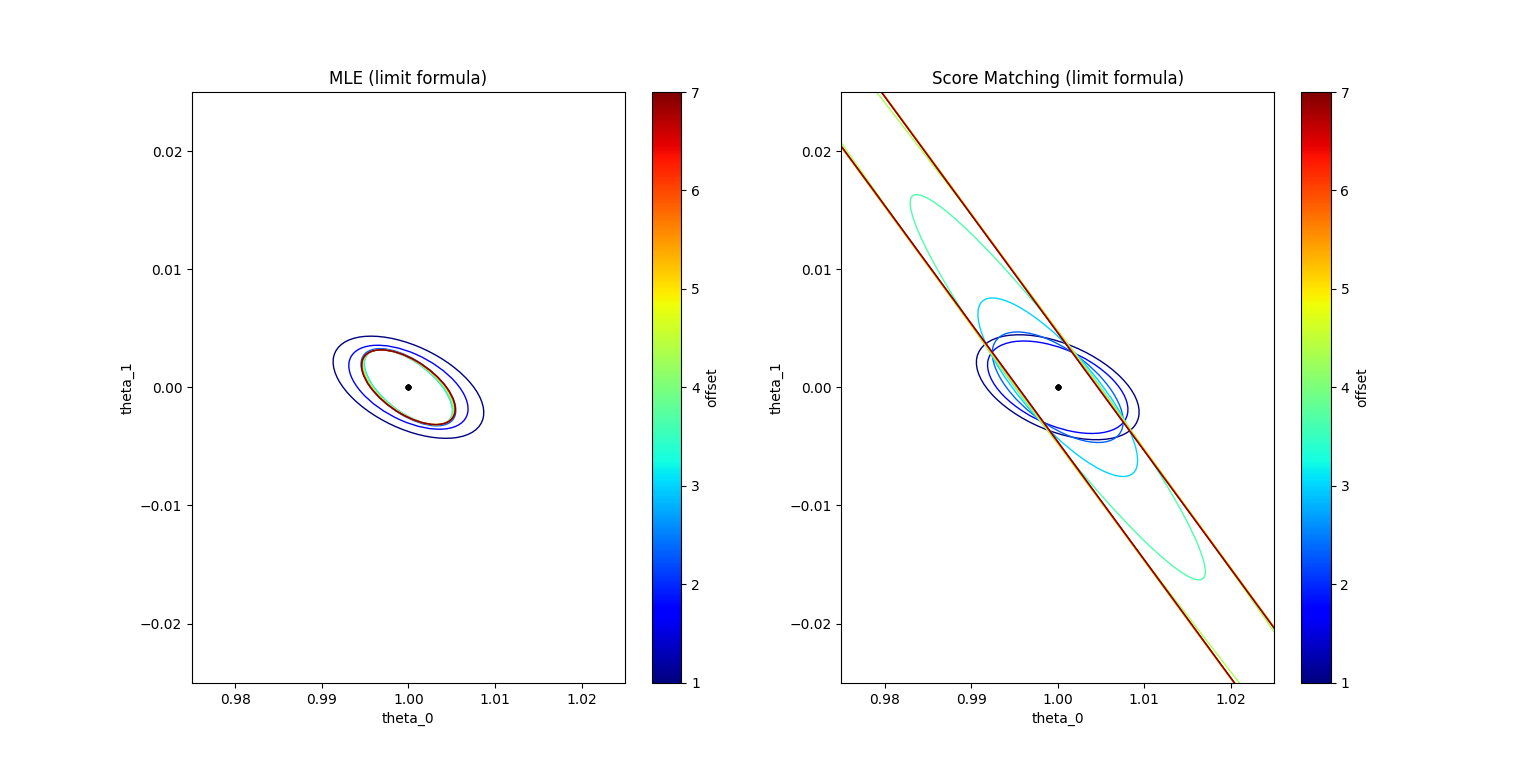}
    \caption{Level sets for the distribution over estimates in the same example as Figure~\ref{fig:bimodal}. We see that as the distance $a$ between modes increases, the direction of large variance for the score matching estimator (right figure) corresponds to the difference of the sufficient statistics which encodes the sparse cut in the distribution. On the other hand, the MLE (left figure) does not exhibit this behavior and has low variance in all directions.}
    \label{fig:bimodal_extra}
\end{figure}

\begin{figure}[h!]
    \centering
    \includegraphics[scale=0.8]{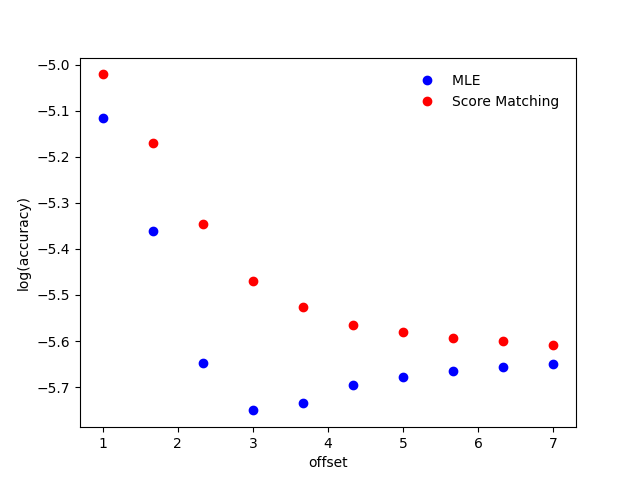}
         \caption{Here we see the result of running an identical experiment to Figure~\ref{fig:bimodal}, only we remove the second sufficient statistic, so our distribution is now $p_{\theta} (x) \propto e^{\theta_0 (x^2 - x^4 / (2 a^2))}$ where $\theta_0 = 1$ and we again vary the offset $a$ between 1 and 7. With only the single sufficient statistic, score matching performs comparably to MLE.}
    \label{fig:bimodalgood}
\end{figure}
\begin{figure}[h!]
     \begin{subfigure}[b]{0.5\textwidth}
         \centering
         \includegraphics[scale=0.5]{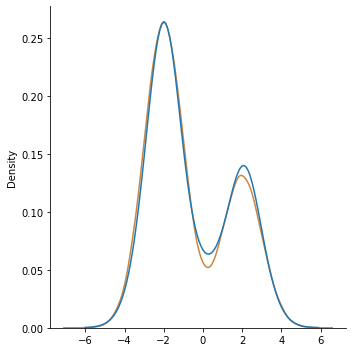}
     \end{subfigure}
     \hfill
     \begin{subfigure}[b]{0.5\textwidth}
         \centering
         \includegraphics[scale=0.5,clip]{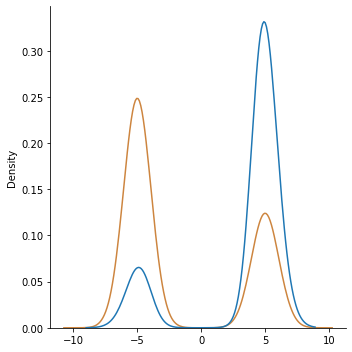}
     \end{subfigure}
    \caption{Training a single hidden-layer network 
    to score match a mixture of Gaussians (ground truth orange, score matching output blue) succeeds at learning the distribution when the modes are close (left, small isoperimetric constant), but not when they are distant (right, large isoperimetric constant) in which case it weighs the modes incorrectly.}
    \label{fig:network}
\end{figure}
\begin{figure}[h!]
    \centering
     \begin{subfigure}[b]{\textwidth}
         \centering
         \includegraphics[width=0.9\textwidth,clip,trim={2cm 0 2cm 0}]{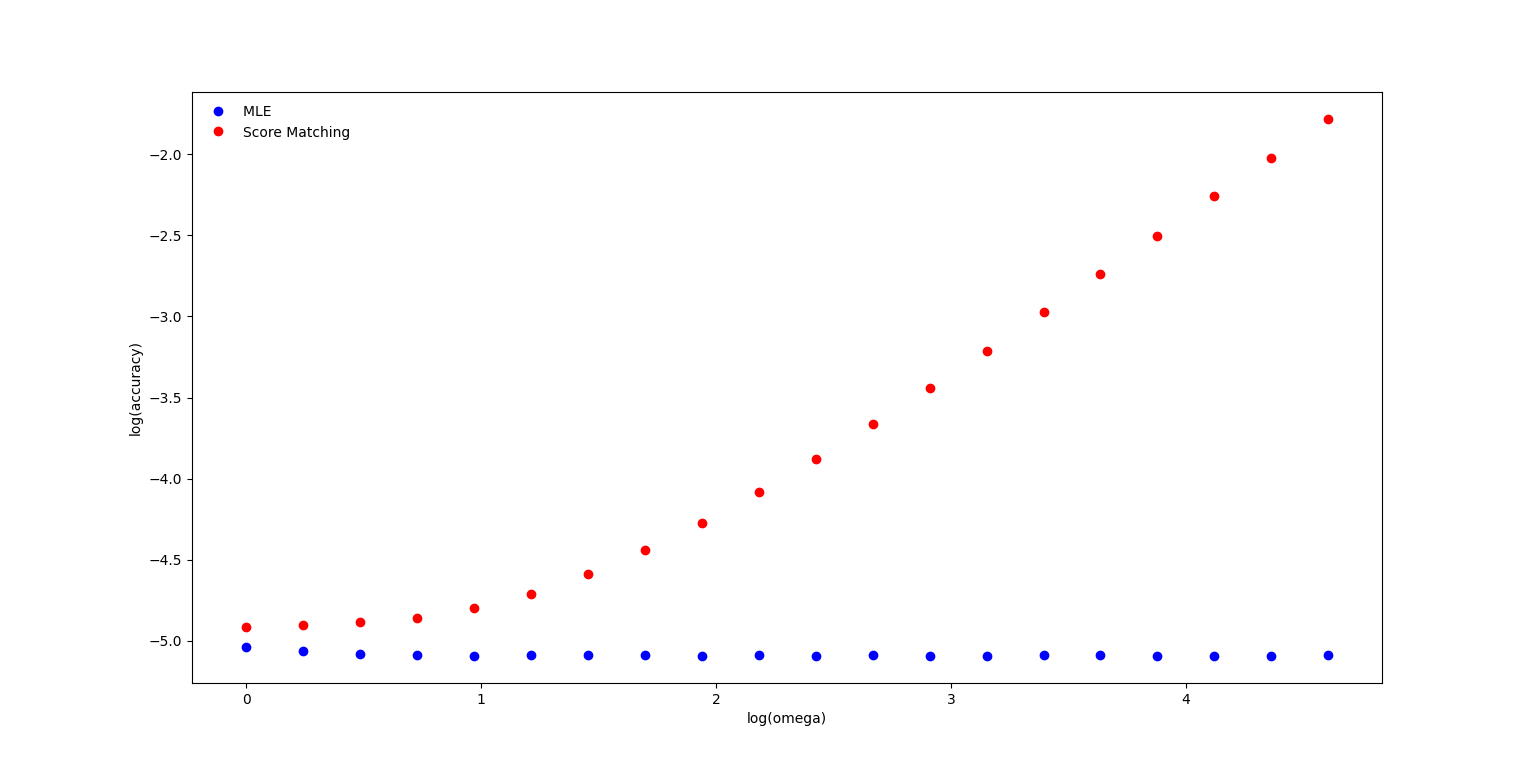}
     \end{subfigure}
     \begin{subfigure}[b]{\textwidth}
         \centering
         \includegraphics[width=0.9\textwidth,clip,trim={3cm 0 3cm 0}]{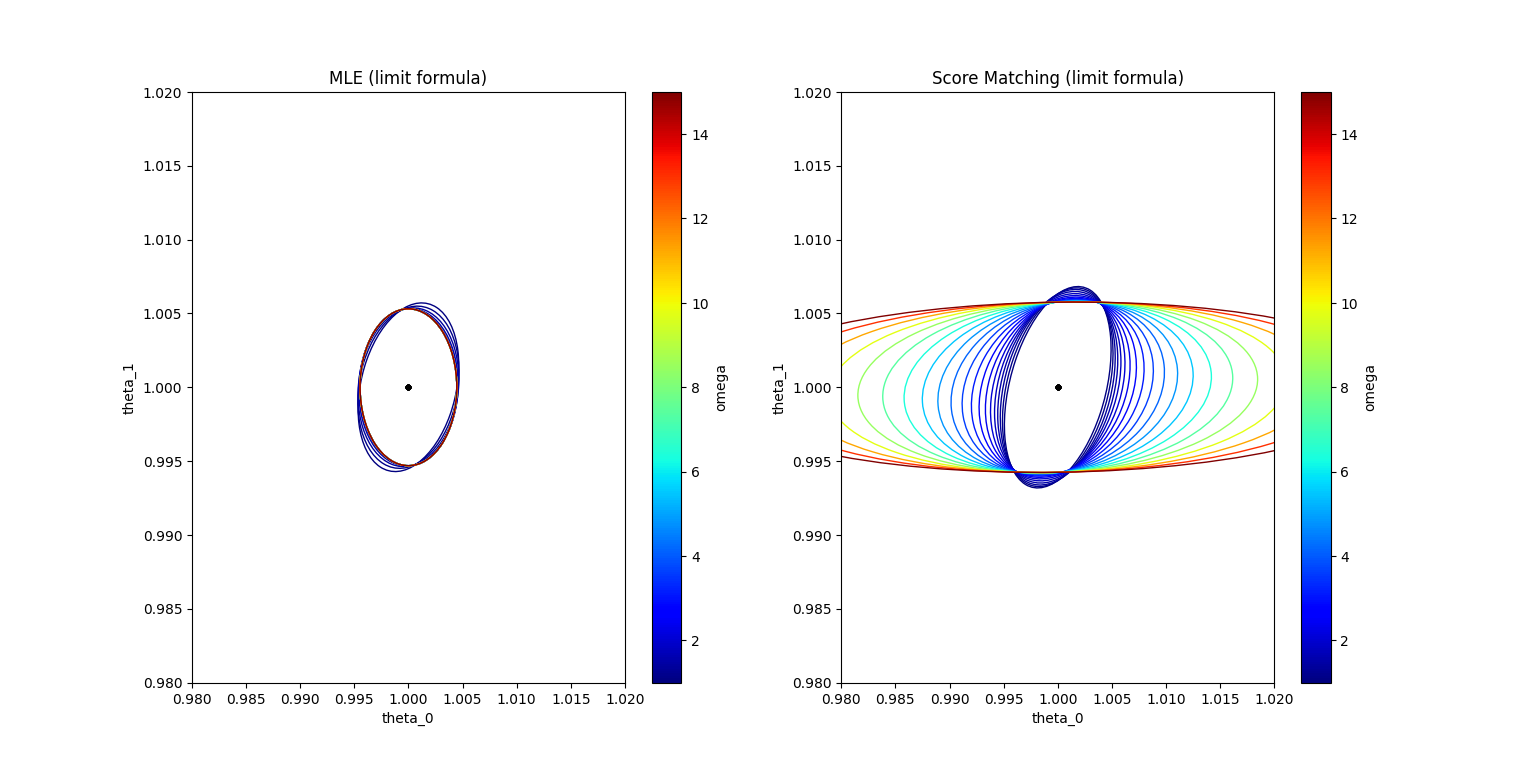}
     \end{subfigure}

    \caption{Score matching vs MLE for a distribution with a rapidly oscillating sufficient statistic, $p_{\theta} (x) \propto e^{- \theta_0 x^2 / 2 - \theta_1 \sin(\omega x)}$ where $(\theta_0, \theta_1) = (1, 1)$, and increasing $\omega$. 
   On the top, for increasing $\omega$ we show a log-log plot of the average Euclidean distance in parameter space between $\theta$ and the output of each estimator. 
    On the bottom, for each value of $\omega$, we draw a level set of the distribution within which a fixed fraction of returned estimates lie (MLE left, score matching right).
    Score matching becomes increasingly inaccurate as $\omega$ increases while the MLE stays extremely accurate. }
    \label{fig:oscillate}
\end{figure}

\clearpage
\bibliographystyle{plainnat}
\bibliography{bib}
\appendix

\section{Recovering an interpretation of score matching}\label{apdx:lyu}
We remarked that if we use the fact $\I(p | q) = - \frac{d}{dt} \KL(p_t, q)\mid_{t = 0}$, the score matching objective has a natural interpretation in terms of select $q$ to minimize the contraction of the Langevin dynamics for $q$ started at $p$. On the other hand, \new{ \cite{guo2009relative} and }\cite{lyu2009interpretation} previously observed that the score matching objective can be interpreted as the infinitesimal change in KL divergence between $p$ and $q$ as we add noise to both of them, \new{which is closely related to the de Bruijn identity}.
We now explain why these two quantities are equal by giving a proof of their equality (which is shorter than the one you get by going through the proof in  \cite{lyu2009interpretation}). 

Before giving the formal proof, we give some intuition for why the statement should be true. The Langevin dynamics approximately adds a noise of size $N(0,2t)$ and subtracts a gradient step along $\nabla \log q$, and this dynamics preserves $q$.
For small $t$, the gradient step is essentially reversible and preserves the KL. So heuristically, reversing the gradient step gives  $KL(p_t,q) \approx KL(N(0,2t) \ast p, N(0,2t) \ast q)$. We now give the formal proof.
\begin{lemma}
Assuming smooth probability densities $p(x)$ and $q(x)$ decay sufficiently fast at infinity,
\[ \frac{d}{dt} KL(p_t, q)\Big|_{t = 0} = \frac{d}{dt} KL(p \ast N(0,2t), q \ast N(0,2t))\Big|_{t = 0} \]
where $\ast$ denotes convolution. 
\end{lemma}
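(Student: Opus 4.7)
The plan is to verify that both sides of the claimed identity equal $-\I(p \mid q)$, which forces them to be equal to each other. For the right-hand side, this has already been recalled in the paper (the identity $\I(p \mid q) = -\frac{d}{dt} \KL(p_t, q)\mid_{t=0}$ stated just before \eqref{kl:exponential}). So the real work is in computing the left-hand side directly.

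Set $u_t := p \ast N(0,2t)$ and $v_t := q \ast N(0,2t)$. The first step is to observe that each of $u_t$ and $v_t$ solves the heat equation $\partial_t u_t = \Delta u_t$, $\partial_t v_t = \Delta v_t$, which is the standard characterization of Gaussian convolution as the heat semigroup. Since convolution preserves total mass, $\int \partial_t u_t\,dx = 0$. Differentiating under the integral sign (justified by the assumed decay) then yields
\begin{equation*}
\frac{d}{dt} \KL(u_t, v_t) = \int (\partial_t u_t)\, \log\tfrac{u_t}{v_t}\, dx \;-\; \int u_t\, \frac{\partial_t v_t}{v_t}\, dx.
\end{equation*}

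The second step is to substitute the heat equation and integrate by parts twice. Expanding $\nabla \log(u_t/v_t) = \nabla u_t/u_t - \nabla v_t/v_t$ in the first integral and $\nabla(u_t/v_t) = \nabla u_t/v_t - u_t\nabla v_t/v_t^2$ in the second, the two ``cross'' terms of the form $\int \nabla u_t \cdot \nabla v_t/v_t\, dx$ add up while the ``diagonal'' terms combine to produce a perfect square, leaving
\begin{equation*}
\frac{d}{dt} \KL(u_t, v_t) = -\int u_t\, \|\nabla \log u_t - \nabla \log v_t\|^2\, dx = -\I(u_t \mid v_t).
\end{equation*}
Evaluating at $t = 0$ gives $u_0 = p$, $v_0 = q$, and hence the derivative equals $-\I(p \mid q)$, matching the right-hand side.

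The main obstacle is not conceptual but technical: one must rigorously justify differentiation under the integral sign and the vanishing of the boundary terms in both integration-by-parts steps. Both are standard under the assumed decay of $p$ and $q$, exploiting the smoothing property of Gaussian convolution (so that for any $t > 0$ the densities $u_t, v_t$ and all their derivatives decay rapidly), though a little care is needed to pass to the limit $t \to 0$. A clean way to package this would be to first prove the identity $\frac{d}{dt}\KL(u_t, v_t) = -\I(u_t \mid v_t)$ for all $t > 0$ and then invoke continuity of $t \mapsto \I(u_t \mid v_t)$ at $t = 0$.
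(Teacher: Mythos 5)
Your proof is correct, and it takes a genuinely different route from the paper's. The paper computes both derivatives explicitly --- the Langevin one via the generator $L$ and the quotient rule for $\Delta(p/q)$, the heat-flow one via the Fokker--Planck equation --- and then shows their difference vanishes via a single integration by parts. You instead identify a common value: the right-hand side equals $-\I(p\mid q)$ by the de Bruijn-type calculation (heat equation, mass conservation, two integrations by parts producing a perfect square), and the left-hand side equals $-\I(p\mid q)$ by the standard de Bruijn identity for Langevin flow that the paper already quotes. Your approach is cleaner and more conceptual --- it makes explicit that both interpretations of the score matching loss are the same object, the relative Fisher information, rather than just showing two messy expressions cancel --- and it avoids having to track the quotient-rule cross-terms the paper wrestles with. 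The paper's version is self-contained (it doesn't lean on the already-quoted Langevin identity) but arrives at essentially the same algebra in a less transparent arrangement. One small technical caveat you correctly flag: the lemma asserts an identity at $t=0$, so one should either prove the heat-flow identity $\frac{d}{dt}\KL(u_t,v_t) = -\I(u_t\mid v_t)$ directly at $t=0$ (differentiating from the right), or establish it for $t>0$ and argue continuity of $\I(u_t\mid v_t)$ as $t\to 0^+$ under the stated decay assumptions; either is routine but worth saying.
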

\begin{proof}
Recalling from Section~\ref{sec:background} that $H_t = e^{tL}$ we have that
$\frac{d}{dt} \frac{p_t}{q} = \frac{d}{dt} H_t \frac{p}{q} = L \frac{p}{q}$. 
Since $KL(p_t,q) = \E_q[\frac{p_t}{q} \log \frac{p_t}{q}]$ and $\frac{d}{dx} [x \log x] = \log x + 1$, it follows by the chain rule that
\begin{align*} 
\frac{d}{dt} KL(p_t, q) = \E_q\left[\left(\log \frac{p}{q} + 1\right)L \frac{p}{q}\right] 
&=  \E_q\left[\left(\log \frac{p}{q} + 1\right)\left( \langle \nabla \log q, \nabla \frac{p}{q} \rangle + \Delta \frac{p}{q}\right) \right] \\
&= \E_q\left[\left(\log \frac{p}{q} + 1\right)\left( -\langle \nabla \log q, \nabla \frac{p}{q} \rangle + \frac{\Delta p}{q} - \frac{p \Delta q}{q^2}\right) \right]
\end{align*}
where in the last step we used the quotient rule $\Delta \frac{p}{q} = \frac{\Delta p}{q} - 2 \left\langle \nabla \log q, \nabla \frac{p}{q} \right\rangle - \frac{p \Delta q}{q^2}$.
On the other hand, by using the Fokker-Planck equation $\frac{\partial}{\partial t} (p * N(0,2t)) =  \Delta p$ (Lemma 2 of \cite{lyu2009interpretation}) and the chain rule we have
\begin{align*} 
\frac{d}{dt} KL(p \ast N(0,2t), q \ast N(0,2t)) 
&= \frac{d}{dt} \int (q \ast N(0,2t)) \frac{p \ast N(0,2t)}{q \ast N(0,2t)}\log \frac{p \ast N(0,2t)}{q \ast N(0,2t)} dx \\
&=  \int (\Delta q) \frac{p}{q} \log \frac{p}{q} dx  + \E_q \left[\left(\log \frac{p}{q} + 1\right)\left(\frac{\Delta p}{q} - \frac{p \Delta q}{q^2}\right) \right]
\end{align*}
Since by the chain rule and integration by parts we have
\[ \E_q\left[\left(\log \frac{p}{q} + 1\right) \left\langle \nabla \log q, \nabla \frac{p}{q} \right\rangle\right] = \int \left[\left\langle \nabla q, \nabla \frac{p}{q} \log \frac{p}{q} \right\rangle\right] dx = -\int (\Delta q) \frac{p}{q} \log \frac{p}{q} dx, \]
we see that the two derivatives are indeed equal. 
\end{proof}

\section{Proof of Theorem~\ref{thm:inefficiency} and Applications}\label{a:inefficiency}
We restate Theorem~\ref{thm:inefficiency} for the reader's convenience and in a slightly more detailed form (we include an upper bound on the covariance of the MLE error which follows from the proof). 
\begin{theorem}[Inefficiency of score matching in the presence of sparse cuts, Restatement of Theorem~\ref{thm:inefficiency}]\label{thm:inefficiency-long}
There exists an absolute constant $c > 0$ such that the following is true. Suppose that $p_{\theta^*_1}$ is an element of an exponential family with sufficient statistic $F_1$ and parameterized by elements of $\Theta_1$. 
Suppose $S$ is a set with smooth and compact boundary $\partial S$. Let $\tau_{\partial S} > 0$ denote the \emph{reach} of $\partial S$ (see Section~\ref{sec:background})
Suppose that $1_S$ is not an affine function of $F_1$, so there exists $\delta_1 > 0 $ such that
\begin{equation}\label{eqn:delta_1}
\sup_{w_1 : \Var(\langle w_1, F_1 \rangle) = 1} \Cov\left(\langle w_1, F_1 \rangle, \frac{1_S}{\sqrt{\Var(1_S)}}\right)^2 \le 1 - \delta_1 . 
\end{equation}
Suppose that $\gamma > 0$ satisfies $\gamma < \min \left\{ \frac{c^d}{ (1 + \|\theta_1\|) \sup_{x : d(x, \partial S) \le \gamma}\|(J F_1)_x\|_{OP}}, c\frac{\tau_{\partial S}}{d} \right\}$
and is small enough so that
$0 < \delta := 1 - \left(\sqrt{1 - \delta_1} + 2\sqrt{\frac{\gamma \int_{x \in \partial S} p(x) dx}{\Pr(X \in S)(1 - \Pr(X \in S))}}\right)^2$.
Define an additional sufficient statistic
$F_2 = 1_S \ast \psi_{\gamma}$
so that the enlarged exponential family contains distributions of the form 
\[ p_{(\theta_1,\theta_2)}(x) \propto \exp(\langle \theta_1, F_1(x) \rangle + \theta_2 F_2(x)) \]
and consider the MLE and score matching estimators in this exponential family with ground truth $p_{(\theta^*_1,0)}$.

Then the asymptotic renormalized covariance matrix $\Gamma_{MLE}$ of the MLE is bounded above as
\[ \Gamma_{MLE} \preceq \frac{1}{1 - \delta} \begin{bmatrix} \Sigma_{F_1}^{-1} & 0 \\ 0 & \frac{1}{\Pr(X \in S)(1 - \Pr(X \in S))} \end{bmatrix} \]
and there there exists some $w$  so that
 the relative (in)efficiency 
of the score matching estimator compared to the MLE for estimating $\langle w, \theta \rangle$ admits the following lower bound
\[  \frac{\langle w, \Gamma_{SM} w \rangle}{\langle w, \Gamma_{MLE} w \rangle} \ge \frac{c'}{\gamma} \frac{\min\{\Pr(X \in S), \Pr(X \notin S)\}}{\int_{x \in \partial S} p(x) dx} \]
where 
$c' := \frac{\delta c^d}{1 + \|\Sigma_{F_1}\|_{OP}}$. 
\end{theorem}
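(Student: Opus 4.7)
The plan is to split the relative efficiency into an upper bound on $\langle w, \Gamma_{MLE} w \rangle$ and a lower bound on $\langle w, \Gamma_{SM} w \rangle$, both evaluated at the direction $w := M e_{m+1}$ with $M := \E[(JF)_X(JF)_X^\top]$ suggested in the paper's sketch. This choice collapses the sandwich $\Gamma_{SM} = M^{-1}\Sigma_V M^{-1}$ into
\[
\langle w, \Gamma_{SM} w\rangle \;=\; e_{m+1}^\top \Sigma_V e_{m+1} \;=\; \Var\!\bigl((\nabla F_2)_X^\top (JF_1)_X^\top \theta_1^* + \Delta F_2\bigr),
\]
where $\theta_2^* = 0$ dropped the $F_2$ self-contribution to $(JF)(JF)^\top\theta$. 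Because $F_2 = 1_S \ast \psi_\gamma$ is locally constant outside the tube $T_\gamma := \{x : d(x,\partial S) \le \gamma\}$, both $\nabla F_2$ and $\Delta F_2$ are supported in $T_\gamma$ and the variance localizes there.

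For the score-matching lower bound, consistency of score matching gives $\E V = 0$, so $\Var(V_{m+1}) = \E[V_{m+1}^2]$. The mollifier identities from the preliminaries yield $\|\nabla F_2\|_\infty \lesssim 8^d/\gamma$ and $\|\Delta F_2\|_\infty \lesssim 8^d/\gamma^2$; the smallness hypothesis $\gamma \lesssim c^d/((1+\|\theta_1\|)\sup_{T_\gamma}\|JF_1\|_{OP})$ is precisely what forces the $\Delta F_2$ term to dominate the drift $(\nabla F_2)^\top (JF_1)^\top \theta_1^*$ pointwise on $T_\gamma$. Normalizing $\|w\|^2 = (M^2)_{m+1,m+1} \le \sup_{T_\gamma}\|JF\|_{OP}^2 \cdot M_{m+1,m+1}$ via Cauchy--Schwarz across rows of $JF$, using the support of $\nabla F_2$, reproduces the intermediate estimate from Lemma~\ref{lem:sm-bad-initial} stated in the sketch.

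Converting this localized estimate into the theorem is the differential-geometric step. The reach hypothesis $\gamma \lesssim \tau_{\partial S}/d$ supplies smooth tube coordinates $(y, s) \in \partial S \times (-\gamma, \gamma)$ via the nearest-point projection, with Jacobian $1 + O(\gamma d / \tau_{\partial S})$ (this is Lemma~\ref{lem:prop61}, formalizing local flatness). Replacing $S$ by its tangent halfspace at each $y \in \partial S$ makes the one-dimensional profile of $F_2$ along the inward normal explicitly computable, and the coarea formula converts the volume integral into
\[
\E[V_{m+1}^2] \;\gtrsim\; \frac{1}{\gamma} \int_{\partial S} p(x)\, dS(x)
\]
modulo dimensional mollifier constants; together with $\Pr(X \in T_\gamma) \lesssim \gamma \int_{\partial S} p(x)\, dS$ this gives $\langle w, \Gamma_{SM} w\rangle/\|w\|^2 \gtrsim c^d/(\gamma \int_{\partial S} p(x)\, dS)$. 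The main obstacle here is that the tangential components of $(JF_1)^\top\theta_1^*$ must not destructively cancel the normal-direction $\Delta F_2$ inside the integral; handling this cleanly requires applying the divergence theorem to the quadratic form $V_{m+1}^2$ rather than to its signed pieces, so that curvature corrections are controlled by a single reach-dependent error term.

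For the MLE side, Fisher information equals the covariance $\Sigma_{(F_1, F_2)}$, and the relevant entry of its inverse is the reciprocal of the Schur complement $\Var(F_2) - \Sigma_{F_2,F_1}\Sigma_{F_1}^{-1}\Sigma_{F_1,F_2}$, i.e.\ the residual $L^2(p)$-variance of $F_2$ after linearly regressing out $F_1$. Hypothesis~\eqref{eqn:delta_1} asserts this residual variance exceeds $\delta_1 \Var(1_S) = \delta_1 \Pr(S)(1-\Pr(S))$ when $F_2$ is replaced by $1_S$; since $F_2$ and $1_S$ agree off $T_\gamma$ and are both bounded in $[0,1]$, we have $\|F_2 - 1_S\|_{L^2(p)}^2 \le \Pr(X \in T_\gamma) \lesssim \gamma \int_{\partial S} p\, dS$, and a standard Schur-complement perturbation bound upgrades this to the claimed block-diagonal upper bound on $\Gamma_{MLE}$, with the $1/(1-\delta)$ prefactor absorbing exactly this mollification error. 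Evaluating $\langle w, \Gamma_{MLE} w \rangle$ for $w = Me_{m+1}$ with this block structure and bounding the $F_1$-coordinates of $w$ by $\|M\|_{OP}$ produces the $1 + \|\Sigma_{F_1}\|_{OP}$ factor in the denominator of $c'$. Dividing the two estimates yields the stated inefficiency.
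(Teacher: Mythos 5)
Your high-level decomposition (fixing the direction $w = M e_{m+1}$, observing that this collapses $\langle w,\Gamma_{SM}w\rangle$ to $e_{m+1}^\top \Sigma_V e_{m+1}$, handling the MLE side via Fisher-information-as-covariance) matches the paper, and your Schur-complement treatment of the MLE upper bound is a valid, arguably cleaner variant of the paper's direct AM--GM argument in Lemma~\ref{l:msecut}: the residual $L^2(p)$-variance of $F_2$ after regressing out $F_1$ is exactly the relevant Schur complement, and bounding the perturbation from $1_S$ to $F_2$ by $\Pr(X\in T_\gamma)$ is the right move.

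However, the geometric heart of the proof --- lower bounding $\E[V_{m+1}^2]$ --- is where your sketch has a genuine gap. Two specific problems:

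First, your proposal to ``apply the divergence theorem to the quadratic form $V_{m+1}^2$ rather than to its signed pieces'' does not correspond to a workable step. In the paper's argument (Lemma~\ref{lem:sm-bad}), the divergence theorem is applied \emph{inside} $V_{m+1}(X)$: since $\Delta F_2 = \int_S \Delta\psi_\gamma(X - y)\,dy$ and $\Delta\psi_\gamma = \nabla\cdot\nabla\psi_\gamma$, the volume integral over $S \cap B(X,\gamma)$ becomes a surface integral of $\langle\nabla\psi_\gamma, dN\rangle$ over $\partial S \cap B(X,\gamma)$. One then shows this surface integral has a \emph{definite sign} (the integrand $-2\psi(q)(1-\|q\|^2)^{-2}\langle q, n(q)\rangle$ is pointwise negative because $\langle q, n(q)\rangle$ stays close to $\|p\|>0$ for a well-conditioned manifold), and that the drift term $\gamma^{-1}\langle\nabla\psi,(JF_1)^\top\theta_1\rangle$ is subdominant precisely because of the smallness hypothesis on $\gamma$. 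There is no cancellation problem that ``squaring first'' needs to solve, and squaring destroys the divergence structure you need.

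Second, you skip over the crucial \emph{pointwise} estimate: the paper establishes $|V_{m+1}(X)| \gtrsim c^d\gamma^{-2}$ for every $X$ in the annulus $\{0.1\gamma < d(X,\partial S) < 0.9\gamma\}$ (using Proposition 6.3 of Niyogi et al.\ to control $\langle p, q\rangle$ and $\|n(q)-n(p)\|$ along geodesics), and only then takes expectations using Weyl's tube formula (Lemma~\ref{lem:weyl}) and the Lipschitz bound on $\log p_\theta$. Your sketch jumps from ``local flatness'' directly to a coarea computation, and your stated intermediate estimate $\E[V_{m+1}^2] \gtrsim \gamma^{-1}\int_{\partial S}p\,dS$ is actually off by a factor of $\gamma^2$ (the correct scale is $\gamma^{-3}\int_{\partial S}p\,dS$, since $V_{m+1}^2\approx\gamma^{-4}$ on a tube of probability $\approx\gamma\int_{\partial S}p\,dS$); you recover the correct final answer $c^d/(\gamma\int p\,dS)$ only because you implicitly divide by $\|w\|^2\approx\gamma^{-2}(\int p\,dS)^2$, but the inconsistency in the intermediate estimate signals that the geometric step hasn't been worked through. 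To close the gap you should make the pointwise annulus bound explicit and then invoke the tube-volume estimate~\eqref{eqn:tubeprobability} as in the paper.
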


\subsection{Lower bounding the spectral norm of $\Gamma_{SM}$}
\label{ss:gamma}
We recall the new statistic $F_2$, defined in terms of the mollifier $\psi$ introduced in Section~\ref{sec:background}:
\[ F_2(x) := (1_S * \psi_{\gamma})(x) = \int_{\mathbb{R}^d} 1_S(y)\psi_{\gamma}(x - y) dy = \int_S \psi_{\gamma}(x - y) dy \]
and the new sufficient statistic is
$F(x) = (F_1(x),F_2(x))$.
We first show the following lower bound on the largest eigenvalue of $\Gamma_{SM}$, the renormalized limiting covariance of score matching:
\begin{lemma}[Largest eigenvalue of $\Gamma_{SM}$] The largest eigenvalue of $\Gamma_{SM}$ satisfies 
\begin{equation} \lambda_{max}(\Gamma_{SM}) \ge \frac{8^{-d}  \gamma^2}{\Pr[d(X,\partial S) \le \gamma]} \frac{\E_{X \mid d(X,\partial S) \le \gamma} \left( (\nabla F_2)_X^T (J F)_X^T \theta + \Delta F_2 \right)^2}{\sup_{d(x, \partial S) \le \gamma} \|(J F)_x\|_{OP}^2}. 
\label{eq:largestevalgamma}
\end{equation}\label{lem:sm-bad-initial}
\end{lemma}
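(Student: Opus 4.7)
The plan is to lower bound $\lambda_{\max}(\Gamma_{SM})$ via the Rayleigh quotient $u^T \Gamma_{SM} u/\|u\|^2$ for a judiciously chosen vector $u$. Writing $e_2 = (0,1)^T$ (the coordinate corresponding to the new statistic $F_2$), $A := \E[(JF)_X(JF)_X^T]$, and $B := \Sigma_{(JF)_X (JF)_X^T \theta + \Delta F}$, I would take $u = A e_2$. With this choice, the two copies of $A^{-1}$ appearing in Proposition~\ref{p:sme} cancel, yielding the clean expression
\[ u^T \Gamma_{SM} u \;=\; e_2^T B e_2 \;=\; \Var\bigl((\nabla F_2)_X^T (JF)_X^T \theta + \Delta F_2\bigr). \]
By the score-matching consistency identity~\eqref{eqn:consistency}, the random variable $Z := (\nabla F_2)_X^T (JF)_X^T \theta + \Delta F_2$ has mean zero (it is precisely the second coordinate of $(JF)_X(JF)_X^T\theta + \Delta F$), so this variance is simply $\E[Z^2]$.

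Next, I would exploit the support structure of the mollified statistic. Because $\psi_\gamma$ is supported on $B_\gamma(0)$, for any $x$ with $d(x,\partial S) > \gamma$ the ball $B_\gamma(x)$ lies entirely on one side of $\partial S$, so $F_2 = 1_S \ast \psi_\gamma$ is locally constant near $x$; hence both $(\nabla F_2)_x$ and $\Delta F_2(x)$ vanish. Therefore $Z$ is supported on the $\gamma$-tube $\{d(X,\partial S)\le\gamma\}$, and
\[ \E[Z^2] \;=\; \Pr[d(X,\partial S)\le\gamma]\cdot \E\bigl[Z^2 \,\big|\, d(X,\partial S)\le\gamma\bigr], \]
which matches the numerator of the claimed bound.

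It remains to upper bound $\|u\|^2 = \|A e_2\|^2 = \|\E[(JF)_X (\nabla F_2)_X]\|^2$. Jensen's inequality combined with the support observation gives
\[ \|A e_2\|^2 \;\le\; \E\bigl[\|(JF)_X\|_{OP}^2 \|(\nabla F_2)_X\|^2\bigr] \;\le\; \sup_{d(x,\partial S)\le\gamma}\|(JF)_x\|_{OP}^2 \cdot \E\|(\nabla F_2)_X\|^2. \]
The pointwise mollifier bound from Section~\ref{sec:background}, together with the volume estimate $|B_\gamma(x)\cap S| \le B_d \gamma^d$ and $I_d > 8^{-d} B_d$, yields an estimate of the form $\|(\nabla F_2)_x\| \lesssim 8^d/\gamma$, whence $\E\|(\nabla F_2)_X\|^2 \lesssim 8^{2d}\gamma^{-2} \Pr[d(X,\partial S)\le\gamma]$. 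Assembling the pieces, one factor of $\Pr[d(X,\partial S)\le\gamma]$ cancels against the numerator and the remaining constants produce the factor $8^{-d}\gamma^2$ in the statement.

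The step I expect to be the main obstacle is getting the dimensional constant exactly right (i.e., arriving at $8^{-d}$ rather than a larger exponential in $d$). The naive route via $\|\nabla F_2(x)\| \le |B_\gamma(x)\cap S| \cdot \|\nabla\psi_\gamma\|_\infty$ is the cleanest and directly aligns with the $8^{-d}$ appearing in the lemma (via $I_d > 8^{-d} B_d$); a more geometrically natural alternative uses the divergence identity $\nabla F_2(x) = -\int_{\partial S}\psi_\gamma(x-y)\mathbf{n}(y)\,d\sigma(y)$, but this requires a local surface-area estimate on $\partial S \cap B_\gamma(x)$ that would invoke the reach hypothesis $\tau_{\partial S}>0$ --- a calculation that will in any case be needed for the surface-area formula appearing in the main theorem statement.
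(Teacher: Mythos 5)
Your proposal is correct and takes essentially the same route as the paper: choose $u = A e_2$ so the two copies of $A^{-1}$ cancel in the Rayleigh quotient, use the consistency identity to replace $\Var(Z)$ by $\E[Z^2]$, restrict to the $\gamma$-tube by the support of the mollifier, and bound $\|u\|$ via the pointwise estimate $\|\nabla F_2\| \le 8^d/\gamma$ on the tube. One small bookkeeping note: tracking constants carefully, both your derivation and the paper's own proof actually produce a factor $8^{-2d}$ rather than the $8^{-d}$ appearing in the lemma statement (which is almost surely a typo in the paper, harmless since the constant is only used as ``$c^d$'' downstream). Your use of $L^2$ Jensen ($\|\E W\|^2 \le \E\|W\|^2$) to bound $\|u\|^2$ actually gives a strictly sharper estimate than the paper's $L^1$ triangle-inequality route (one fewer factor of $\Pr[d(X,\partial S)\le\gamma]$, so after assembling you get \emph{no} $\Pr$ in the denominator, a stronger conclusion); your sentence ``the remaining constants produce $8^{-d}\gamma^2$'' should therefore read $8^{-2d}\gamma^2$ with the $\Pr$ factor gone entirely — worth tightening, but not a gap.
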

\begin{proof}
We have
\[ \nabla_x F_2(x) = \int_S \nabla_x \psi_{\gamma}(x - y) dy, \qquad \nabla^2_x F_2(x) = \int_S \nabla^2_x \psi_{\gamma}(x - y) dy. \]
Defining
\[ u :=  \E[(J F)_X (J F)_X^T] (0,1) = \E[(J F)_X \nabla_x F_2(x)]\]
we have, by the variational characterization of eigenvalues of symmetric matrices, that
\begin{align} 
\MoveEqLeft \lambda_{max}\left(\Gamma_{SM}\right)
\ge \frac{\langle u, \E[(J F)_X (J F)_X^T]^{-1} \Sigma_{(J F)_X (J F)_X^T \theta  + \Delta F}  \E[(J F)_X (J F)_X^T]^{-1} u \rangle}{\|u\|_2^2}. \label{eqn:rayleigh}
\end{align}
To upper bound the denominator we observe that if $B_d$ is the volume of the unit ball,
\begin{align} 
\|\nabla_x F_2(x)\|_2 
&= \left\|\int_S (\nabla \psi_{\gamma})(x - y) dy\right\|_2 \\
&\le 1(d(x,\partial S) \le \gamma) \gamma^{-d - 1} vol(B(X,\gamma))/I_d \\
&\le 8^d 1(d(x,\partial S) \le \gamma) \gamma^{-1}  \label{eqn:f2-grad-bound}
\end{align}
and so 
\begin{align*} 
\|u\|_2 
&\le 8^d \gamma^{- 1} \Pr[d(X,\partial S) \le \gamma] \sup_{d(x, \partial S) \in [-\gamma,\gamma]} \|(J F)_x\|_{OP} 
\end{align*}
where we used the computation of the derivative of $\psi_{\gamma}$. 
To lower bound the numerator we have
\begin{align*}
\MoveEqLeft\langle u, \E[(J F)_X (J F)_X^T]^{-1} \Sigma_{(J F)_X (J F)_X^T \theta  + \Delta F}  \E[(J F)_X (J F)_X^T]^{-1} u \rangle \\
&= (0,1)^T \Sigma_{(J F)_X (J F)_X^T \theta  + \Delta F} (0,1) \\
&= \E \langle (0,1), (J F)_X (J F)_X^T \theta  + \Delta F \rangle^2 
= \E \left( (\nabla_x F_2)_X^T (J F)_X^T \theta + \Delta F_2 \right)^2.
\end{align*}
The integrand is zero except when $d(X,\partial S) \le \gamma$ so it equals
\[ \Pr[d(X,\partial S) \le \gamma] \E_{X \mid d(X,\partial S) \in [-\gamma,\gamma]} \left( (\nabla F_2)_X^T (J F)_X^T \theta + \Delta F_2 \right)^2  \]
and combining gives the result.
\end{proof}

We now estimate the right hand side of \eqref{eq:largestevalgamma} for small $\gamma$, using differential geometric techniques. The main idea is that as we take $\gamma$ smaller, we end up zooming into the manifold $\partial S$ which locally looks closer and closer to being flat. Differential-geometric quantities describing the manifold appear when we make this approximation rigorous. The most involved term to handle ends up to be calculating the expectation $\E_{X \mid d(X,\partial S) \le \gamma} \left( (\nabla F_2)_X^T (J F)_X^T \theta + \Delta F_2 \right)^2$. To do this, we first argue that the term with the Laplacian dominates as $\gamma \to 0$, then by Stokes theorem, we end up integrating $\langle \nabla \psi, dN \rangle$ over intersections of $S$ with small spheres of radius $\gamma$, where $N$ is a normal to $S$. Such quantities can be calculated by comparing to the ``flat'' manifold case --- i.e. when $N$ does not change. How far away these quantities are (thus how small $\gamma$ needs to be)  depends on the curvature of $S$ (or more precisely, the condition number of the manifold).     
Lemma~\ref{lem:prop61} makes rigorous the statement that well-conditioned manifolds are locally flat and then Lemma~\ref{lem:weyl}, which is part of the proof of Weyl's tube formula \citep{gray2003tubes,weyl1939volume}, lets us rigorously say that the tubular neighborhood (that is, a thickening of the manifold) behaves similarly to the flat case. 


\begin{lemma}\label{lem:sm-bad}
There exists an absolute constant $c > 0$ such that the following is true. 
For any $\gamma > 0$ satisfying
\[ \gamma < \min \left\{ \frac{c^d}{ (1 + \|\theta_1\|) \sup_{x : d(x, \partial S) \le \gamma}\|(J F_1)_x\|_{OP}}, c\frac{\tau_{\partial S}}{d} \right\} \]
for score matching on the extended family with $m + 1$ sufficient statistics and distribution $p_{\theta}$ with $\theta = (\theta_1,0)$  
we have
\[ \lambda_{max}(\Gamma_{SM}) \ge \frac{c^d}{\gamma  \int_{\partial S} p(x) dA} \]
\label{l:smallgamma}
\end{lemma}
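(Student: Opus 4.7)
The plan is to start from the bound provided by Lemma~\ref{lem:sm-bad-initial} and estimate each of its three factors in the regime imposed on $\gamma$. Since $\theta = (\theta_1,0)$, we have $(JF)_X^T\theta = (JF_1)_X^T\theta_1$, so it suffices to show, in order: (i) the tubular volume satisfies $\Pr[d(X,\partial S)\le\gamma] = O(\gamma)\int_{\partial S} p\,dA$; (ii) $\E_{X\mid d(X,\partial S)\le\gamma}\bigl((\nabla F_2)_X^T (JF_1)_X^T\theta_1 + \Delta F_2\bigr)^2 = \Omega(c^d/\gamma^4)$; (iii) $\sup_{d(x,\partial S)\le\gamma}\|(JF)_x\|_{OP}^2 = O(8^{2d}/\gamma^2)$. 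Multiplying these out cancels the powers of $\gamma$ cleanly and leaves exactly $c^d/(\gamma\int_{\partial S} p\,dA)$ after absorbing dimension-independent constants.

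For (i) I use the normal-exponential parameterization $(y,s)\mapsto y+sN(y)$ of the tube around $\partial S$. The assumption $\gamma\le c\tau_{\partial S}/d$ together with Lemma~\ref{lem:weyl} (the Weyl tube bookkeeping) guarantees that this is a diffeomorphism onto the tube whose Jacobian is uniformly within a constant factor of $1$, so $\Pr[d(X,\partial S)\le\gamma]$ is comparable to $2\gamma\int_{\partial S}p\,dA$. For (iii) I use the pointwise bound $\|\nabla F_2(x)\|\le 8^d/\gamma$ from \eqref{eqn:f2-grad-bound}; combined with the hypothesis $\gamma(1+\|\theta_1\|)\sup\|(JF_1)_x\|_{OP}\le c^d$, the $F_2$-block of $(JF)_x$ dominates the $F_1$-block, giving $\sup\|(JF)_x\|_{OP}^2 = O(8^{2d}/\gamma^2)$. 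The same hypothesis lets me discard the cross term in (ii): $|(\nabla F_2)_X^T(JF_1)_X^T\theta_1|\le 8^d\gamma^{-1}\|\theta_1\|\sup\|(JF_1)_x\|_{OP} = O(c^d/\gamma^2)$, which is strictly smaller than $|\Delta F_2|$ on the bulk of the tube by the analysis below, so by the elementary inequality $(a+b)^2\ge b^2/2 - a^2$ the squared integrand is at least $(\Delta F_2)^2/2$ minus a lower-order term.

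The heart of the proof is the lower bound on $\E_{X\mid d(X,\partial S)\le\gamma}(\Delta F_2)^2$. By the divergence theorem applied to the convolution,
\[
\Delta F_2(x) = \int_S \Delta_y\psi_\gamma(x-y)\,dy = \int_{\partial S}\langle \nabla_y \psi_\gamma(x-y), N(y)\rangle\,dA(y),
\]
and only points $y\in\partial S$ within distance $\gamma$ of $x$ contribute. For $x = y_0 + sN(y_0)$ with $|s|\le\gamma$, Lemma~\ref{lem:prop61} implies that the portion of $\partial S$ within the ball of radius $\gamma$ around $x$ is $O(\gamma/\tau_{\partial S})$-close to the affine tangent hyperplane at $y_0$. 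Replacing $\partial S$ locally by that tangent plane reduces the surface integral to a one-dimensional quantity: defining $h_1(t):=\int_{\mathbb{R}^{d-1}}\psi(u',t)\,du'$, the flat-case value is $\Delta F_2(y_0 + sN(y_0)) = \gamma^{-2} h_1'(s/\gamma)$, and the curvature correction is subdominant under $\gamma\le c\tau_{\partial S}/d$. Squaring and integrating in $s$ gives $\int_{-\gamma}^{\gamma}(\Delta F_2(y_0+sN(y_0)))^2\,ds \gtrsim \gamma^{-3}\int_{-1}^{1} h_1'(t)^2\,dt$, and the last integral is bounded below by $c^d$ for an absolute constant $c$, using the explicit form of $\psi$ and the estimate $I_d \ge 8^{-d}B_d$. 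Averaging over $y_0\in\partial S$ weighted by $p(y_0)$ and dividing by $\Pr[d(X,\partial S)\le\gamma]\asymp\gamma\int_{\partial S}p\,dA$ yields $\E_{X\mid\cdot}(\Delta F_2)^2 = \Omega(c^d/\gamma^4)$, completing (ii) and hence the lemma.

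The main obstacle is the differential-geometric step in (ii): quantitatively comparing $\Delta F_2$ on the tube to its flat-boundary counterpart while keeping the error strictly subdominant. This is where the conjunction of Lemma~\ref{lem:weyl} (well-behaved tubular volume element) and Lemma~\ref{lem:prop61} (tangent-plane approximation) is needed, and where the $1/d$ factor in $\gamma\le c\tau_{\partial S}/d$ becomes essential: curvature corrections accumulate across the $d-1$ tangent directions along $\partial S$, and the scaling must be tight enough to absorb this aggregation without destroying the $\gamma^{-2}$ leading term in $\Delta F_2$.
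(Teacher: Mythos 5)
Your proposal follows the same top-level skeleton as the paper's argument: start from Lemma~\ref{lem:sm-bad-initial}, estimate the tubular volume $\Pr[d(X,\partial S)\le\gamma]$ via Lemma~\ref{lem:weyl} and the Lipschitz bound on $\log p_\theta$ (the paper's \eqref{eqn:lipschitz-density} and \eqref{eqn:tubeprobability}), control $\sup_{d(x,\partial S)\le\gamma}\|(JF)_x\|_{OP}^2$ via \eqref{eqn:f2-grad-bound}, absorb the cross term $(\nabla F_2)_X^T(JF_1)_X^T\theta_1$ using the smallness hypothesis on $\gamma$, and concentrate on lower bounding the contribution of $(\Delta F_2)^2$. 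Where you genuinely diverge is the heart of that last step.

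The paper rescales to $q=(x-y)/\gamma$, uses the explicit identity $\nabla\psi(q)=-2q\psi(q)/(1-\|q\|^2)^2$, picks the foot-point $p$ of $(X-\partial S)/\gamma$ so that $n(p)=p/\|p\|$, and then shows via Proposition~6.3 of \cite{niyogi2008finding} that $\langle q, n(q)\rangle = \langle q, p/\|p\|\rangle + \langle q, n(q)-n(p)\rangle$ is uniformly positive over the integration domain once $\tau_{\partial S}/\gamma$ is large. This fixes the sign of the integrand, after which a lower bound follows by restricting to a single ball $B(p,(1-\|p\|)/2)$ on the rescaled manifold, with $\|p\|\in(0.1,0.9)$. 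You instead compare to the flat model: replace $\partial S$ near the foot-point $y_0$ by its tangent hyperplane, compute $\Delta F_2(y_0+sN(y_0))=\pm\gamma^{-2}h_1'(s/\gamma)$ where $h_1$ is the one-dimensional marginal of $\psi$, and argue the curvature correction is $L^2$-subdominant. The $h_1'$ reduction is elegant, and the lower bound $\int_{-1}^1 h_1'(t)^2\,dt\ge 1/2$ follows directly from $\int h_1=1$, compact support, and Cauchy--Schwarz — you do not actually need the estimate $I_d\ge 8^{-d}B_d$ here. The trade-off is that the paper's fixed-sign argument eliminates any worry about cancellation by construction and gives a pointwise bound on an annulus, while your route is conceptually cleaner but concentrates all the difficulty in the subdominance claim.

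That claim is the soft gap: it is asserted, not proved, and it is precisely what the paper's small-ball device is engineered to avoid having to prove. To make it rigorous you would need to (a) parameterize $\partial S\cap B(x,\gamma)$ as a graph over the tangent plane at $y_0$ with displacement $O(\gamma^2/\tau_{\partial S})$ and slope $O(\gamma/\tau_{\partial S})$ (this is where Lemma~\ref{lem:prop61} enters), (b) separately bound the errors arising from the changed surface measure, the changed normal $N(y)$, the changed argument of $\nabla_y\psi_\gamma(x-y)$, and the changed integration domain, and (c) verify the resulting $L^2(ds)$ error is a fixed fraction of the main term $\gamma^{-3}\int_{-1}^1 h_1'^2$. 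The changed-argument term is the delicate one: it involves $\sup\|\nabla^2\psi_\gamma\|=O(\gamma^{-d-2}/I_d)$ integrated over a $(d-1)$-dimensional patch of volume $O(\gamma^{d-1})$, and you must check the resulting $O(\gamma^{-1}/\tau_{\partial S})$ contribution is dominated by the main term $\gamma^{-2}$ under $\gamma\lesssim \tau_{\partial S}/d$, and also that the region $|s|\approx\gamma$, where $h_1'$ vanishes and the tangent-plane disk degenerates, does not spoil the $L^2$ comparison. None of this is implausible, but it is real work that your sketch passes over, and filling it in would likely end up about as involved as the paper's sign-plus-small-ball argument.
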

\begin{proof}
In the denominator, we can observe by \eqref{eqn:f2-grad-bound} that
\[ \|(J F)_x\|_{OP}^2 \le \|J F_1\|_{OP}^2 + \|\nabla F_2\|_2^2 \le \|J F_1\|_{OP}^2 + \gamma^{-2} B_d^2 \le 2 \gamma^{-2} B_d^2 \]
where the last inequality holds assuming $\gamma$ is sufficiently small that $\|J F_1\|_{OP}^2 \le \gamma^{-2} B_d^2$.

In the numerator we can observe
\begin{align*}
\MoveEqLeft (\nabla_x F_2)_X^T (J F)_X^T \theta + \Delta F_2 \\
&=\int_S \langle (\nabla \psi_{\gamma})((X - y)), (J F)_X^T \theta \rangle +  (\Delta \psi_{\gamma})(X - y)  dy \\
&= \int_{S \cap B(X,\gamma)}  \langle (\nabla \psi_{\gamma})((X - y)), (J F)_X^T \theta \rangle +  (\Delta \psi_{\gamma})(X - y)  dy \\
&= \gamma^d \int_{B(0,1) \cap (X - S)/\gamma}  \langle (\nabla \psi_{\gamma})(\gamma u), (J F)_X^T \theta \rangle +  (\Delta \psi_{\gamma})(\gamma u)  du \\
&= \int_{B(0,1) \cap (X - S)/\gamma} \gamma^{-1} \langle \nabla \psi(u), (J F)_X^T \theta \rangle + \gamma^{-2} (\Delta \psi)(u) du \\
&= \int_{B(0,1) \cap (X - S)/\gamma} \gamma^{-1} \langle \nabla \psi(u), (J F)_X^T \theta \rangle  + \int_{\partial (B(0,1) \cap (X - S)/\gamma)}\gamma^{-2} \langle \nabla \psi, dN \rangle \\
&= \int_{B(0,1) \cap (X - S)/\gamma} \gamma^{-1} \langle \nabla \psi(u), (J F)_X^T \theta \rangle + \int_{B(0,1) \cap (X - \partial S)/\gamma} \gamma^{-2} \langle \nabla \psi, dN \rangle
\end{align*}
where the second-to-last expression is a surface integral which we arrived at by applying the divergence theorem, using that the Laplacian is the divergence of the gradient, and in the last step we used that $\psi$ and all of its derivatives vanish on the boundary of the unit sphere.  

Using that $\theta = (\theta_1,0)$ we have
\begin{align}
\left|\int_{B(0,1) \cap (X - S)/\gamma} \gamma^{-1} \langle \nabla \psi(u), (J F)_X^T \theta \rangle\right| &\le \gamma^{-1} \int_{B(0,1)} \|\nabla \psi(u)\| \|(J F_1)_X\|_{OP} \|\theta\| \\
&\le 8^d \gamma^{-1} \|(J F_1)_X\|_{OP} \|\theta\|. \label{eqn:bad-term-bound}
\end{align}



Let $p$ be the point in $\partial (X - S)/\gamma$ which is closest in Euclidean distance to the origin. Let $n(q)$ denote the unit normal vector at point $q$ oriented outwards (Gauss map). Note that by first-order optimality conditions for $p$, we must have $n(p) = p/\|p\|$.
Since $dN = n(q) dA$ where $dA$ is the surface area form, we have
\begin{align*} 
\int_{B(0,1) \cap (X - \partial S)/\gamma}\langle \nabla \psi, dN \rangle 
&= \int_{q \in B(0,1) \cap (X - \partial S)/\gamma} \langle \nabla \psi(q), n(p) + (n(q) - n(p)) \rangle dA \\
&= \int_{q \in B(0,1) \cap (X - \partial S)/\gamma} \frac{-2 \psi(q)}{(1 - \|q\|^2)^2} \langle q, \frac{p}{\|p\|} + (n(q) - n(p)) \rangle dA.
\end{align*}
We now show how to lower bounding the integral by showing $\langle q, \frac{p}{\|p\|} + (n(q) - n(p)) \rangle$ is lower bounded.

Let $c(t)$ be a minimal unit-speed geodesic on $\mathcal{M} := (X - \partial S)/\gamma$ from $p$ to $q$. Note that $\tau_{\mathcal M} = \tau_{\partial S}/\gamma$ so if $\gamma$ is very small, $\mathcal M$ is very well-conditioned.
By the fundamental theorem of calculus, we have that
\[ \langle p, q \rangle = \langle p, p \rangle + \int_0^1 \langle p, c'(t) \rangle dt = \langle p, p \rangle + \int_0^1 \langle \Proj_{T_{c(t)}} p, c'(t) \rangle dt \]
where $T_{c(t)}$ is the tangent space to $\mathcal{M}$ at the point $c(t)$. Hence by the Cauchy-Schwarz inequality we have
\[ |\langle p, q \rangle \ge \langle p, p \rangle - \int_0^1 \|\Proj_{T_{c(t)}} p\| \|c'(t)\| dt. \]
By Proposition 6.3 of \cite{niyogi2008finding}, we have that for $\phi_t$ the angle between the tangent spaces $T_p$ and $T_{c(t)}$ that 
\begin{equation} \cos \phi_t \ge 1 - \frac{1}{\tau_{\mathcal{M}}} d_{\mathcal{M}}(p, c(t)) = 1 - \frac{t}{\tau_{\mathcal{M}}} d_{\mathcal{M}}(p, q). 
\end{equation}
Since $\sin^2 \phi_t + \cos^2 \phi_t = 1$ and $p$ is orthogonal to the tangent space at $T_p$, it follows that
\begin{align*} \|\Proj_{T_{c(t)}} p\| \le \|p\| |\sin \phi_t| = \|p\| \sqrt{1 - \cos^2 \phi_t} 
&\le \|p\| \sqrt{(2t/\tau_{\mathcal{M}})d_{\mathcal{M}}(p,q) + (t/\tau_{\mathcal{M}})^2d_{\mathcal{M}}(p,q)^2} \\
&\le \|p\|\sqrt{(2t/\tau_{\mathcal{M}})d_{\mathcal{M}}(p,q)} + \|p\|(t/\tau_{\mathcal{M}}) d_{\mathcal{M}}(p,q) 
\end{align*}
hence
\[ \int_0^1 \|\Proj_{T_{c(t)}} p\| \|c'(t)\| dt \le (2/3) \|p\| \sqrt{(2/\tau_{\mathcal M})} d_{\mathcal M}(p,q)^{3/2} + \|p\|(1/2\tau_{\mathcal M}) d_{\mathcal M}(p,q)^2. \]
Since $\|p - q\| \le 2$, provided that $\tau_{\mathcal M} > 16$ we have by Proposition 6.3 of \cite{niyogi2008finding} that 
\[ d_{\mathcal{M}}(p,q) \le \tau_{\mathcal M} (1 - \sqrt{1 - 2\|p - q\|/\tau_{\mathcal M}}) \le 4. \]
Combining, we have for some absolute constant $C > 0$ that
\[ \langle p, q \rangle \ge \langle p, p \rangle (1 - C\sqrt{1/\tau_{\mathcal M}} - C/\tau_{\mathcal M}). \]
Also, we can compute
\[ \|n(q) - n(p)\| = \sqrt{2 - 2 \cos \phi_1} \le \sqrt{\frac{2}{\tau_{\mathcal M}} d_{\mathcal M}(p,q)} \le \sqrt{\frac{8}{\tau_{\mathcal M}}}  \]
so
\[ |\langle q, n(q) - n(p) \rangle \rangle| \le \|q\| \|n(q) - n(p)\| \le \sqrt{\frac{8}{\tau_{\mathcal M}}}. \]
Hence provided $\tau_{\mathcal{M}} > C'$ for some absolute constant $C' > 0$ and $\|p\| > 0.1$, we have
\begin{align*} 
\MoveEqLeft \left|\int_{q \in B(0,1) \cap (X - \partial S)/\gamma} \frac{-2 \psi(q)}{(1 - \|q\|^2)^2} \langle q, \frac{p}{\|p\|} + (n(q) - n(p)) \rangle dA\right| \\
&\ge \int_{q \in B(0,1) \cap (X - \partial S)/\gamma} \frac{\psi(q)}{(1 - \|q\|^2)^2} \|p\| dA 
\end{align*}
using that the integrand on the left is always negative. We can further lower bound the integral by considering the intersection of $\mathcal M$ with a ball of radius $r := \frac{1 - \|p\|}{2}$ centered at $p$. We have
\begin{align*} \int_{q \in B(0,1) \cap (X - \partial S)/\gamma} \frac{\psi(q)}{(1 - \|q\|^2)^2} \|p\| dA 
&\ge \int_{q \in B(p,r) \cap \mathcal{M}} \frac{\psi(q)}{(1 - \|q\|^2)^2} \|p\| dA\\
&\ge \|p\| (\cos \theta)^k vol(B^k(p,r)) \inf_{q \in B(p,r) \cap \mathcal{M}} \frac{\psi(q)}{(1 - \|q\|^2)^2} \\
&= \|p\| (\cos \theta)^k r^k \inf_{q \in B(p,r) \cap \mathcal{M}} \frac{B_k \psi(q)}{(1 - \|q\|^2)^2}
\end{align*}
where $k = d - 1$ is the dimension of $\mathcal M$ and $\theta = \arcsin(r/2\tau)$ and we applied Lemma 5.3 of \cite{niyogi2008finding}. If $\|p\| \in (0.1,0.9)$ this is lower bounded by a constant $C_k > 0$ which is at worst exponentially small in $k$.

Hence recalling \eqref{eqn:bad-term-bound} 
we have for any $X$ with $d(X, \partial S) \in (0.1 \gamma, 0.9\gamma)$ and for $\gamma$ sufficiently small so that $\gamma 8^{k + 1} \|(J F_1)_X\|_{OP} \|\theta\| < C_k/4$ for any such $X$, we have
that
\[ \left((\nabla F_2)_X^T (J F)_X^T \theta + \Delta F_2 \right)^2 \ge \gamma^{-4} C'_k \]
where $C'_k > 0$ is a constant that is at worst exponentially small in $k$. Therefore
\[ \E_{X \mid d(X,\partial S) \in [-\gamma,\gamma]} \left( (\nabla F_2)_X^T (J F)_X^T \theta + \Delta F_2 \right)^2 \ge \gamma^{-4} C'_k \frac{\Pr(d(X,\partial S) \in (0.1 \gamma, 0.9 \gamma))}{\Pr(d(X,\partial S) \le \gamma)}. \]
Combining these estimates, we have for some constant $C''_k > 0$ which is at worst exponentially small in $k$ and $\gamma$ sufficiently small (to satisfy the conditions above, including the requirement $\tau_{\mathcal M} > C''$) that
\begin{equation}\label{eqn:almostthere}
\lambda_{max}(\Gamma_{SM}) \ge \frac{C''_k \Pr(d(X, \partial S) \in (0.1\gamma,0.9\gamma))}{\Pr(d(X, \partial S) \le \gamma)^2}. 
\end{equation}
Observe that for any points $x,y$ and $\theta = (\theta_1,0)$ we have by the mean value theorem that
\begin{equation}\label{eqn:lipschitz-density}
p_{\theta}(x)/p_{\theta}(y) = \exp\left(\langle \theta_1, F_1(x) - F_1(y) \right) \le  \exp\left(\|\theta\| \sup_{\theta \in [0,1]} \|(J F_1)_{\theta x + (1 - \theta) y}\|_{OP} \|x - y\|\right). 
\end{equation}
so the log of the density is Lipschitz. This basically reduces estimating $\Pr(d(X, \partial S) \le \gamma)$ for small $\gamma$ to understanding the volume of tubes around $\partial S$, which can be done using the same ideas as the proof of Weyl's tube formula \citep{weyl1939volume,gray2003tubes}. 

\begin{lemma}[Proposition 6.1 of \cite{niyogi2008finding}]\label{lem:prop61}
Let $\mathcal{M}$ be a smooth and compact submanifold of dimension $q$ in $\mathbb{R}^d$. At a point $p \in \mathcal M$ let $B : T_p \times T_p \to T_p^{\perp}$ denote the second fundamental form, and for a unit normal vector $u$, let $L_{u}$ be the linear operator defined so that $\langle u, B(v,w) \rangle = \langle v, L_u w \rangle$ (this matches the notation from \cite{niyogi2008finding}). Then
\[ \|L_u\|_{OP} \le \frac{1}{\tau_{\mathcal M}}. \]
\end{lemma}
\begin{lemma}[Lemma 3.14 of \cite{gray2003tubes}]\label{lem:weyl}
Let $\mathcal{M}$ be a smooth and compact submanifold of dimension $q$ in $\mathbb{R}^d$. Let $\exp_p$ denote the exponential map from the normal bundle at $p$.
The Jacobian determinant of the map
\[ \mathcal{M} \times (-1/\tau_{\mathcal M},1/\tau_{\mathcal M}) \times S_{d - q - 1} \to \mathbb{R}^d, \quad (p,t,u) \mapsto \exp_p(tu) \]
is $\det(I - t L_{u})$.
\end{lemma}

We can compute
\[ \Pr(d(X, \partial S) \le r) = \int_{x : d(x,\partial S) \le r} p_{\theta}(x) dx = \int_{p \in \partial S} \int_{0}^r \int_{S_{0}} \det(I - t L_u) p_{\theta}(\exp_p(tu))\, du\, dt\, dA \]
where in the second equality we performed a change of variables and obtained the result by applying Lemma~\ref{lem:weyl}. 
We have
\[ \det(I - t L_u) \in [(1 - t/\tau)^k, (1 + t/\tau)^k] \]
and so applying \eqref{eqn:lipschitz-density} we find that if we define $c := \gamma \|\theta\| \sup_{x : d(x,\partial S) \le \gamma} \|(J F_1)_x\|_{OP}$ which can be made arbitrarily small by taking $\gamma$ sufficiently small, then
\begin{equation}\label{eqn:tubeprobability}
\Pr(d(X, \partial S) \le r) \in [2e^{-c} \gamma (1 - \gamma/\tau)^k V, 2e^{c} \gamma (1 + \gamma/\tau)^k V] 
\end{equation}
where
\[ V := \int_{\partial S} p(x) dA. \]
Note that $(1 + \gamma/\tau)^k \le e^{k\gamma/\tau}$ and $(1 - \gamma/\tau)^k \ge \exp(-O(\gamma k/\tau))$ provided that $\gamma/\tau = O(1/k)$. 
Since $\Pr(d(X,\partial S) \in (0.1 \gamma, 0.9 \gamma)) = \Pr(d(X,\partial S) < 0.9\gamma) - \Pr(d(X,\partial S) \le 0.1\gamma)$ and the distribution we consider has a density, by combining \eqref{eqn:tubeprobability} and \eqref{eqn:almostthere} we find that for $\gamma$ sufficiently small 
we have
\[ \lambda_{max}(\Gamma_{SM}) \ge C'''_k \frac{1}{\gamma  \int_{\partial S} p(x) dA} \]
where $C'''_k$ is at worst exponentially small in $k$.
\end{proof}



\subsection{Relating Fisher matrices of augmented and original sufficient statistics}
\label{ss:fisher}
Next, we show that adding the extra sufficient statistic $F_2$ has a comparatively minor effect on the efficiency of MLE. Intuitively, to be able to estimate the coefficient of $F_2$ correctly we just need: (1) the variance of $F_2$ is large, so that a nonzero coefficient of $F_2$ can be observed from samples (e.g.\ when $F_2$ encodes the cut $S$, the coefficient can be estimated by looking at the relative weight between $S$ and $S^C$), and (2) there is no redundancy in the sufficient statistics, e.g.\  $F_2 \ne F_1$ since otherwise different coefficients can encode the same distribution.
The proof of this uses that the inverse covariance of the MLE has a simple explicit form (the Fisher information, which is the covariance matrix of $(F_1,F_2)$), and conditions (1) and (2) naturally appear when we use this fact.

Quantitatively, we show: 

\begin{lemma}
Suppose that $F = (F_1,F_2)$ is a random vector valued in $\mathbb{R}^{m + 1}$ with $F_1$ valued in $\mathbb{R}^m$ and $F_2$ valued in $\mathbb{R}$. Suppose that $F_2$ is not in the affine of linear combinations of the coordinates of $F_1$, i.e. for all $w_1 \in \mathbb{R}_m$ there exists $\delta > 0$ such that
\[ \Cov(\langle w_1,F_1 \rangle, F_2)^2 \le \delta \Var(\langle w_1, F_1 \rangle) \Var(F_2). \]
Then we have the lower bound
\[ \Sigma_F \succeq (1 - \delta) \begin{bmatrix}
\Sigma_{F_1} & 0 \\
0 & \Var(F_2)
\end{bmatrix} \]
in the standard PSD (positive semidefinite) order.
\label{l:msecut}
\end{lemma}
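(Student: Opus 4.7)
The plan is to reduce the block PSD inequality to a scalar variance inequality by testing against arbitrary vectors, and then to control the resulting cross-covariance term using the hypothesis together with one elementary inequality.

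Concretely, I would start by observing that for any $(w_1,w_2) \in \mathbb{R}^m \times \mathbb{R}$, bilinearity of covariance gives
\begin{equation*}
\Var(\langle w_1, F_1\rangle + w_2 F_2) = \Var(\langle w_1, F_1\rangle) + 2 w_2 \Cov(\langle w_1, F_1\rangle, F_2) + w_2^2 \Var(F_2),
\end{equation*}
so the claimed inequality $\Sigma_F \succeq (1-\delta)\diag(\Sigma_{F_1},\Var(F_2))$ is equivalent to the scalar inequality
\begin{equation*}
2 w_2 \Cov(\langle w_1, F_1\rangle, F_2) \ge -\delta\bigl(\Var(\langle w_1, F_1\rangle) + w_2^2 \Var(F_2)\bigr)
\end{equation*}
holding for every $(w_1, w_2)$. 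The problem therefore reduces to controlling a single cross term.

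For this step I would feed the hypothesis $|\Cov(\langle w_1, F_1\rangle, F_2)| \le \sqrt{\delta\, \Var(\langle w_1, F_1\rangle)\Var(F_2)}$ directly into the cross term and then apply the AM--GM inequality $2ab \le a^2 + b^2$ to the product $\sqrt{\Var(\langle w_1, F_1\rangle)} \cdot |w_2|\sqrt{\Var(F_2)}$, which absorbs the cross term into the two diagonal contributions with the correct constant and closes the bound.

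An equivalent repackaging proceeds via the Schur complement: writing $\Sigma_F - (1-\delta)\diag(\Sigma_{F_1},\Var(F_2))$ as a block matrix with diagonal blocks $\delta\Sigma_{F_1}$ and $\delta\Var(F_2)$ and off-diagonal block $v := \Cov(F_1, F_2)$, the Schur criterion reduces PSD-ness to a scalar inequality on $v^{\top}\Sigma_{F_1}^{-1} v$, and specializing the hypothesis to the test direction $w_1 = \Sigma_{F_1}^{-1} v$ yields exactly this inequality. I do not anticipate any substantive obstacle; the only bookkeeping point is possible degeneracy of $\Sigma_{F_1}$, which the direct scalar route sidesteps entirely and which the Schur route handles by replacing $\Sigma_{F_1}^{-1}$ with a Moore--Penrose pseudo-inverse on the image of $\Sigma_{F_1}$.
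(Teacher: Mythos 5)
Your main route mirrors the paper's proof exactly: test $\Sigma_F$ against an arbitrary $(w_1,w_2)$, expand $\langle w,\Sigma_F w\rangle$ into the two diagonal variance terms plus a cross-covariance term, and absorb the cross term via the hypothesis and AM--GM. The Schur-complement reformulation is an equivalent packaging of the same calculation, and your remark about handling a degenerate $\Sigma_{F_1}$ (direct route, or pseudo-inverse on $\mathrm{im}(\Sigma_{F_1})$) is the right way to deal with that corner case.

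However, the step where you assert that AM--GM ``absorbs the cross term into the two diagonal contributions with the correct constant'' does not actually close the stated bound, and this is worth catching. The hypothesis gives $|\Cov(\langle w_1,F_1\rangle,F_2)| \le \sqrt{\delta}\,\sqrt{\Var(\langle w_1,F_1\rangle)\Var(F_2)}$, and applying $2ab\le a^2+b^2$ with $a=\sqrt{\Var(\langle w_1,F_1\rangle)}$, $b=|w_2|\sqrt{\Var(F_2)}$ yields
\[ |2w_2\,\Cov(\langle w_1,F_1\rangle,F_2)| \le \sqrt{\delta}\,\bigl(\Var(\langle w_1,F_1\rangle)+w_2^2\Var(F_2)\bigr), \]
which proves $\Sigma_F\succeq(1-\sqrt{\delta})\,\diag(\Sigma_{F_1},\Var(F_2))$, not $(1-\delta)\,\diag(\cdots)$. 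The $2\times2$ example $\Var(F_1)=\Var(F_2)=1$, $\Cov(F_1,F_2)=\sqrt{\delta}$ saturates the hypothesis and has smallest eigenvalue exactly $1-\sqrt{\delta}$, so $1-\sqrt{\delta}$ is sharp and the lemma as literally stated (with $1-\delta$) is false. Your Schur-complement route surfaces the same discrepancy: specializing $w_1=\Sigma_{F_1}^{-1}v$ in the hypothesis gives $v^\top\Sigma_{F_1}^{-1}v\le\delta\,\Var(F_2)$, whereas the Schur criterion for $\Sigma_F-(1-\delta)\,\diag(\cdots)\succeq0$ requires the stronger $v^\top\Sigma_{F_1}^{-1}v\le\delta^2\Var(F_2)$. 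In fairness, the paper's own one-line proof makes the identical leap, and the downstream application (where the lemma is invoked with $\delta$ close to $1$) only loses a factor of two since $1-\sqrt{\delta}\ge\tfrac12(1-\delta)$; but a careful write-up should either strengthen the hypothesis to $\Cov(\langle w_1,F_1\rangle,F_2)^2\le\delta^2\Var(\langle w_1,F_1\rangle)\Var(F_2)$ or weaken the conclusion to $\Sigma_F\succeq(1-\sqrt{\delta})\,\diag(\Sigma_{F_1},\Var(F_2))$.
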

\begin{proof}
To show a lower bound on 
\[ \Sigma_F = \begin{bmatrix} \Sigma_{F_1} & \Sigma_{F_1 F_2} \\ \Sigma_{F_2 F_1} & \Sigma_{F_2} \end{bmatrix} \]
observe that
\[ \langle w, \Sigma_F w \rangle = \langle w_{1}, \Sigma_{F_1} w_{1} \rangle + 2 w_2 \langle w_{1}, \Sigma_{F_1 F_2} \rangle + w_2^2 \Sigma_{F_2} \]
so 
under the assumption we have by the AM-GM inequality that
\[ \langle w, \Sigma_F w \rangle \ge (1 - \delta)[\langle w_{1}, \Sigma_{F_1} w_{1} \rangle + w_2^2 \Sigma_{F_2}] \]
and hence $\Sigma_F$ is lower bounded in the PSD order as long as $\Sigma_{F_1}$ is and $\Sigma_{F_2}$ is. 
\end{proof}
The lower bound on $\Var(F_2)$ is guaranteed when $F_2$ corresponds to a cut with large mass on both sides since the variance of $F_2$ is lower bounded by its variance conditioned on being away from the boundary of $S$.
\subsection{Putting together}
Finally, given Lemma \ref{l:smallgamma} and \ref{l:msecut}, we can complete the proof of Theorem~\ref{thm:inefficiency}.
\begin{proof}[Proof of Theorem~\ref{thm:inefficiency}]
Define $\rho = \Pr(X \in S)$ for the purpose of this proof.
Observe that by \eqref{eqn:tubeprobability}
\[ \Var(1_S - F_2) \le \E(1_S - F_2)^2 \le \Pr(d(X,\partial S) \le \gamma) \le 4 \gamma V \]
where $V = \int_{\partial S} p(x) dA$. 
We have that
\[ \Cov(\langle w_1, F_1 \rangle, F_2) = \Cov(\langle w_1, F_1 \rangle, 1_S) + \Cov(\langle w_1, F_1 \rangle, F_2 - 1_S) \]
so if $w_1$ is arbitrary and normalized so that $\Var(\langle w_1, F_1 \rangle) = 1$ then we have
\begin{align*} 
|\Cov(\langle w_1, F_1 \rangle, F_2)| 
&\le \sqrt{1 - \delta_1} \sqrt{\Var(1_S)} + \sqrt{\Var(F_2 - 1_S)} \\
&\le \left(\sqrt{1 - \delta_1} + 2\sqrt{\frac{\gamma V}{\rho(1 - \rho)}}\right) \sqrt{\Var(1_S)}.
\end{align*}
Therefore provided $\delta > 0$ we have
\[ \Sigma_F^{-1} \preceq \frac{1}{\delta} \begin{bmatrix} \Sigma_{F_1}^{-1} & 0 \\ 0 & \Var(F_2)^{-1} \end{bmatrix}. \]
On the other hand, by Lemma~\ref{lem:sm-bad} we have
\[ \lambda_{max}(\Gamma_{SM}) \ge \frac{c^d}{\gamma V}. \]
Hence there exists some $w$ such that
\[ \frac{\sigma^2_{SM}(w)}{\sigma^2_{MLE}(w)} \ge \frac{\delta c^d}{\max \{ \|\Sigma_{F_1}^{-1}\|_{OP}, 1/\rho(1 - \rho) \}} \frac{1}{\gamma V} \ge  \frac{\delta c^d}{1 + \rho(1 - \rho)\|\Sigma_{F_1}^{-1}\|_{OP}} \frac{\rho(1 - \rho)}{\gamma V}. \]
Using that $\min\{\rho, 1 - \rho\}/2 \le \rho(1 - \rho) \le 1/4$ and dividing $c$ by two gives the result. 
\end{proof}

\end{document}